\def\paperpublic{}
\def\input@path{{arxiv}}
\newtheorem{Theorem}{Theorem}[section]
\newtheorem{Assumption}[Theorem]{Assumption}
\newtheorem{Lemma}[Theorem]{Lemma}
\DeclareMathOperator{\Tr}{Tr}
\DeclareMathOperator{\E}{\mathbb{E}}
\DeclareMathOperator{\N}{\mathbb{N}}
\DeclareMathOperator{\R}{\mathbb{R}}
\DeclareMathOperator*{\argmin}{arg\,min}
\newcommand*{\rom}[1]{\expandafter\@slowromancap\romannumeral #1@}
\newcommand{\vertiii}[1]{{\left\vert\kern-0.25ex\left\vert\kern-0.25ex\left\vert #1 
    \right\vert\kern-0.25ex\right\vert\kern-0.25ex\right\vert}}
\renewcommand{\@makefntext}[1]{\noindent\raggedright\@makefnmark\,#1}
\newif\ifpaperanonymous
\newcommand{\paperappendixtitle}{\ifpaperanonymous Online Companion\else Appendix\fi}
\begin{document}
\title{Neural Network-Based Score Estimation in Diffusion Models: Optimization and Generalization}

\ifpaperanonymous
  \author{(Anonymous for peer review)}
\else
  \author{Yinbin Han
    \thanks{Department of Management Science and Engineering, Stanford University. \textbf{Email:} \{yinbinha,renyuanxu\}@stanford.edu}
    \and
    Meisam Razaviyayn
    \thanks{Daniel J. Epstein Department of Industrial and Systems Engineering, University of Southern California. \textbf{Email:} razaviya@usc.edu}
    \and
    Renyuan Xu
    \footnotemark[1]
  }
\fi
\ifpaperanonymous
  \date{\today}
\else
  \date{First version:~Jan 2024; this version:~Apr 2026}
\fi
\maketitle
\allowdisplaybreaks

\begin{abstract} 
Diffusion models have emerged as a dominant paradigm in generative AI, rivaling GANs in producing high-fidelity and robust samples. A core component of these models is learning the score function of perturbed data distribution via denoising score matching.  While recent theoretical works have established strong statistical guarantees for diffusion models, they predominantly rely on algorithm-agnostic assumptions, presuming access to a theoretical oracle that perfectly minimizes the empirical risk. In practice, however, score functions are parameterized by highly non-convex neural networks and trained via gradient descent (GD). It remains a major open question whether practical gradient-based algorithms can navigate the optimization landscape of score matching to achieve provable accuracy.  As a first step toward answering this question, this paper establishes a mathematical framework for analyzing score estimation using neural networks trained by GD. 
     
     Our analysis covers both the optimization and the generalization aspects of the learning procedure. 
    In particular, we propose a novel parametric formulation that reduces denoising score matching to a regression problem with inherently noisy labels. 
    Unlike standard supervised learning, the score-matching problem introduces unique theoretical challenges, including unbounded input, vector-valued output, and an additional time variable, preventing existing techniques from being applied directly. 
    We address these challenges by showing that, with proper designs, the evolution of GD-trained neural networks can be accurately approximated by a sequence of localized kernel regression problems. Our analysis is grounded in a novel parametric form of the neural network and an innovative connection between score matching and regression analysis, which facilitate the application of advanced statistical and optimization techniques. Furthermore, since prolonged training on noisy labels causes catastrophic overfitting, we derive a novel extension of early-stopping rules for unbounded domains. This, in turn, allows us to establish the first minimax-optimal generalization error (sample complexity) bounds for GD-trained neural networks in diffusion models. Finally, we validate our theory-inspired optimization framework on a real-world Credit Default dataset, demonstrating that our principled approach achieves performance comparable to heavily tuned heuristic training schemes in generating high-fidelity financial tabular data. 
\end{abstract}
\vspace{-10pt}

\section{Introduction}
Diffusion models excel in diverse generative tasks, spanning image, video, and audio generation \citep{song2019generative,dathathri2019plug,song2020score,ho2020denoising}, often outperforming their contemporaries, including GANs, VAEs, normalizing flows, and energy-based models \citep{goodfellow2014generative,kingma2013auto,rezende2015variational,zhao2016energy}.

A typical diffusion model consists of two diffusion processes \citep{song2020score,sohl2015deep,ho2020denoising}:  one moving forward in time and the other moving backward. 
The forward process transforms a given data sample into white noise in the limit by gradually injecting noise through the diffusion term, while the backward process transforms noise to a sample from the data distribution by sequentially removing the added noise.  The implementation of the backward process depends on the  score function, defined as the gradient of the logarithmic density, at each timestamp of the forward process. In practice, however, the score function is unknown and one can only access the true data distribution via finitely many samples. To ensure the fidelity of the backward process in generating realistic samples, it is essential to develop efficient methods to estimate the score function using samples. This estimation is typically achieved through a process known as \textit{score matching}, employing powerful nonlinear functional approximations such as neural networks.

Despite the empirical success, it is theoretically less clear whether a gradient-based algorithm can train a neural network to learn the score function. Existing theoretical work \citep{de2021diffusion, chen2022improved,chen2022sampling,lee2023convergence,chen2023score,oko2023diffusion,mei2023deep,li2023towards,chen2023probability,shah2023learning,tang2024contractive,li2024generalization,liang2025low,gao2025wasserstein,huang2025convergence,zhang2025generalization} 
 predominantly focus on \textit{algorithm-agnostic} properties, such as score approximation bounds and distribution recovery guarantees. 
 A critical limitation of these works is that they implicitly assume the availability of an optimization oracle that {\it perfectly} solves the score-matching objective within a given function class. In reality, score functions are parameterized by overparameterized, highly non-convex neural networks and optimized using local search methods such as gradient descent (GD), and showing non-asymptotic convergence to the oracle solution is highly non-trivial. Furthermore, the Denoising Score Matching (DSM) objective utilizes inherently noisy targets. Standard deep learning theory cautions that overparameterized networks, when trained to convergence on noisy labels, can catastrophically overfit the noise. Consequently, it is theoretically ambiguous whether the actual optimization trajectory of GD will converge to a statistically optimal score estimator or simply memorize the noise. This work bridges this fundamental gap between statistical theory and practical deep learning optimization by providing the first end-to-end, algorithm-dependent analysis of score matching.

Our contributions are summarized as follows.

\paragraph{Our Work and Contributions.}
This work investigates the training of a two-layer fully connected neural network via gradient descent (GD) to learn the score function in diffusion models, and establishes a minimax generalization error bound.

We propose a neural network-based parametric form for the score estimator based on the score decomposition (see Lemma \ref{lemma:score-decomp}). This novel design transforms the score-matching objective into a regression with noisy labels. After a truncation argument, we further decompose the regression objective into four terms (see \eqref{eq:decomposition} for the decomposition): 1) score approximation, 2) coupling error, 3) label mismatch, and 4) early stopping. 

To handle score {\it approximation error}, we establish a universal approximation theorem with respect to the score function using the reproducing kernel Hilbert space (RKHS), induced by the neural tangent kernel (NTK) (see Theorem \ref{thm:approx-score-L2}). Compared with the existing results in the literature, we handle a time variable in addition to space variables.  {Next}, we leverage recent NTK-based analysis of neural networks to show the approximate equivalence between neural network training and kernel regression (see Theorem~\ref{thm:coupling}), with the gap  referred to as {\it coupling error}. Furthermore, we propose a virtual dataset to address the issue of {\it label mismatch} caused by the approximation step. In the presence of multi-output labels, a vector-valued localized Rademacher complexity bound is utilized to control the prediction error of {two kernel regressions}{, the original one and the one with the shifted target} (see Theorem \ref{thm:mismatch}).  Finally, we employ an {\it early-stopping} rule for the kernel regression to minimize the score-matching objective and {provide a minimax optimal generalization result} (see Theorem \ref{thm:score-estimation}).    The construction of the early-stopping rule is highly nontrivial and the main technical challenges are twofold. First,  we reduce the vector-valued learning problem to an equivalent scalar-valued kernel regression problem; see \eqref{eq:update-gamma-i}--\eqref{eq:update-uK-i}. Second,  existing early-stopping results in prior work typically assume bounded unit input domains. We overcome this limitation by extending a known stopping rule to handle unbounded input domains; see Theorem \ref{thm:early-stop}.

From an application standpoint, we test our theoretical framework on a high-stakes real-world problem:~generating financial tabular data from a Credit Default dataset. Although diffusion models are most often demonstrated on continuous data such as images, there is strong demand for high-fidelity generative models for the structured tabular datasets common in quantitative finance. Because such domains are often subject to privacy and regulatory constraints, realistic synthetic data generation is especially valuable. At the same time, reliability in this setting requires mathematically principled estimators to mitigate distribution shift and mode collapse. Our experiments show that the proposed theoretically grounded GD method, even with a simple neural network architecture, achieves fidelity comparable to much more sophisticated heuristic-based training schemes used in the modern diffusion literature.

To the best of our knowledge, this is the first work to establish end-to-end sample complexity bounds for GD-trained neural networks in the context of score matching.  By contrast, most prior statistical analyses assume that the empirical minimizer within a given neural network class can be directly obtained, without proving that it can in fact be achieved by any optimization algorithm.
 Specifically, our work is the first to employ NTK in establishing theoretical results for diffusion models. 
 While the Neural Tangent Kernel (NTK) framework has been extensively used to study supervised classification and regression, adapting it to diffusion models presents new mathematical challenges. Standard overparameterization theory heavily relies on bounded input domains (e.g., data on a unit hypersphere) and deterministic or bounded-noise labels. In contrast, learning the score function requires handling: 1) \textit{unbounded input domains} due to the Gaussian noise injected during the forward process; 2) \textit{noisy target labels} resulting from the DSM objective; and 3) \textit{vector-valued, time-conditioned outputs}. Our analysis overcomes these challenges by formulating the problem in a novel parametric form of the neural network. By rigorously applying truncation arguments, deriving a localized multi-output Rademacher complexity bound, and extending kernel early-stopping rules to unbounded domains to prevent noise memorization, we unlock the application of advanced statistical optimization techniques for generative modeling. Furthermore, the building blocks of our analysis can be applied to other supervised learning problems in non-standard forms {such as conditional generative adversarial networks \citep{liao2020conditional}, conditional flow matching \citep{lipman2022flow},  sequence-to-sequence modeling \citep{gu2021efficiently,smith2022simplified} (with unbounded input and vector-valued output)}, multimodal state space modeling \citep{rojas2025diffuse} and downstream decision-making problems \citep{zhao2025diffusion}, which is interesting in its own right.

\paragraph{Related Literature.}
Our work is related to  three categories of prior work:

First, our framework is closely related to the recent study of diffusion models. A line of work on this topic provides theoretical guarantees of diffusion models for recovering data distribution, assuming access to an accurate score estimator under $L^2$ or $L^\infty$ norm \citep{de2021diffusion, chen2022improved,chen2022sampling,lee2023convergence,shah2023learning,li2023towards,chen2023probability,gao2025wasserstein,tang2024contractive,li2024accelerating,li2024sharp,huang2025convergence,hu2024statistical,wu2024stochastic,li2024d,liang2025low,li2024generalization,mei2023deep,li2024good,zhang2025generalization,zhang2024emergence}. These results offer only a partial understanding of diffusion models as the score estimation part is omitted. To our best knowledge, \cite{chen2023score} and \cite{oko2023diffusion} are the only papers that provide score estimation rates under $L^2$ norm, assuming linear data structure or compactly supported data density. %
In parallel to non-parametric score estimation results, \cite{wibisono2024optimal,dou2024optimal,zhang2024minimax} establish minimax optimal score estimation guarantees from a kernel density estimation perspective.
However, all their emphasis is on algorithm-agnostic analysis without evaluation of any specific algorithms. In contrast, our work offers the first generalization error (sample complexity) bounds for GD-trained neural networks. Finally, we note that several related works have emerged after our conference proceeding \citep{han2024anonymous}, including studies on training behavior of diffusion models \citep{wang2024evaluating,han2024feature,zhang2025convergence}, score matching with transformer networks \citep{cao2024exploring}, and timestep scheduling and discretization in diffusion sampling \citep{huang2026art,aghapour2026entropy}.

Second, our techniques relate to the rich literature of deep learning theory. Inspired by the framework of NTK introduced by \cite{jacot2018neural}, recently \cite{du2018gradient,du2019gradient,allen2019convergence,zou2020gradient} establish a linear convergence rate of neural networks for fitting random labels. One key property of GD-trained neural networks is the so-called implicit regularization of parameters. Namely, the minimizer of overparameterized neural networks is close to the random initialization.  Combined with uniform convergence results in statistical learning, this implicit regularization leads to the generalization property of neural networks in the absence of label noise \citep{arora2019fine}. However, none of these works delves into the generalization ability of neural networks when confronted with noisy labels. \cite{kuzborskij2022learning} is the only work that attempts to study the GD-trained neural networks with additive noise. %
To tackle the challenge posed by the score matching, our approach and, consequently, our theoretical results differ from the existing literature on deep learning theory for supervised learning in three key aspects: 1) handling unbounded input, 2) dealing with vector-valued output, and 3) incorporating an additional time variable. %

 Lastly, our work is connected to a body of research focused on early-stopping rules in kernel regression. See \cite{celisse2021analyzing} for a comprehensive overview of this topic. Our work considers a multi-output extension of the early-stopping rule developed in \cite{raskutti2014early}, which controls the complexity of the predictor class based on empirical distribution. %

\paragraph{Comparison to the Earlier Conference Version.} We point out that this work is an extended version of the conference proceeding \citep{han2024anonymous}, which was published at ICLR 2024. Compared to the conference version, this work is  strengthened significantly in terms of mathematical results, as well as by providing a comprehensive numerical experiment (which was not included in the conference version). In particular,
\begin{enumerate}
    \item Theorem 3.12 in the conference version \citep{han2024anonymous} heavily relies on Assumption 3.11. This assumption requires that there exists an algorithm-aware early-stopping rule $\widehat{T}(N)$ for which the corresponding error bound $\varepsilon(N,\widehat{T}(N))$, governing the kernel approximation error, vanishes as the sample size $N$ tends to infinity. However, whether this assumption holds is highly non-trivial, and justifying it plays an essential role for the ultimate result. In this journal version, we provide an explicit early-stopping rule in Theorem \ref{thm:early-stop} to verify Assumption 3.11. (See the third paragraph under ``Our Work and Contributions'' for the technical novelties of this new addition.)
\item In addition, by leveraging this explicit stopping rule, we establish a minimax-optimal rate for score estimation in this journal version; see Theorem \ref{thm:score-estimation}. By contrast, the conference proceeding does not provide an explicit estimation error bound. %
\item This journal version also includes an experiment based on a credit default dataset (see Section \ref{sec:exp}), demonstrating promising empirical performance of our method. This experimental component is new compared to the conference version.
\end{enumerate}

\section{Preliminaries and Problem Statement}\label{sec:setup}

{\color{black}
In this section, we introduce the mathematical framework of diffusion models \citep{song2020score}, which consists of two stochastic processes: a forward process to add noise and a time-reversed process to denoise. %

\paragraph{Forward Process.} The forward process progressively injects noise into a data sampled from the target data distribution $p_0 \in \mathcal{P}(\R^d)$. In the context of data generation, we have the flexibility to work with any forward diffusion process of our choice. {Common instances include variance-exploding and variance-preserving SDEs \citep{song2020score}.} Let $(\Omega, \mathcal{F}, \P, (\mathcal{F}_t)_{t \geq 0})$ be a filtered probability space with filtration $(\mathcal{F}_t)_{t \geq 0}$. For the sake of theoretical convenience, we adhere to the standard convention in the literature~\citep{song2020improved,ho2020denoising} and focus on the $d$-dimensional Ornstein-Uhlenbeck (OU) process on $\R^d$. In particular, we study a  simple OU process with a deterministic weight function $ g(t) > 0 $:
\begin{align}\label{eq:OU-process}
    {\rm d} X_t  = -\frac{1}{2}g(t)X_t{\rm d}t + \sqrt{g(t)}{\rm d} B_t, \quad X_0\sim p_{0},
\end{align}
where  $ \parenthesis{B_{t}}_{t \geq 0} $ is a standard $ d $-dimensional Brownian motion defined on the aforementioned probability space and $ p_{0}$ represents the unknown data distribution from which we have access to only a limited number of samples. Our objective is to generate additional realistic samples from this distribution. %
The explicit solution to~(\ref{eq:OU-process}) is given by 
\begin{align*}
    X_t = e^{-\int_0^t\frac{1}{2}g(s){\rm d}s}X_0 + e^{-\int_0^t\frac{1}{2}g(s){\rm d}s}\int_0^t e^{\int_{0}^{s}\frac{1}{2}g(u){\rm d}u}\sqrt{g(s)}{\rm d} B_{s}.
\end{align*}
Consequently, the conditional distribution $ X_{t} \vert X_{0} $ follows a multi-variate Gaussian distribution $ \mathcal{N}(\alpha(t)X_{0}, h(t)I_d) $ with $ \alpha(t) \coloneqq \exp\parenthesis{-\int_0^t\frac{1}{2}g(s){\rm d}s} $ and $ h(t) \coloneqq 1 - \alpha^{2}(t) $. Furthermore, under mild assumptions, the OU process converges exponentially to the standard Gaussian distribution \citep{bakry2014analysis}. In practice, the forward process (\ref{eq:OU-process}) will terminate at a sufficiently large timestamp $T > 0$ such that the distribution {of $X_T$} is close to the standard Gaussian distribution.

\paragraph{Time-reversed Process.}
By reversing the forward process in time,  we obtain a process $Y_t:= X_{T-t}$ (well-defined under mild assumptions~\citep{haussmann1986time}) that transforms white noise into samples from the target data distribution, fulfilling the purpose of generative modeling. To start, let us first define a time-reversed SDE associated with (\ref{eq:OU-process}):
\begin{align}\label{eq:backward}
    {\rm d}Y_{t} = \parenthesis{\frac{1}{2}g(T - t)Y_{t} + g(T - t)\nabla \log p_{T - t}(Y_{t})}{\rm d}t  + \sqrt{g(T - t)}{\rm d}\bar{B}_{t},\quad Y_{0} \sim q_{0}
\end{align}
where $(\bar{B}_{t})_{t \ge 0}$ is another $d$-dimensional Brownian motion defined on the same probability space as $(B_t)_{t \geq 0}$,  {$p_t$ is the density of the forward process $X_t$}, the {\it score function} $ \nabla\log p_{t}(\cdot) $ is defined as the gradient of log density of $X_t$, and  $q_{0} $ is the initial distribution of the backward process. If the score function is known at each time $t$ and if we set $q_0 := p_T$, under mild assumptions, $(Y_t)_{0\leq t \leq T}$ has the {\it same time-marginal distribution} as $(X_{T-t})_{0\leq t \leq T}$; see \cite{follmer2005entropy,cattiaux2021time,haussmann1986time} for details. 

In practice, however, (\ref{eq:backward}) cannot be directly used to generate samples from the target data distribution as both the score function and the distribution $ p_{T} $ are {\it unknown}. To address this issue, it is common practice to replace $p_{T} $ by the standard Gaussian distribution as the initial distribution of the backward process. Then, we replace the ground-truth score $ \nabla \log p_{t}(x) $ by an estimator $ s_{\theta}(x, t) $. The estimator $ s_{\theta} $ is parameterized (and learned) by a neural network. With these modifications, we obtain an approximation of the backward process, which is practically implementable:
\begin{align}\label{eq:approx-back}
    {\rm d}Y_{t}' = \parenthesis{\frac{1}{2}g(T - t)Y_{t}' + g(T - t)s_{\theta}(Y_{t}', t)}{\rm d}t + \sqrt{g(T - t)}{\rm d}\bar{B}_t,\quad Y_{0}' \sim \mathcal{N}(0, I_{d}).
\end{align} 
To generate data using (\ref{eq:approx-back}), SDE solvers or  discrete-time approximation schemes can be used \citep{chen2023score,ho2020denoising,chen2022sampling,song2020score,chen2023probability}.

\paragraph{Score Matching.}To implement the time-reversed process \eqref{eq:approx-back}, we need to use samples to estimate the score function. A natural choice is to minimize the $ L^{2} $ loss between the estimated and actual scores:
\begin{align}\label{eq:ESM}
\min_{\theta}\;\; \frac{1}{T - T_{0}}\int_{T_{0}}^{T}\lambda(t) \E\bracket{\norm{s_{\theta}(X_{t}, t) - \nabla \log p_{t}(X_{t})}_{2}^{2}}{\rm d}t,
\end{align}
where {$ \lambda(t) $ is the weight function that captures time inhomogeneity} and $ s_{\theta} $ is the estimator of the score function. Here, $ T_{0}>0 $ is some small value to prevent the score function from blowing up and to stabilize the training procedure \citep{song2019generative,chen2023score,vahdat2021score}. %
A major drawback of the score-matching loss (\ref{eq:ESM}) is its intractability as $ \nabla \log p_{t} $ cannot be computed based on the available data samples. Thus,  instead of minimizing the loss in (\ref{eq:ESM}), one can equivalently minimize the following denoising score matching as shown by \cite{vincent2011connection}:
\begin{align}\label{eq:DSM}
    \min_{\theta}\;\;\frac{1}{T - T_{0}}\int_{T_{0}}^{T}\lambda(t) \E\bracket{\norm{s_{\theta}(X_{t}, t) - \nabla \log p_{t\vert 0}(X_{t}\vert X_{0})}_{2}^{2}}{\rm d}t.
\end{align}
Here, $ p_{t\vert 0}(\cdot \vert x_{0}) $ denotes the conditional probability density of $ X_{t} $ given $ X_{0} = x_0 $. It is easy to show that the choice of our forward process in (\ref{eq:OU-process}) implies
\begin{align}\label{eq:denosing-score}
\nabla \log p_{t\vert 0}(X_{t} \vert X_{0}) = \frac{\alpha(t)}{h(t)}X_{0} - \frac{X_{t}}{h(t)}.
\end{align}
Now, we can plug (\ref{eq:denosing-score}) into (\ref{eq:DSM}) and learn the score function estimator.  In practice, however, the score function estimator is parameterized by a neural network. Next, we discuss such a parameterization.

\begin{algorithm}[H]
    \caption{Sample Collection Procedure}\label{alg:score estimation}
    \begin{algorithmic}[1] 
    \STATE {\bf {Input:}} number of samples $ N $ and a small value $ T_{0} > 0 $
    \FOR{$j = 1, 2, \dots, N$} \label{line: sampling starts}
    \STATE Sample $ X_{0, j} \sim p_{0} $
    \STATE Sample $ t_{j} \sim {\rm Unif}[T_{0}, T] $
    \STATE Sample $ X_{t_{j}} \sim p_{t_{j}\vert 0}({}\cdot{} \vert X_{0, j}) $
    \ENDFOR 
    \RETURN $ \curly{\parenthesis{t_{j}, X_{0, j}, X_{t_{j}}}}_{j = 1}^{N} $
    \end{algorithmic}
\end{algorithm}

\paragraph{Neural Network-Based Parameterization.} To parametrize the function $ s_{\theta} $, we consider a two-layer ReLU neural network $ f_{{\bf W}, a} = \parenthesis{f^{i}_{{\bf W}, a}}_{i = 1}^{d} $ of the following form: %
\begin{align}
f^{i}_{{\bf W}, a}(x, t) & = \frac{1}{\sqrt{m}}\sum_{r = 1}^{m}a_{r}^{i}\sigma(w_{r}^{\top}(x, t - T_{0})). \label{eq:nn-def}
\end{align}
Here, $ (x, t) = (x^{1}, \dots, x^{d}, t)^{\top} \in \R^{d+1} $ is the input vector, $ w_{r} \in \R^{d+1} $ is a weight vector in the first layer, $ a_{r}^{i} \in \R $ is a weight scalar in the second layer, and $ \sigma(\cdot) $ is the ReLU activation.  The  {design term $ T_{0} $} introduced in the architecture plays an important role in the theoretical analysis and also offers valuable insights in practice; see Theorem \ref{thm:approx-score-L2}.  For ease of exposition, we denote $ {\bf W} = (w_{1}, \dots, w_{m})\in \R^{(d+1)\times m} $ and $ a = [a_{r}^{i}] \in \R^{m \times d} $. {We adopt the usual trick in the overparameterization literature \citep{cai2019neural,wang2019neural,allen2019convergence} with the parameter $a$ fixed throughout training and only  updating~${\bf W}$. This seemingly shallow architecture poses significant challenges when analyzing the convergence of gradient-based algorithms due to its non-convex and non-smooth objective. On the other hand, its ability to effectively approximate a diverse set of functions makes it a promising starting point for advancing theoretical developments.}

To train the neural network, we need to collect samples and design objective to measure the ``goodness-of-fit" of the neural network. We use Algorithm \ref{alg:score estimation} to generate $ N $ i.i.d.~data samples. In particular, for each $ j = 1, \dots, N $, we first sample $ X_{0, j} $ from $ p_{0} $ and  a timestamp $ t_{j} $ uniformly over the interval $ [T_{0}, T] $. Given $ X_{0, j} $ and $ t_{j} $, we then sample $ X_{t_{j}} $ from the Gaussian distribution $ p_{t_{j}\vert 0}({}\cdot{} \vert X_{0, j}) $. Given the output dataset  $ S := \curly{(t_{j}, X_{0, j}, X_{t_{j}})}_{j = 1}^{N} $, we train the neural network by minimizing a quadratic loss:
\begin{align}\label{eq:training-loss}
    \min_{\bf W}\;\;\widehat{\mcal{L}}({\bf W}) \coloneqq\frac{1}{2}\sum_{j = 1}^{N}\norm{f_{{\bf W}, a}(X_{t_{j}}, t_{j}) - X_{0, j}}_{2}^{2}.
\end{align}
Particularly, we perform the gradient descent (GD) update rule:\vspace{-2pt}
\begin{align}
    & w_{r}(\tau +1) - w_{r}(\tau) = - \eta\frac{\partial \widehat{\mcal{L}}(w_{r}(\tau))}{\partial w_{r}(\tau)} \nonumber  \\ & = -\frac{\eta}{\sqrt{m}}\sum_{j = 1}^{N}\sum_{i = 1}^{d}(f_{{\bf W}, a}^{i}(X_{t_{j}}, t_{j}) - X^{i}_{0, j})a_{r}^{i}(X_{t_{j}}, t_{j} - T_{0})\mathbb{I}\curly{w_{r}^{\top}(X_{t_{j}}, t_{j} - T_{0}) \geq 0}, \label{eq:gradient}
\end{align}
for~$r = 1,\ldots, m$. Here, $ \eta > 0 $ is the learning rate and $ \mathbb{I}\curly{\cdot} $ denotes the indicator function. We initialize the parameter $\bf W$ and $ a $ according to the following neural tangent kernel (NTK) regime \citep{jacot2018neural}:
\begin{align}
    a_{r}^{i} \sim {\rm Unif}\curly{-1, +1} \;\; \text{if} \;\; r \leq \frac{m}{2} \;\; \text{and} \;\; a_{r}^{i} = -a_{r-\frac{m}{2}}^{i} \;\; \text{if} \;\; r > \frac{m}{2}, \;\; \forall i \in [d] \label{eq:init a}
\end{align}
and
\begin{align}
w_r(0) \sim \mcal{N}(0, I_{d+1}) \;\; \text{if} \;\; r \leq \frac{m}{2} \;\; \text{and} \;\; w_{r}(0) = w_{r - \frac{m}{2}}(0) \;\; \text{if} \;\; r > \frac{m}{2}. \label{eq:init w}
\end{align}
Without loss of generality, we assume $m$ is an even number. Under the initialization scheme \eqref{eq:init a}--\eqref{eq:init w}, we guarantee that $f_{{\bf W}(0), a} \equiv 0$ with the choice of initial parameters. One can show that the training loss (\ref{eq:training-loss}) is an empirical version of the denoising score-matching loss defined in (\ref{eq:DSM}) under a carefully chosen $ s_{\theta} $; see \eqref{eq:regression}. Correspondingly, the finite sample performance of $s_\theta$ w.r.t.~(\ref{eq:ESM}) is referred to as {\it generalization ability}.

\paragraph{Neural Tangent Kernels.} 
For a two-layer ReLU neural network of the form (\ref{eq:nn-def}), we follow \citep{jacot2018neural} to introduce an associated NTK 
$ K: \R^{d+1}\times \R^{d+1} \to \R^{d \times d} $ whose $ (i, k) $-th entry is defined as \vspace{-6pt}
\begin{align*}
K^{ik}\left((x, t), \left(\tilde{x},\tilde{t}\right)\right) & \coloneqq 
z^{\top}\tilde{z}\,\E\bracket{a_{1}^{i}a_{1}^{k}\mathbb{I}\curly{w_{1}(0)^{\top}z \geq 0}\mathbb{I}\curly{w_{1}(0)^{\top}\tilde{z} \geq 0}},
\end{align*}
where $ z = (x, t - T_{0}) $ and $ \tilde{z} = (\tilde{x}, \tilde{t} - T_{0}) $. Here, the expectation is taken over all the randomness of $ a_{1}^{i} $, $ a_{1}^{k} $ and $ w_{1}(0) $. Similarly, we define a scalar-valued NTK $ \kappa:\R^{d+1} \times \R^{d+1} \to \R $ associated with each coordinate of the neural network: 
\begin{align*}
\kappa\Big((x, t), (\tilde{x}, \tilde{t})\Big) \coloneqq z^{\top}\tilde{z}\,\E\bracket{\mathbb{I}\curly{w_{1}(0)^{\top}z \geq 0}\mathbb{I}\curly{w_{1}(0)^{\top}\tilde{z} \geq 0}}.
\end{align*}
From the definition of the matrix-valued NTK, it is easy to see that $ K $ is a diagonal matrix and in particular,
$
    K\left((x, t), \left(\tilde{x}, \tilde{t}\right)\right)  = \kappa((x, t), (\tilde{x}, \tilde{t}))I_{d},
$
where $I_d$ is the $d$-dimensional identity matrix. 
Moreover, we let $ \mcal{H} $ be the reproducing Hilbert space (RKHS) induced by the matrix-valued NTK $ K $ and $ \mcal{H}_{1} $ be the RKHS induced by the scalar-valued NTK $ \kappa $ \citep{jacot2018neural,carmeli2010vector}. Finally,  given a dataset $ S $ and defining~$ z_{j} = (X_{t_{j}}, t_{j} - T_{0}) $, the Gram matrix~$ H $ of the kernel~$ K $ is defined as a $ dN \times dN $ block matrix with 
\begin{align}
        H \coloneqq  
        \begin{pmatrix}
            H_{11} & \cdots &  H_{1N} \\ \vdots & \ddots & \vdots \\ H_{N1} & \cdots & H_{NN}
        \end{pmatrix}, \quad 
    H^{ik}_{j\ell} \coloneqq z_{j}^{\top}z_{\ell}\E\bracket{a_{1}^{i}a_{1}^{k}\mathbb{I}\curly{z_{j}^{\top}w_{1}(0) \geq 0, z_{\ell}^{\top}w_{1}(0) \geq 0}}. \label{eq:ntk} 
\end{align}

\section{Main Results}\label{sec:results}

This section introduces our main theoretical results. We first propose a parametric form of $ s_{\theta} $ to simplify the score-matching loss in (\ref{eq:ESM}). Next, we show that the empirical version of DSM (\ref{eq:DSM}) is indeed equivalent to the quadratic loss defined in (\ref{eq:training-loss}). Finally, we provide a decomposition of an upper bound on the loss function into four terms:~a coupling term, a label mismatch term, a term related to early stopping, and an approximation error term. These terms are carefully analyzed later. %

To motivate our parametric form of $ s_{\theta} $,  we start by the following decomposition of the score function:
\begin{Lemma}\label{lemma:score-decomp}
    The score function $ \nabla \log p_{t}(x) $ admits the following decomposition:
    \begin{equation}\label{eq:equivalent_score}
    \nabla \log p_{t}(x) = \frac{\alpha(t)}{h(t)}\E\bracket{X_{0}\vert X_{t} = x} - \frac{x}{h(t)}.
    \end{equation}
\end{Lemma}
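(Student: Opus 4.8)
The identity is a form of Tweedie's formula, and the plan is to differentiate the marginal density $p_t$ by writing it as a Gaussian mixture over the data distribution and then interchanging the gradient with the integral. Concretely, since the forward OU process gives $X_t \mid X_0 \sim \mathcal{N}(\alpha(t)X_0, h(t)I_d)$, we have the explicit transition density
\begin{align*}
p_{t\vert 0}(x \vert x_0) = \parenthesis{2\pi h(t)}^{-d/2}\exp\parenthesis{-\frac{\norm{x - \alpha(t)x_0}_2^2}{2h(t)}},
\end{align*}
and $p_t(x) = \int p_0(x_0)\, p_{t\vert 0}(x\vert x_0)\,{\rm d}x_0 > 0$ for all $x$ and all $t \in (0,T]$. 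A direct computation (this is exactly the content of (\ref{eq:denosing-score}), with the roles of $X_0$ and $X_t$ read off appropriately) gives $\nabla_x \log p_{t\vert 0}(x\vert x_0) = \frac{\alpha(t)x_0 - x}{h(t)}$, equivalently $\nabla_x p_{t\vert 0}(x\vert x_0) = p_{t\vert 0}(x\vert x_0)\cdot\frac{\alpha(t)x_0 - x}{h(t)}$.

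Next I would justify differentiating under the integral sign: for fixed $t$, on any bounded neighborhood of a point $x$, the integrand $p_0(x_0)\,\nabla_x p_{t\vert 0}(x\vert x_0)$ is dominated by an integrable function of $x_0$, because the Gaussian factor together with the polynomial prefactor $\frac{\alpha(t)x_0 - x}{h(t)}$ decays fast enough (uniformly in $x$ over the neighborhood) and $p_0$ is a probability density; the sub-Gaussian tail control is exactly of the type established later in Lemma~\ref{lemma:tail bound}. Hence
\begin{align*}
\nabla_x \log p_t(x) = \frac{\nabla_x p_t(x)}{p_t(x)} = \frac{1}{p_t(x)}\int p_0(x_0)\, p_{t\vert 0}(x\vert x_0)\,\frac{\alpha(t)x_0 - x}{h(t)}\,{\rm d}x_0 = \E\bracket{\frac{\alpha(t)X_0 - x}{h(t)}\,\middle\vert\, X_t = x},
\end{align*}
where in the last step I used that the posterior density of $X_0$ given $X_t = x$ is $p_{0\vert t}(x_0\vert x) = p_0(x_0)p_{t\vert 0}(x\vert x_0)/p_t(x)$. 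Pulling the deterministic quantities out of the conditional expectation yields $\nabla_x \log p_t(x) = \frac{\alpha(t)}{h(t)}\E[X_0\vert X_t = x] - \frac{x}{h(t)}$, which is (\ref{eq:equivalent_score}).

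The only nontrivial point is the interchange of $\nabla_x$ and the integral; everything else is algebra. I expect this to be genuinely routine here because the conditional law of $X_t$ is Gaussian with variance bounded away from $0$ on $[T_0, T]$, so the dominating function can be taken locally uniform in $x$ via standard Gaussian moment bounds, and the paper in any case develops the needed tail estimates for the diffusion (Lemma~\ref{lemma:tail bound}). An alternative, essentially equivalent route is to first establish the general identity $\nabla_x \log p_t(x) = \E[\nabla_x \log p_{t\vert 0}(X_t\vert X_0)\vert X_t = x]$ by the same Bayes manipulation, and then substitute (\ref{eq:denosing-score}); I would present whichever is shorter.
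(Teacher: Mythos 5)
Your proposal follows essentially the same route as the paper's proof: both write $p_t$ as the Gaussian mixture $\int p_{t\vert 0}(x\vert x_0)p_0(x_0)\,{\rm d}x_0$, differentiate under the integral (the paper also invokes dominated convergence for this step), and recognize the resulting ratio as an integral against the posterior $p_{0\vert t}$, yielding the conditional expectation. The argument is correct and matches the paper's proof in all essentials.
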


The proof of Lemma \ref{lemma:score-decomp} follows from the Gaussianity of the transition kernel $ p_{t\vert 0} $. {A similar decomposition has been proved in \cite[Lemma 1]{chen2023score} for data with linear structure, and in \cite{li2023towards} for discrete time analysis and its concurrent work \citep{mei2023deep}.} Compared to the expression of $ \nabla \log p_{t\vert 0}(\cdot \vert x_{0})  $ computed in (\ref{eq:denosing-score}), we replace $ X_{0} $ by $ \E\bracket{X_{0}\vert X_{t}} $ to obtain the ground-truth score function in  (\ref{eq:equivalent_score}). Consequently, we call $X_0$ the {\it noisy label} and $\E\bracket{X_0 \vert X_t}$ the {\it true label}. 

\begin{proof}
    Recall that the density function $ p_{t} $ can be written as
    \begin{align*}
    p_{t}(x) = \int p_{t\vert 0}(x \vert x_{0})p_{0}(x_{0}){\rm d}x_{0},
    \end{align*}
    where the transition kernel $ p_{t\vert 0}(x \vert x_{0}) = (2\pi h(t))^{-d/2}\exp\parenthesis{-\frac{1}{2h(t)}\norm{x - \alpha(t)x_{0}}_{2}^{2}} $. 
   The dominated convergence theorem implies, 
\begin{align*}
\nabla \log p_{t}(x) & = \frac{\nabla \int p_{t\vert 0}(x\vert x_{0})p_{0}(x_{0}){\rm d}x_{0}}{p_{t}(x)}  \\ & = \frac{\parenthesis{2\pi h(t)}^{-d/2}\int -\frac{x - \alpha(t)x_{0}}{h(t)}\exp\parenthesis{-\frac{\norm{x- \alpha(t)x_{0}}^{2}}{2h(t)}}p_{0}(x_{0}){\rm d}x_{0}}{p_{t}(x)} \\ & = \int - \frac{x - \alpha(t)x_{0}}{h(t)}\cdot\frac{p_{t \vert 0}(x \vert x_{0})p_{0}(x_{0})}{p_{t}(x)}{\rm d}x_{0} \\ & = \int - \frac{x - \alpha(t)x_{0}}{h(t)}\cdot p_{0 \vert t}(x_{0} \vert x){\rm d}x_{0} \\ & = \E\bracket{\left.\frac{\alpha(t)X_{0} - X_{t}}{h(t)} \right\vert X_{t} = x}\\ & = \frac{\alpha(t)}{h(t)}\E\bracket{X_{0}\vert X_{t} = x} - \frac{x}{h(t)},
\end{align*}
which completes the proof.  
\end{proof}

We make the following assumption on the diffusion models (\ref{eq:OU-process}). 
\begin{Assumption}\label{ass:bounded target}
    The target density function $p_0$ has a  compact support with $ \norm{X_{0}}_{2} \leq D$ almost surely, for some constant $ D > 0$. 
\end{Assumption}

Assumption~\ref{ass:bounded target} is satisfied in most practical settings, including the generation of images, videos, and audio. This assumption simplifies the subsequent analysis and can be relaxed to the sub-Gaussian tail assumption. Next, we propose the parametric form of $ s_{\theta} $ and $ \lambda(t) $ in the score-matching loss (\ref{eq:ESM}):
\begin{align*}
s_{{\bf W}, a}(x, t) = \frac{\alpha(t)}{h(t)}\Pi_{D}(f_{{\bf W}, a}(x, t)) - \frac{x}{h(t)}, \quad \textit{with} \,\, \lambda(t) = \frac{h(t)^{2}}{\alpha(t)^{2}},
\end{align*}
where $ \Pi_{D} $ is the projection operator onto the $ L^{2} $-ball with radius $ D $ centered at zero. With the choice of $ s_{{\bf W}, a} $ and $ \lambda(t) $ specified above, the score-matching loss (\ref{eq:ESM}) becomes
\begin{align}\label{eq:regression}
	\min_{{\bf W}}\;\;\frac{1}{T - T_{0}}\int_{T_{0}}^{T}\E\bracket{\norm{\Pi_{D}(f_{{\bf W}, a}(X_{t}, t)) - f_{*}(X_{t}, t)}_{2}^{2}}{\rm d}t,
	\end{align}
in which we define the target function as $ f_{*}(x, t) \coloneqq \E\bracket{X_{0} \vert X_{t} = x} $ and the expectation in \eqref{eq:regression} is taken over $ X_{t} $.  Given that only  $ {\bf W} $ is updated during optimization, in what follows, we omit $ a $ in the subscript of the neural network.  Our  loss function (\ref{eq:regression}) is also supported by empirical studies~\citep{ho2020denoising}.  In addition, (\ref{eq:regression}) can be viewed as a  regression task with noisy labels. In what follows, we will show that neural networks trained on noisy labels generalize well w.r.t.~(\ref{eq:regression}); see Theorem \ref{thm:score-estimation}.

One major challenge in the theoretical analysis, which distinguishes us from the standard supervised learning problems, is the unboundedness of the input $X_t$ in the objective function. To overcome this challenge, %
we employ a truncation argument with a threshold $ R $:
\begin{align}
	& \frac{1}{T - T_{0}}\int_{T_{0}}^{T}\E\bracket{\norm{\Pi_{D}(f_{\bf W}(X_{t}, t)) - f_{*}(X_{t}, t)}_{2}^{2}}{\rm d}t \nonumber \\  = \; & \frac{1}{T - T_{0}}\int_{T_{0}}^{T}\E\bracket{\norm{\Pi_{D}(f_{\bf W}(X_{t}, t)) - f_{*}(X_{t}, t)}_{2}^{2}\mathbb{I}\curly{\norm{X_{t}}_{2}\leq R}}{\rm d}t \label{eq:truncation<R}\\ &  \qquad + \frac{1}{T - T_{0}}\int_{T_{0}}^{T}\E\bracket{\norm{\Pi_{D}(f_{\bf W}(X_{t}, t)) - f_{*}(X_{t}, t)}_{2}^{2}\mathbb{I}\curly{\norm{X_{t}}_{2} > R}}{\rm d}t. \label{eq:truncation>R}
\end{align}

The next lemma controls the tail behavior in (\ref{eq:truncation>R}). 
\begin{Lemma}\label{lemma:tail bound}
    Suppose Assumption \ref{ass:bounded target} holds. Then, uniformly over all $\mathbf{W}$, it holds that
    \begin{align*}
        \frac{1}{T - T_{0}}\int_{T_{0}}^{T}\E\bracket{\norm{\Pi_{D}(f_{\bf W}(X_{t}, t)) - f_{*}(X_{t}, t)}_{2}^{2}\mathbb{I}\curly{\norm{X_{t}}_{2} > R}}{\rm d}t = \mathcal{O}({R}^{d-2}e^{-R^{2}/4}).
    \end{align*}
\end{Lemma}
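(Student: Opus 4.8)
The plan is to bound the integrand pointwise in $t$ and then integrate. Fix $t \in [T_0, T]$. The key observation is that both $\Pi_D(f_{\bf W}(X_t,t))$ and $f_*(X_t,t) = \E[X_0 \mid X_t]$ take values in the $L^2$-ball of radius $D$: the former by definition of the projection $\Pi_D$, and the latter because $\|X_0\|_2 \le D$ almost surely (Assumption~\ref{ass:bounded target}) implies $\|\E[X_0\mid X_t]\|_2 \le \E[\|X_0\|_2 \mid X_t] \le D$ by Jensen. Hence the squared-norm difference is bounded deterministically by $(2D)^2 = 4D^2$, uniformly over $\bf W$. So the integrand is at most $4D^2\, \P(\|X_t\|_2 > R)$, and everything reduces to a Gaussian-type tail estimate on $\|X_t\|_2$ that is uniform in $t \in [T_0,T]$.

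Next I would estimate $\P(\|X_t\|_2 > R)$. Recall $X_t = \alpha(t) X_0 + \sqrt{h(t)}\, \xi$ with $\xi \sim \mathcal{N}(0, I_d)$ independent of $X_0$, where $\alpha(t)^2 + h(t) = 1$ so $\alpha(t), \sqrt{h(t)} \in [0,1]$. Since $\|X_0\|_2 \le D$, on the event $\{\|X_t\|_2 > R\}$ we must have $\sqrt{h(t)}\|\xi\|_2 \ge \|X_t\|_2 - \alpha(t)\|X_0\|_2 > R - D$, hence $\|\xi\|_2 > (R-D)/\sqrt{h(t)} \ge R - D$ (using $h(t) \le 1$). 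Therefore $\P(\|X_t\|_2 > R) \le \P(\|\xi\|_2 > R - D)$, a bound with no $t$-dependence at all. The tail of the chi-distribution $\|\xi\|_2$ with $d$ degrees of freedom satisfies $\P(\|\xi\|_2 > u) = \mathcal{O}(u^{d-2} e^{-u^2/2})$ for large $u$ — this follows by integrating the density $c_d\, r^{d-1} e^{-r^2/2}$ from $u$ to $\infty$ and a standard Laplace/integration-by-parts argument, or from a Gamma tail bound on $\|\xi\|_2^2$. Plugging $u = R - D$ and absorbing the lower-order shift, $(R-D)^{d-2} e^{-(R-D)^2/2} = \mathcal{O}(R^{d-2} e^{-R^2/4})$ for $R$ large (the constant $D$ and the factor $e^{DR - D^2/2}$ are dominated by relaxing the exponent from $1/2$ to $1/4$).

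Finally, combining: the integrand is $\mathcal{O}(D^2 R^{d-2} e^{-R^2/4})$ uniformly in $t$ and $\bf W$, and averaging $\frac{1}{T-T_0}\int_{T_0}^T$ of a $t$-independent bound leaves it unchanged, giving the claim. The only mildly delicate point is the asymptotic form of the chi-tail — I would either cite a standard concentration bound (e.g. Laurent–Massart for $\|\xi\|_2^2$) and note that it yields the stated polynomial-times-Gaussian rate, or carry out the one-line integration-by-parts on $\int_u^\infty r^{d-1}e^{-r^2/2}\,{\rm d}r$. Neither step is a real obstacle; the proof is essentially a uniform boundedness observation plus a Gaussian tail bound, and the architectural bias $T_0$ and the neural network parameters play no role here because of the $\Pi_D$ projection.
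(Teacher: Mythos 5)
Your proposal is correct and follows essentially the same strategy as the paper: bound the integrand uniformly by $4D^2$ (projection plus conditional Jensen for $f_*$) and reduce the claim to a $t$-uniform Gaussian-type tail bound of order $R^{d-2}e^{-R^2/4}$ on $\P(\|X_t\|_2>R)$. The only difference is in routine details of the tail estimate: the paper bounds the conditional density via $\|x_t-\alpha(t)x_0\|_2^2 \ge \tfrac12\|x_t\|_2^2 - \|x_0\|_2^2$ and integrates in polar coordinates, whereas you use the triangle inequality on $X_t=\alpha(t)X_0+\sqrt{h(t)}\,\xi$ to reduce to a chi-distribution tail at $R-D$ and absorb the shift by relaxing the exponent from $1/2$ to $1/4$ — both give the same bound.
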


\begin{proof}
The ideas in the proof are motivated by \cite{chen2023score}.
First,     note that
\begin{align}
p_{t\vert 0}(x_{t} \vert x_{0}) & = (2\pi h(t))^{-d/2}\exp\parenthesis{-\frac{1}{2h(t)}\norm{x_{t} - \alpha(t)x_{0}}_{2}^{2}} \nonumber\\ & \leq (2\pi h(t))^{-d/2}\exp\parenthesis{-\frac{1}{2h(t)}\parenthesis{\frac{1}{2}\norm{x_{t}}_{2}^{2} - \alpha^{2}(t)\norm{x_{0}}_{2}^{2}}} \nonumber\\ & \leq (2\pi h(t))^{-d/2}\exp\parenthesis{-\frac{1}{2h(t)}\parenthesis{\frac{1}{2}\norm{x_{t}}_{2}^{2} - \norm{x_{0}}_{2}^{2}}}. \label{eq:pt0-bound}
\end{align}
Denote the expectation with respect to the marginal distribution of $X_0$ as $\E_{X_0}$. 
With inequality (\ref{eq:pt0-bound}), we have
\begin{align}
&\frac{1}{T - T_{0}}\int_{T_{0}}^{T}\E\bracket{\norm{\Pi_{D}(f_{\bf W}(X_{t}, t)) - f_{*}(X_{t}, t)}_{2}^{2}\mathbb{I}\curly{\norm{X_{t}}_{2} > R}}{\rm d}t \nonumber\\ & \leq \frac{4D^{2}}{T - T_{0}}\int_{T_{0}}^{T}\E_{X_{0}}\bracket{\int_{\norm{x_{t}}_{2} \geq {R}}p_{t\vert 0}(x_{t}\vert X_{0}){\rm d}x_{t}}{\rm d}t \nonumber\\ & \leq \frac{4D^{2}}{T - T_{0}}\int_{T_{0}}^{T}(2\pi h(t))^{-d/2}\E_{X_{0}}\bracket{\exp\parenthesis{\frac{\norm{X_{0}}_{2}^{2}}{2h(t)}}}\parenthesis{\int_{\norm{x_{t}}\geq {R}}\exp\parenthesis{-\frac{\norm{x_{t}}^{2}}{4h(t)}}{\rm d}x_{t}}{\rm d}t \nonumber
\\ &  \leq \frac{4D^2\exp(D^2/2h(T_0))}{(T-T_0)(2\pi h(T_0))^{d/2}}\int_{T_{0}}^{T}\int_{\norm{x_{t}} \geq {R}}\exp\parenthesis{-\frac{\norm{x_{t}}^{2}}{4h(t)}}{\rm d}x_{t}{\rm d}t, \label{eq:tail-1}
\end{align}
where the last step holds due to the facts that $ h(t) \in [h(T_{0}), h(T)] $ and $ \norm{X_{0}} \leq D $. We bound the inner integral in (\ref{eq:tail-1}) by using the polar coordinate \cite[Corollary 2.51]{folland1999real}:
\begin{align*}
\int_{\norm{x_{t}} \geq {R}}\exp\parenthesis{-\frac{\norm{x_{t}}^{2}}{4h(t)}}{\rm d}x_{t} & = \frac{2\pi^{d/2}}{\Gamma(d/2)}\int_{{R}}^{\infty}\exp\parenthesis{-\frac{r^{2}}{4h(t)}}r^{d-1}{\rm d}r \\ & = \frac{(4h(t))^{d/2}\pi^{d/2}}{\Gamma(d/2)}\int_{{R}^{2}/(4h(t))}^{\infty}\exp\parenthesis{-u}u^{d/2 - 1}{\rm d}u \\ & = \frac{2(4h(t))^{d/2}\pi^{d/2}}{d\Gamma(d/2)}\int_{\parenthesis{{R}^{2}/(4h(t))}^{d/2}}^{\infty}\exp\parenthesis{-v^{2/d}}{\rm d}v \\ & \leq \frac{8h(t)\pi^{d/2}}{\Gamma(d/2)}{R}^{d-2}e^{-{R}^{2}/(4h(t))},
\end{align*}
where the last inequality follow from \cite[Equation 10]{qi1999some}. 
Therefore, we conclude that 
\begin{align*}
& \frac{1}{T - T_{0}}\int_{T_{0}}^{T}\E\bracket{\norm{\Pi_{D}(f_{\bf W}(X_{t}, t)) - f_{*}(X_{t}, t)}_{2}^{2}\mathbb{I}\curly{\norm{X_{t}}_{2} > R}}{\rm d}t  \\ & = 
\frac{4D^2\exp(D^2/2h(T_0))}{(T-T_0)(2\pi h(T_0))^{d/2}}\int_{T_0}^T \frac{8h(t)\pi^{d/2}}{\Gamma(d/2)}{R}^{d-2}e^{-{R}^{2}/(4h(t))} {\rm d} t \\ & \leq \frac{32 D^2\exp(D^2/2h(T_0))}{\Gamma(d/2)(2 h(T_0))^{d/2}} \cdot R^{d-2}e^{-R^2/4} = \mathcal{O}(R^{d-2}e^{-R^2/4}).
\end{align*}
\end{proof}

Lemma \ref{lemma:tail bound} states the term (\ref{eq:truncation>R}) is exponentially small w.r.t.~the threshold $ R $. Thus, it suffices to focus on the loss (\ref{eq:truncation<R}) over the ball with radius $ R $. As an immediate consequence of Lemma \ref{lemma:tail bound}, the following result shows that the input data of training, in the format of $(t_{j}, X_{0, j}, X_{t_{j}})$, also enjoys a concentration property.
\begin{Lemma}\label{lemma:sampling-prob}
    Suppose Assumption \ref{ass:bounded target} holds.
    Let $ \curly{(t_{j}, X_{0, j}, X_{t_{j}})}_{j = 1}^{N} $ be samples collected from Algorithm~\ref{alg:score estimation}. With probability at least $ 1 - \delta_1 $, we have
    \begin{align*}
        t_{j} \in [T_{0} + \Delta, T], \quad \norm{X_{t_{j}}}_{2} \leq R,
    \end{align*}
    where $ \delta_1 \coloneqq \frac{N\Delta}{T - T_{0}} + \mathcal{O}\parenthesis{NR^{d - 2}e^{-R^{2}/4}} $. 
\end{Lemma}

\begin{proof}
    Note that in the proof of Lemma \ref{lemma:tail bound}, we have shown that for any {$t\in[T_0+\Delta,T]$} 
    \begin{align*}
    \E\bracket{\mathbb{I}\curly{\norm{X_{t}}_{2} > R}} = \mcal{O}\parenthesis{R^{d - 2}e^{-R^{2}/ 4}}.
    \end{align*}
    It then follows
    \begin{align*}
         \frac{1}{T - T_{0}}\int_{T_{0} + \Delta}^{T}\E\bracket{\mathbb{I}\curly{\norm{X_{t}}_{2} \leq R}}{\rm d}t & = \frac{1}{T - T_{0}}\int_{T_{0} + \Delta}^{T}\parenthesis{1 - \E\bracket{\mathbb{I}\curly{\norm{X_{t}}_{2} > R}}}{\rm d}t \\ & \gtrsim 1 - \frac{\Delta}{T - T_{0}} - R^{d - 2}e^{-R^{2}/4}. 
    \end{align*}
    Set $ \delta' = \frac{\Delta}{T - T_{0}} + \mcal{O}\parenthesis{R^{d - 2}e^{-R^{2}/ 4}} $ and we have
    \begin{align*}
    \frac{1}{T - T_{0}}\int_{T_{0} + \Delta}^{T}\P\parenthesis{\norm{X_{t}}_{2} \leq R}{\rm d}t \geq 1 - \delta'.
    \end{align*}
    We set $ \delta = N\delta' $ and apply the union bound. Therefore, with probability at least $ 1 - \delta $, it holds that
    \begin{align*}
    t_{j} \in [T_{0} + \Delta, T], \quad \norm{X_{t_{j}}}_{2} \leq R.
    \end{align*}  
This completes the proof.

\end{proof}

Inspired by \cite{kuzborskij2022learning} for learning Lipschitz functions, we upper-bound (\ref{eq:truncation<R})
by the following decomposition at each iteration~$\tau$:
\begin{align}\label{eq:decomposition}
    & \frac{1}{4(T - T_{0})}\int_{T_{0}}^{T}\E\bracket{\norm{\Pi_{D}\parenthesis{f_{\bf W(\tau)}(X_{t}, t)} - f_{*}(X_{t}, t)}_{2}^{2}\mathbb{I}\curly{\norm{X_{t}}_{2}\leq {R}}}{\rm d}t  \\  \leq \; & \frac{1}{T - T_{0}}\int_{T_{0}}^{T}\E\bracket{\norm{\Pi_{D}\parenthesis{f_{\bf W(\tau)}(X_{t}, t)} - f^{K}_{\tau}(X_{t}, t)}_{2}^{2}\mathbb{I}\curly{\norm{X_{t}}_{2}\leq {R}}}{\rm d}t \tag{coupling}\\ & \qquad + \frac{1}{T - T_{0}}\int_{T_{0}}^{T}\E\bracket{\norm{f^{K}_{\tau}(X_{t}, t) - \tilde{f}^{K}_{\tau}(X_{t}, t)}_{2}^{2}\mathbb{I}\curly{\norm{X_{t}}_{2}\leq {R}}}{\rm d}t \tag{label mismatch} \\ & \qquad + \frac{1}{T - T_{0}}\int_{T_{0}}^{T}\E\bracket{\norm{\tilde{f}^{K}_{\tau}(X_{t}, t) - f_{\mcal{H}}(X_{t}, t)}_{2}^{2}\mathbb{I}\curly{\norm{X_{t}}_{2}\leq {R}}}{\rm d}t \tag{early stopping}\\ & \qquad +  \frac{1}{T - T_{0}}\int_{T_{0}}^{T}\E\bracket{\norm{f_{\mcal{H}}(X_{t}, t) - f_{*}(X_{t}, t)}_{2}^{2}\mathbb{I}\curly{\norm{X_{t}}_{2}\leq {R}}}{\rm d}t. \tag{approximation}
\end{align}

The first term is the coupling error between  neural networks $f_{\bf W(\tau)}$ and a function $f_{\tau}^{K}$ defined~as:
\begin{align}\label{eq:kernel-predictor}
    f_{\tau}^{K}(x, t) = \sum_{j = 1}^{N}K((X_{t_{j}}, t_{j}), (x, t))\gamma_{j}(\tau), \quad \gamma(\tau+1) = \gamma(\tau) - \eta (H \gamma(\tau) - y),
\end{align}
 where we initialize $\gamma(0) = 0$ and $y = (X_{0, 1}^\top, \dots, X_{0, N}^\top)^\top$.  The fourth term is the approximation error of the target function $f_*$ by a function~$f_\mcal{H}$ in the RKHS~$\mcal{H}$. These two terms transform the training of neural networks into a problem of kernel regression. To learn the function~$f_\mcal{H}$, we define an auxiliary function $\tilde{f}_\tau^K$ of the same functional form as ${f}_\tau^K$, but trained on a different dataset $\tilde{S} = \{(t_{j}, \tilde{X}_{0, j}, X_{t_{j}})\}_{j = 1}^{N} $ with
\begin{align}
    \tilde{X}_{0, j} \coloneqq f_{\mcal{H}}(X_{t_{j}}, t_{j}) + \varepsilon_{j}, \quad \varepsilon_{j} \coloneqq X_{0, j} - f_{*}(X_{t_{j}}, t_{j}).\label{eq:virtual-dataset}
\end{align}
Finally, we control the third term in the above decomposition by the early-stopping rule, {which is introduced in the statistical learning literature \citep{raskutti2014early,wei2017early}.}

\subsection{Approximation}\label{subsec:approx}

We start by analyzing the approximation term in our decomposition. This subsection focuses on the approximation error of the target function $f_*$ by a function in the RKHS $\mcal{H}$ induced by the NTK $K$. We start with a regularity assumption on the coefficient $ g(t) $ in the OU process. 

\begin{Assumption}\label{ass:g}
    The function $ g $ is almost everywhere continuous and bounded on $ [T_0, T] $. 
\end{Assumption}

Assumption \ref{ass:g} imposes a minimal requirement to guarantee that both $ \alpha(t) $ and $ h(t) $ are well-defined at each timestamp $ t \geq 0 $. In addition, the boundedness of $ g $ over the interval $[T_0, T] $ is used to establish the Lipschitz property of the score function with respect to $ t $ in the literature \citep{chen2023score,chen2022improved,chen2022sampling}. We also make the following smoothness assumption on the target function $f_*$. 
\begin{Assumption}\label{ass:lip-f*}
    For all $ (x, t)  \in \R^{d}\times [T_0, T] $, the  function $ f_{*}(x, t) $ is $ \beta_1 $-Lipschitz in $ x $, i.e., $ \abs{f_{*}(x, t) - f_{*}(x', t)}_{2} \leq \beta_1\norm{x - x'}_{2} $. 
\end{Assumption}

Assumption \ref{ass:lip-f*} implies the score function is Lipschitz w.r.t.~the input~$x$. This assumption is standard in the literature \citep{chen2022improved,chen2022sampling,chen2023score}. Yet, the Lipschitz continuity in Assumption \ref{ass:lip-f*} is only imposed on the regression function $f_*$, which is a consequence of the score decomposition. To justify Assumption~\ref{ass:lip-f*}, we provide an upper bound for the Lipschitz constant $ \beta_1 $ in Lemma \ref{lemma:betax-bound}.  The following theorem states a universal approximation theorem of using RKHS for score functions.

\begin{Theorem}[Universal Approximation of Score Function]\label{thm:approx-score-L2}
    Suppose Assumptions \ref{ass:bounded target}, \ref{ass:g} and \ref{ass:lip-f*} hold. Let $ R \geq T - T_{0} $ and $ R_{\mathcal{H}} $ be larger than a constant $ c_{1}(d) $ that depends only on $ d $. There exists a function $ f_{\mcal{H}} \in \mcal{H} $ such that $ \norm{f_{\mcal{H}}}_{\mcal{H}}^{2} \leq dR_{\mcal{H}} $ and
    \begin{align}
    \frac{1}{T - T_0}\int_{T_{0}}^{T}\E\bracket{\norm{f_{\mcal{H}}(X_{t}, t) - f_{*}(X_{t}, t)}_{2}^{2}\mathbb{I}\curly{\norm{X_{t}}_{2}\leq {R}}}{\rm d}t \leq dA^{2}(R_{\mcal{H}}, {R}), \label{eq:l_2 approx}
    \end{align}
    where $A(R_{\mcal{H}}, R) \coloneqq  c_{1}(d)\Lambda(R)\parenthesis{\frac{\sqrt{R_{\mcal{H}}}}{\Lambda(R)}}^{-\frac{2}{d-1}}\log\parenthesis{\frac{\sqrt{R_{\mcal{H}}}}{\Lambda(R)}}$ and $ \Lambda(R) = \mathcal{O}(R^{2}) $.
\end{Theorem}
Theorem \ref{thm:approx-score-L2} provides an approximation result of the target function by the RKHS under the $L^{2}$ norm. For each given $ R $,  $ R_{\mathcal{H}} $ is chosen large enough such that $ A(R_\mathcal{H}, R) $ is arbitrarily small.  We provide here a proof  sketch of Theorem~\ref{thm:approx-score-L2}. We first construct an auxiliary function $ \tilde{f}_{*}(x, t) \coloneqq f_{*}(x, \abs{t} + T_{0}) $. One can show that $ \tilde{f}_{*} $ is Lipschitz continuous in $ (x,t) \in \R^{d+1} $. Then for each coordinate $ i $, we apply an approximation result on RKHS for Lipschitz functions over a $L^{\infty}$-ball \citep[Proposition 6]{bach2017breaking} 
to find a function that approximates $ \tilde{f}_{*}^{i} $ well. Since the NTK is not a translation invariant kernel, we need to construct a shifted NTK such that $ f_{\mcal{H}}^{i} \in \mcal{H}_{1} $ is close to $ f_{*}^{i} $ after translation. The rest is to show that $ f_{\mcal{H}} = (f^{i}_{\mcal{H}})_{i = 1}^{d}  $ lies in the vector-valued RKHS $ \mcal{H} $. 

\begin{proof}[Proof of Theorem \ref{thm:approx-score-L2}.]
    It suffices to find a function $ f_{\mcal{H}}: \R^{d+1} \to \R^{d} $ in the RKHS induced by $ {K} $ such that $ \norm{f_{\mcal{H}}}_{\mcal{H}}^{2} \leq dR_{\mcal{H}}$ and for all $R \geq T - T_0$,
    \begin{align}
    \sup_{\norm{x}_{\infty} \leq R}\,\,\sup_{t \in [T_{0}, T]}\norm{f_{*}(x, t) - f_{\mcal{H}}(x, t)}_{\infty} \leq A(R_{\mcal{H}}, R).\label{eq:L_inf approx}
    \end{align}
    To see \eqref{eq:L_inf approx} leads to \eqref{eq:l_2 approx}, we notice that
    \begin{align}
        \norm{f_{\mcal{H}}(X_{t}, t) - f_{*}(X_{t}, t)}_{2}^{2}\mathbb{I}\curly{\norm{X_{t}}_{2}\leq {R}}  \leq d\sup_{\norm{x}_{\infty} \leq {R}}\sup_{t \in [T_{0}, T]}\norm{f_{\mcal{H}}(x, t) - f_{*}(x, t)}_{\infty}^{2} \leq dA^{2}(R_{\mcal{H}}, {R}), \label{eq:l_2 approx pre}
    \end{align}
    for all $ {R} \geq T - T_{0} $ and $ t \in [T_{0}, T]$. Integration on both sides of \eqref{eq:l_2 approx pre} over $[T_0, T]$ results in
    \begin{align*}
         \int_{T_{0}}^{T}\E\bracket{\norm{f_{\mcal{H}}(X_{t}, t) - f_{*}(X_{t}, t)}_{2}^{2}\mathbb{I}\curly{\norm{X_{t}}_{2}\leq {R}}}{\rm d}t \leq d(T - T_{0})A^{2}(R_{\mcal{H}}, {R}).
    \end{align*}
    Dividing both sides by $T - T_0$ completes the proof. 

    To prove \eqref{eq:L_inf approx}, we need the definition of two auxiliary kernels without a bias term. Let $\tilde{\mcal{H}}_1$ be the real-valued RKHS induced by the scalar-valued NTK $ \tilde{\kappa}:\R^{d+1} \times \R^{d+1} \to \R $ defined as
    \begin{align*}
    \tilde{\kappa}(z, \tilde{z}) \coloneqq z^{\top}\tilde{z}\E\bracket{\mathbb{I}\curly{w_{1}(0)^{\top}z \geq 0}\mathbb{I}\curly{w_{1}(0)^{\top}\tilde{z} \geq 0}}.
    \end{align*}
    Similarly, let  $\tilde{\mcal{H}}$ be the vector-valued RKHS induced by the matrix-valued NTK $ \tilde{K}: \R^{d+1}\times \R^{d+1} \to \R^{d \times d} $ defined as
    \begin{align*}
        \tilde{K}(z, \tilde{z})  = \tilde{\kappa}(z, \tilde{z})I_{d}.
    \end{align*}
    We also consider an auxiliary target function $ \tilde{f}_{*} :\R^{d} \times \R \to \R^{d} $ by $ \tilde{f}_{*}(x, t) \coloneqq f_{*}(x, \abs{t} + T_{0})$. By Assumptions \ref{ass:lip-f*} and Lemma \ref{lemma:lip-t} in Appendix \ref{sec:pf of lip-t}, the function $ f_{*}(x, t) $ is $ \beta_1 $-Lipschitz in $ x $ and $ \beta_2(R) $-Lipschitz in $ t $ for all $ \norm{x}_{\infty} \leq R$ and $ t \in [T_{0},  T]$; so is each coordinate map. Since $\norm{f_*(x, t)}_2 \leq D$ for any $(x, t) \in \R^{d} \times [T_0, T]$, it follows $ \sup_{\norm{(x, t)}_{\infty} \leq R}\norm{\tilde{f}_{*}(x, t)}_{2}\leq D $. Moreover, for any two pairs $ \norm{(x, t)}_{\infty}, \norm{(x', t')}_{\infty} \leq R $, it holds that
    \begin{align}\label{eq:approx-inf}
    \norm{\tilde{f}_{*}(x, t) - \tilde{f}_{*}(x', t')}_{2}  & \leq \norm{\tilde{f}_{*}(x, t) - \tilde{f}_{*}(x', t)}_{2} + \norm{\tilde{f_{*}}(x', t) - \tilde{f}_{*}(x', t')}_{2} \nonumber \\ & = \norm{f_{*}(x, \abs{t} + T_{0}) - f_{*}(x', \abs{t} + T_{0})}_{2} +  \norm{f_{*}(x', \abs{t} + T_{0}) - f_{*}(x', \abs{t'} + T_{0})}_{2} \nonumber \\ & \leq \beta_1\norm{x - x'}_{2} + \beta_2(R)\abs{\abs{t} - \abs{t'}} \nonumber \\ & \leq \parenthesis{\beta_1 + \beta_2(R)}\norm{(x, t) - (x', t')}_{2}.
    \end{align}
    For each coordinate $ i \in [d] $,  it follows from \citep[Proposition 6]{bach2017breaking} that there exists a function $ \tilde{f}_{\tilde{\mcal{H}}_{1}}^{i} \in \tilde{\mcal{H}}_{1} $ with $ \norm{\tilde{f}^{i}_{\tilde{\mcal{H}}_{1}}}_{\tilde{\mcal{H}}_{1}}^{2} \leq R_{\mcal{H}} $ such that
    \begin{align*}
    \sup_{\norm{(x, t)}_{\infty} \leq R}\abs{\tilde{f}_{*}^{i}(x, t) - \tilde{f}^{i}_{\tilde{\mcal{H}}_{1}}(x, t)} \leq A(R_{\mcal{H}}, R) \coloneqq  c_{1}(d)\Lambda(R)\parenthesis{\frac{\sqrt{R_{\mcal{H}}}}{\Lambda(R)}}^{-\frac{2}{d-1}}\log\parenthesis{\frac{\sqrt{R_{\mcal{H}}}}{\Lambda(R)}}. 
    \end{align*}
    where $\Lambda(R) = \max\curly{D, R\left\lbrace\beta_1 + \beta_2(R)\right\rbrace}$.
    
    Let $ f^{i}_{\mcal{H}}(x, t) \coloneqq \tilde{f}_{\tilde{\mcal{H}}_{1}}^{i}(x, t - T_{0}) $. For all coordinate $i \in [d]$ we have
    \begin{align*}
        \sup_{\norm{x}_{\infty} \leq R}\,\,\sup_{t \in [T_{0}, R + T_{0}]}\abs{f_{*}^{i}(x, t) - f_{\mcal{H}}^{i}(x, t)} \leq A(R_{\mcal{H}}, R).
    \end{align*}
    Note that $ f^{i}_{\mcal{H}}:\R^{d+1} \to \R $ lies in the RKHS induced by the kernel $ {\kappa}((x, t), (x', t')) = \tilde{\kappa}((x, t-T_{0}), (x', t' - T_{0})) $ and $ \norm{f^{i}_{\mcal{H}}}_{\mathcal{H}_1} = \norm{\tilde{f}^{i}_{\tilde{\mcal{H}}_{1}}}_{\tilde{\mcal{H}}_{1}} $. 
    We next show that $ f_{\mcal{H}} = (f^{1}_{\mcal{H}}, \dots, f^{d}_{\mcal{H}})$ is in the RKHS induced by $ K $. Since each coordinate of $ f^{i}_{\mcal{H}} $ lies in the RKHS induced by $ {\kappa} $, by relabeling data points, without loss of generality, it suffices to consider
    \begin{align*}
    f^{i}_{\mcal{H}}(\cdot) = \sum_{p = 1}^{P}\alpha_{p}^{i}{\kappa}((x, t)_{p}, \cdot), \quad (x, t)_{p} \in \R^{d+1}, \alpha_{p}^{i} \in \R.
    \end{align*} 
    It follows
    \begin{align*}
    f_{\mcal{H}}(\cdot) = \sum_{i = 1}^{d}{f}_{\mcal{H}}^{i}(\cdot)\mathbf{e}_{i} = \sum_{i = 1}^{d}\parenthesis{\sum_{p = 1}^{p}\alpha_{p}^{i}\kappa((x, t)_{p}, \cdot)}\mathbf{e}_{i} = \sum_{p = 1}^{P}{K}((x, t)_{p}, \cdot)\parenthesis{\sum_{i = 1}^{d}\alpha_{p}^{i}\mathbf{e}_{i}} \in \mcal{H}.
    \end{align*}
    Moreover, reproducing property implies
    \begin{align}
    \norm{f_{\mcal{H}}}_{\mcal{H}}^{2} & = \inpro{\sum_{p = 1}^{P}{K}((x, t)_{p}, \cdot)\parenthesis{\sum_{i = 1}^{d}\alpha_{p}^{i}\mathbf{e}_{i}}, \sum_{q = 1}^{P}{K}((x, t)_{q}, \cdot)\parenthesis{\sum_{k = 1}^{d}\alpha_{q}^{k}\mathbf{e}_{k}}}_{\mcal{H}} \nonumber \\ & = \sum_{p, q}\sum_{i, k}\alpha_{p}^{i}\alpha_{q}^{k}\mathbf{e}_{i}^{\top}{K}((x, t)_{p}, (x, t)_{q})\mathbf{e}_{k} \nonumber \\ & = \sum_{i = 1}^{d}\sum_{p, q}\alpha_{p}^{i}\alpha_{q}^{i}{\kappa}((x, t)_{p}, (x, t)_{q}) \nonumber \\ & = \sum_{i = 1}^{d}\inpro{\sum_{p = 1}^{P}\alpha_{p}^{i}{\kappa}((x, t)_{p}, \cdot), \sum_{q = 1}^{P}\alpha_{q}^{i}{\kappa}((x, t)_{q}, \cdot)}_{\mcal{H}_1} \nonumber \\ & = \sum_{i = 1}^{d}\norm{f^{i}_{\mcal{H}}}_{\mathcal{H}_1}^{2} =  \sum_{i = 1}^{d}\norm{\tilde{f}^{i}_{\tilde{\mcal{H}}_{1}}}_{\tilde{\mcal{H}}_{1}}^{2} \leq dR_{\mcal{H}}.
    \end{align}
    Therefore, we have found a function $ f_{\mcal{H}}$ such that $ \norm{f_{\mcal{H}}}_{\mcal{H}}^{2} \leq dR_{\mcal{H}}$ and \eqref{eq:L_inf approx} holds.
\end{proof}

\subsection{Coupling}\label{subsec:coupling}
This subsection provides a coupling argument to control the error between the neural network training and the kernel regression.
The next assumption is on the minimum eigenvalue of the Gram matrix $H$ of the kernel $ K $ and is standard in literature \citep{du2018gradient,bartlett2021deep,nguyen2021tight,suh2024survey}.

\begin{Assumption}\label{ass:gram-eigen}
	There exists a constant $ \lambda_{0}\geq 1 $,  dependent on $d$, such that the smallest eigenvalue  satisfies $\lambda_{\min}(H) \geq \lambda_{0}$, with probability at least $ 1 - \delta_2(d) $, and $\delta_2 \to 0$ as $d$ increases. 
\end{Assumption}

As shown in the literature of deep learning theory \citep{allen2019learning,arora2019fine,liu2022loss}, the Gram matrix~$ H $ is a fundamental quantity that determines the convergence rate of neural network optimization. {We also remark that Assumption \ref{ass:gram-eigen} is usually satisfied with a sample-dependent lower bound $\lambda_0$; see Lemma \ref{lemma:H-Hii-eigen} in the appendix for a justification and see also \cite{nguyen2021tight} for analysis of scalar-valued NTK.}  Now we are ready to state our main theorem for the coupling error. Let us denote {$ C_{\max} = \sqrt{R^{2} + (T - T_{0})^{2}} $} and recall $\delta_1$ as defined in Lemma \ref{lemma:sampling-prob}.

\begin{Theorem}[Coupling Error]\label{thm:coupling}
    Suppose Assumptions \ref{ass:bounded target} and \ref{ass:gram-eigen} hold. If we set $ m = \Omega\parenthesis{\frac{(dN)^{6}C_{\max}^{6}}{\lambda_{0}^{10}\delta^{3}\Delta^{2}}}$,  initialize $ a_{r}^{i}$  and $ w_{r}(0)  $  as in \eqref{eq:init a} and \eqref{eq:init w},  initialize $\gamma(0) = 0$, and set $ \eta = \mcal{O}\parenthesis{\frac{\lambda_{0}}{(dN)^{2}C_{\max}^{4}}} $, then with probability at least $ 1 - \delta - \delta_1 $, for all $ \tau \geq 0 $ and $ r = 1, \dots, m $ simultaneously, we have
    \begin{align*}
        & \frac{1}{T - T_{0}}\int_{T_{0}}^{T}\E\bracket{\norm{\Pi_{D}\parenthesis{f_{\bf W(\tau)}(X_{t}, t)} - f^{K}_{\tau}(X_{t}, t)}_{2}^{2}\mathbb{I}\curly{\norm{X_{t}}_{2}\leq {R}}}{\rm d}t 
        \lesssim \frac{\Delta D^{2}}{T - T_{0}} + \frac{d^7 N^7C_{\max}^{12}D^4}{\sqrt{m}\lambda_0^2\delta^2\Delta^2}. 
    \end{align*}
\end{Theorem}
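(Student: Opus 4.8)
\textbf{Proof proposal for Theorem~\ref{thm:coupling}.}

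The plan is to transfer, iteration by iteration, the closeness between the neural network $f_{\bf W(\tau)}$ and the kernel predictor $f_\tau^K$, and then integrate this pointwise-in-$(x,t)$ control against the measure on $\{\norm{X_t}_2\le R\}\times[T_0,T]$. The heart of the argument is a standard over-parameterization coupling bound in the NTK regime, but the score-matching setup forces two nontrivial modifications that account for the two terms in the final bound: (i) the extra time coordinate makes the relevant ``diameter'' $C_{\max}=\sqrt{R^2+(T-T_0)^2}$ appear in all scaling, and the data may have $t_j$ as small as $T_0$, against which the ReLU features degenerate — this is precisely why the bias shift $t-T_0$ was baked into \eqref{eq:nn-def}, and why Assumption~\ref{ass:data} restricts to $t_j\ge T_0+\Delta$; the residual mass of size $\Delta/(T-T_0)$ (times $D^2$, the squared radius of the projected output) is what produces the leading $\frac{4\Delta D^2}{T-T_0}$ term; (ii) the projection $\Pi_D$ is $1$-Lipschitz, so $\norm{\Pi_D(f_{\bf W(\tau)})-f_\tau^K}_2\le \norm{\Pi_D(f_{\bf W(\tau)})-\Pi_D(f_\tau^K)}_2 + \norm{\Pi_D(f_\tau^K)-f_\tau^K}_2$; on the event of Assumption~\ref{ass:data} the kernel iterate stays in the $D$-ball so the second piece vanishes, and the first is bounded by $\norm{f_{\bf W(\tau)}-f_\tau^K}_2$.

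First I would set up the linearization: write the neural network output as $f_{\bf W(\tau)}(x,t)=f_{\bf W(0)}(x,t)+\langle \nabla_{\bf W}f_{\bf W(0)}(x,t),\,{\bf W}(\tau)-{\bf W}(0)\rangle + e_\tau(x,t)$, where $e_\tau$ collects the error from the activation patterns $\mathbb{I}\{w_r(\tau)^\top z\ge 0\}$ differing from $\mathbb{I}\{w_r(0)^\top z\ge 0\}$. Using Assumptions~\ref{ass:data} and~\ref{ass:gram-eigen}, together with the learning rate $\eta=\mcal{O}(\lambda_0/((dN)^2C_{\max}^4))$ and the width $m=\Omega((dN)^6C_{\max}^6/(\lambda_0^{10}\delta^3C_{\min}^2))$, one shows (this is the by-now classical argument of \cite{du2018gradient,arora2019fine}, adapted to the $(d+1)$-dimensional, $d$-output, $C_{\max}$-bounded-input setting) that with probability $\ge 1-\delta$: the training loss $\widehat{\mcal L}({\bf W}(\tau))$ decays geometrically with rate governed by $\lambda_0$; each $\norm{w_r(\tau)-w_r(0)}_2$ stays $\tilde{\mathcal{O}}(\mathrm{poly}(dN,C_{\max})/(\sqrt m\,\lambda_0))$ small; and consequently the fraction of neurons whose sign pattern ever flips on any training point is small. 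The same width and step-size choices make the kernel iterate $\gamma(\tau)$ — with $\gamma(0)$ chosen so that $f_0^K$ matches the linearized network at initialization, per \eqref{eq:gamma-update} — track $H\gamma(\tau)\to y$ geometrically as well. Matching the two recursions, the difference $f_{\bf W(\tau)}(x,t)-f_\tau^K(x,t)$ at any fixed $(x,t)$ with $\norm{(x,t-T_0)}\le C_{\max}$ and $t-T_0\ge C_{\min}$ is controlled by (a) the initialization discrepancy $f_{\bf W(0)}-f_0^K$, which is $\tilde{\mathcal O}(1/\sqrt m)$ by an anti-concentration/Hoeffding argument over the $m$ neurons, and (b) the accumulated activation-pattern error $e_\tau$ summed over the geometric trajectory, which is also $\tilde{\mathcal O}(\mathrm{poly}(dN,C_{\max})/(\sqrt m\,\lambda_0))$ because both the per-step residuals and the number of ``effective'' steps are polynomially bounded. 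Squaring and collecting constants gives the uniform-in-$\tau$ bound $\sup_{(x,t)}\norm{f_{\bf W(\tau)}(x,t)-f_\tau^K(x,t)}_2^2 = \tilde{\mathcal O}(d^{10}N^9C_{\max}^{12}/(m\lambda_0^2\delta^4C_{\min}^2))$ on the good event (the precise exponents come from carefully propagating $C_{\max}$ through the gradient norms and $C_{\min}$ through the lower bound on $\|z\|$ that stabilizes the ReLU patterns).

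Next I would do the integration/truncation step. On the event of Assumption~\ref{ass:data} (probability $\ge 1-\delta_1$, absorbed into the overall $\delta$) every training point satisfies $t_j\ge T_0+\Delta$ and $\norm{X_{t_j}}_2\le R$, so the kernel predictor $f_\tau^K$ is well-behaved and $\Pi_D$-stable as noted above. Split $[T_0,T]$ into $[T_0,T_0+\Delta]$ and $[T_0+\Delta,T]$. On the first piece the integrand is at most $(2D)^2=4D^2$ pointwise (both $\Pi_D(f_{\bf W(\tau)})$ and $f_*=\E[X_0\mid X_t]$ lie in the $D$-ball by Assumption~\ref{ass:bounded target}, after also projecting the kernel target), contributing $\le 4\Delta D^2/(T-T_0)$ — wait, more precisely we bound $\norm{\Pi_D(f_{\bf W(\tau)})-f_\tau^K}_2^2\le 4D^2$ there using that $f_\tau^K$ (or its projection, matching the convention used to define $f_\tau^K$) is also $D$-bounded — giving exactly the first term. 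On the second piece $t-T_0\ge\Delta=C_{\min}$ and $\norm{(X_t,t-T_0)}_2\le\sqrt{R^2+(T-T_0)^2}=C_{\max}$ on $\{\norm{X_t}_2\le R\}$, so the pointwise bound from the coupling step applies and, since the integrand is bounded by a constant-independent-of-$(x,t)$ quantity, integrating against the probability measure (mass $\le 1$) just reproduces that bound, yielding the second term $\tilde{\mathcal O}(d^{10}N^9C_{\max}^{12}/(\sqrt m\,\lambda_0^2\delta^4C_{\min}^2))$. Adding the two pieces and the $\frac{1}{T-T_0}$ normalization finishes the proof; the factor $4$ on the left-hand side of the theorem statement is just slack carried from the $1$-Lipschitz projection and triangle-inequality steps.

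The main obstacle I expect is the uniform-in-$\tau$ control under the weakened, sample-dependent hypotheses $\lambda_0=\lambda_0(d,N)$ and the failure probabilities $\delta_1,\delta_2$: one must choose $m$ large enough that, simultaneously for all $\tau\ge0$, the neurons stay in the region where the linearization is valid, and this requires the geometric decay of the residual to kick in before the cumulative weight movement exhausts the $\tilde{\mathcal O}(1/\sqrt m)$ budget — the interplay between $\eta$, $\lambda_0$, $C_{\max}$, $C_{\min}$ and $m$ has to be tuned so that all the polynomial factors land exactly as in the stated bound. The second delicate point is handling the non-smooth ReLU near the $t=T_0$ boundary: without the $C_{\min}=\Delta$ separation, $\|z\|$ can be arbitrarily small and the activation patterns arbitrarily unstable; the $\Delta$-truncation and the resulting $\frac{4\Delta D^2}{T-T_0}$ price is the clean way around this, and verifying that nothing worse than $C_{\min}^{-2}$ dependence appears is the part that needs care.
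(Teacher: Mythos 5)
There is a genuine gap at the center of your argument: you claim that the classical per-input NTK coupling analysis yields a \emph{uniform} bound $\sup_{(x,t)}\norm{f_{\bf W(\tau)}(x,t)-f^{K}_{\tau}(x,t)}_{2}^{2}=\tilde{\mathcal{O}}(\cdot)$ over the whole truncated region, and then integrate it trivially. But the standard argument (Du et al., Arora et al.) controls the activation-flip count $\sum_{r}\mathbb{I}\{\mathbb{I}_r(\tau)\neq\mathbb{I}_r(0)\}$ and the kernel concentration $\frac{1}{m}\sum_r a_r^i a_r^k z_j^\top z\,\mathbb{I}\{\cdot\}\approx K(z_j,z)$ only \emph{for each fixed} $(x,t)$, with high probability over the random initialization; for a fixed draw of $\{w_r(0)\}$ there are inputs $z$ in the continuum $\{\norm{x}_2\le R\}\times[T_0+\Delta,T]$ near which many neurons sit at their activation boundary, so the pointwise statement does not automatically hold simultaneously over all $(x,t)$. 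Making it uniform would require an additional covering/VC-type argument that you do not sketch. The paper avoids this entirely: its Lemmas on $f_{\bf W(\tau)}$ vs.\ the linearized network $f^{\rm lin}_{\bar{\bf W}(\tau)}$ and on $f^{\rm lin}_{\bar{\bf W}(\tau)}$ vs.\ $f^{K}_{\tau}$ bound only the \emph{integrated} ($L^2(P_{X_t}\otimes{\rm d}t)$) error, by taking the expectation of the flip count and of the kernel deviation over the initialization, integrating over the input region, and then invoking Fubini and Markov. This is precisely why the theorem is stated as an $L^2$ coupling bound (and advertised as an improvement over fixed-input results); your ``integrate the sup bound'' step therefore rests on an unproven claim, and without it the second term of the bound is not established. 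Two smaller symptoms of the same looseness: your sup bound carries $1/m$ while your final term carries $1/\sqrt{m}$ (they cannot both be right), and your assertion that ``the kernel iterate stays in the $D$-ball'' on the event of Assumption~\ref{ass:data} is not justified by that assumption (it concerns only the sampled $(t_j,X_{t_j})$, not the range of $f^{K}_{\tau}$).

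Your treatment of the $[T_0,T_0+\Delta]$ strip and of the role of $C_{\min}=\Delta$, $C_{\max}=\sqrt{R^2+(T-T_0)^2}$ matches the paper's decomposition, and your overall architecture (GD convergence, small weight movement, linearization, matching the kernel recursion through the initialization $\gamma(0)$) is the right skeleton; the missing ingredient is specifically the mechanism that converts high-probability pointwise coupling into control over the continuum of inputs.
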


The complete proof  is deferred to Appendix \ref{sec:proof-couploing}. Theorem \ref{thm:coupling} controls the error between the neural network training and the kernel regression. One can choose $ m = {\rm Poly}(d, N, R, \Delta, \lambda_{0}, \delta) $ and optimize over $ R $ and $ \Delta $ to make the error term small. For each {\it fixed} input data sample, \cite[Theorem 3.2]{arora2019exact} shows that the coupling error is small with high probability. Our analysis improves this result by showing that the $L^2$ coupling error also remains small with high probability.
To prove Theorem \ref{thm:coupling}, we first show that the training loss (\ref{eq:training-loss}) converges with a linear rate ({\it cf.} Theorem~\ref{thm:gd-conv}). Next, we show that $ f_{{\bf W}(\tau)} $ performs similarly as a linearized function $ f_{{\bar{\bf W}}(\tau)}^{\rm lin} $ at each iteration $ \tau $. Finally, we argue that the $ L^{2} $ loss between the  $  f_{{\bar{\bf W}}(\tau)}^{\rm lin} $ and $ f_{\tau}^{K} $ is small because of the concentration of kernels and a carefully chosen initialization~$\gamma(0)$ depending on the neural network initialization.

\subsection{Label Mismatch}\label{subsec:label}
In this subsection, we provide an upper bound for the error term induced by the label mismatch. Recall that $ f_{\tau}^{K} $ in \eqref{eq:kernel-predictor} is trained by the kernel regression on the  dataset $ S $ while $ \tilde{f}_{\tau}^{K} $ is trained on the virtual dataset $ \tilde{S} $. We control the error induced by the label mismatch in the following theorem.

\begin{Theorem}[Label Mismatch]\label{thm:mismatch}
    Suppose Assumptions \ref{ass:bounded target} and \ref{ass:gram-eigen} hold. With probability at least $ 1 - \delta - \delta_1$ it holds simultaneously for all $ \tau $ that
    {\small
    \begin{align*}
        \frac{1}{T - T_{0}}\int_{T_{0}}^{T}\E\bracket{\norm{f_{\tau}^{K}(x, t) - \tilde{f}_{\tau}^{K}(x,t)}_{2}^{2}\mathbb{I}\curly{\norm{X_{t}}_{2} \leq R}}dt   \leq dA^{2}(R_{\mcal{H}}, R) + C_{0}\parenthesis{\sqrt{dA^{2}(R_{\mcal{H}}, R)\Gamma_{\delta}} + \Gamma_{\delta}}, 
    \end{align*}
    }
    \normalsize
    where
    \begin{align*}
    \Gamma_{\delta} & \coloneqq 4d\parenthesis{\frac{\sqrt{d}A(R_{\mcal{H}}, R)C_{\max}}{\lambda_{0}}\log^{3/2}\parenthesis{\frac{eC_{\max}(dN)^{3/2}A(R_{\mcal{H}}, R)}{\lambda_{0}}} + \frac{1}{\sqrt{N}}}^{2} \\ & \qquad + \frac{d^{2}A^{2}(R_{\mcal{H}}, R)C_{\max}^{2}}{\lambda_{0}^{2}}\parenthesis{\log(1/\delta) + \log\parenthesis{\log N}},
    \end{align*}
    $C_0$ is a constant defined in Lemma \ref{lemma:local-Rad} and $C_{\max}$ is defined in Theorem \ref{thm:coupling}.
\end{Theorem}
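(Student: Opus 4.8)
The plan is to exploit the linearity of kernel gradient descent in its training labels. The predictors $f_\tau^K$ and $\tilde f_\tau^K$ are generated by the \emph{same} recursion $\gamma(\tau+1)=\gamma(\tau)-\eta(H\gamma(\tau)-y)$ on the \emph{same} inputs $z_j=(X_{t_j},t_j-T_0)$, differing only in the stacked label vector — $y$ built from $X_{0,j}$ versus $\tilde y$ built from $\tilde X_{0,j}$ — and, by the ``proper'' initialization, they start from the same $\gamma(0)$. Hence $h_\tau:=f_\tau^K-\tilde f_\tau^K$ is precisely the zero‑initialized kernel‑GD predictor associated with the label‑difference vector $\xi$, whose $j$‑th block is, by construction, $\xi_j=X_{0,j}-\tilde X_{0,j}=f_*(X_{t_j},t_j)-f_{\mathcal H}(X_{t_j},t_j)$. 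Writing $P_\tau:=I-(I-\eta H)^\tau$ (which commutes with $H$), one gets $h_\tau(z_\ell)=(P_\tau\xi)_\ell$ and $\norm{h_\tau}_{\mathcal H}^2=\xi^\top P_\tau^2 H^{-1}\xi$; since $\eta$ is small enough that $0\preceq P_\tau\preceq I$, these give, \emph{uniformly over all iterations $\tau$},
\[
\tfrac1N\textstyle\sum_{\ell=1}^N\norm{h_\tau(z_\ell)}_2^2\le\tfrac1N\norm{\xi}_2^2,\qquad \norm{h_\tau}_{\mathcal H}\le\norm{\xi}_2/\sqrt{\lambda_0},
\]
the second bound using Assumption~\ref{ass:gram-eigen}. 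Invoking the pointwise (uniform over $\norm{x}_2\le R$) form of the approximation guarantee underlying Theorem~\ref{thm:approx-score-L2} — the way $f_{\mathcal H}$ is built from an $L^\infty$ Lipschitz‑approximation lemma — together with Assumption~\ref{ass:data} (so every sample obeys $\norm{X_{t_j}}_2\le R$), yields $\tfrac1N\norm{\xi}_2^2\le\sup_{t,\,\norm{x}_2\le R}\norm{f_*(x,t)-f_{\mathcal H}(x,t)}_2^2\lesssim dA^2(R_{\mathcal H},R)$, hence also $\norm{h_\tau}_{\mathcal H}\lesssim\sqrt{Nd/\lambda_0}\,A(R_{\mathcal H},R)$ and $\norm{h_\tau}_\infty\le\norm{h_\tau}_{\mathcal H}\sup_z\sqrt{\kappa(z,z)}\lesssim\norm{h_\tau}_{\mathcal H}\,C_{\max}$.

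With $h_\tau$ thus pinned into a fixed, data‑dependent ball of the vector‑valued RKHS $\mathcal H$ with controlled empirical error, the second step upgrades this to a bound on the population quantity $\tfrac1{T-T_0}\int_{T_0}^T\E[\norm{h_\tau(X_t,t)}_2^2\mathbb{I}\{\norm{X_t}_2\le R\}]\,{\rm d}t$. I would apply the vector‑valued localized‑Rademacher ``optimistic rate'' inequality of \cite[Theorem~1]{reeve2020optimistic} to the class $\mathcal G:=\{g\in\mathcal H:\norm{g}_{\mathcal H}\le B,\ \norm{g}_\infty\le B C_{\max}\}$ with $B\asymp\sqrt{Nd/\lambda_0}\,A(R_{\mathcal H},R)$, under the squared loss $g\mapsto\norm{g(x,t)}_2^2$; since the $z_j$ are i.i.d.\ from the ($R$‑truncated) population law and $h_\tau\in\mathcal G$ for every $\tau$, this produces, with probability at least $1-\delta$ and simultaneously over all $\tau$,
\[
\tfrac1{T-T_0}\int_{T_0}^T\E\bigl[\norm{h_\tau(X_t,t)}_2^2\mathbb{I}\{\norm{X_t}_2\le R\}\bigr]{\rm d}t\le \tfrac1N\textstyle\sum_j\norm{h_\tau(z_j)}_2^2+C_0\Bigl(\sqrt{\tfrac1N\textstyle\sum_j\norm{h_\tau(z_j)}_2^2\,\Gamma_\delta}+\Gamma_\delta\Bigr),
\]
with $\Gamma_\delta$ equal to the displayed combination: its first block is the local Rademacher complexity of $\mathcal G$ — an RKHS ball whose ``effective radius'' is $\propto A(R_{\mathcal H},R)$ once $\sqrt N$ cancels against $\sqrt{\Tr H}$, with inputs bounded by $C_{\max}$, yielding the $d\log^{3/2}(\cdots)A C_{\max}/\lambda_0$ term plus an $N^{-1/2}$ baseline — and its second block is the Talagrand/Bernstein confidence correction $\asymp\norm{h_\tau}_\infty^2(\log(1/\delta)+\log\log N)/N$, the $\log\log N$ being the price of the peeling/union bound over all iterations $\tau$. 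Substituting $\tfrac1N\sum_j\norm{h_\tau(z_j)}_2^2\le dA^2(R_{\mathcal H},R)\le dA(R_{\mathcal H},R)$ (one may take $A(R_{\mathcal H},R)\le1$ by enlarging $R_{\mathcal H}$, cf.\ the remark after Theorem~\ref{thm:approx-score-L2}) collapses the right‑hand side to $dA(R_{\mathcal H},R)+C_0(\sqrt{dA(R_{\mathcal H},R)\Gamma_\delta}+\Gamma_\delta)$; folding the failure events of Assumptions~\ref{ass:data} and~\ref{ass:gram-eigen} into $\delta$ finishes the argument.

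The main obstacle is this second step: running a localized‑Rademacher/optimistic‑rate argument in a setting where the output is $\R^d$‑valued (so the \emph{vector‑valued} localization of \cite{reeve2020optimistic} is essential, not a scalar version), where the inputs $X_{t_j}$ are controlled only after truncation to $\{\norm{X_t}_2\le R\}$ (hence the pervasiveness of $C_{\max}$, $R$, and Assumption~\ref{ass:data} inside $\Gamma_\delta$), and — crucially — where the inequality must hold \emph{uniformly over all GD iterations $\tau$}, which forces a peeling over a geometric grid of RKHS radii and risk levels and is what produces the $\log\log N$ factor and the exact exponents of $d$, $C_{\max}$, $\lambda_0$ in $\Gamma_\delta$. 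A secondary, easily‑missed point is that the first‑step reduction genuinely requires the two kernel regressions to share their initialization (so that $h_\tau$ is \emph{exactly} a zero‑initialized kernel‑GD predictor on $\xi$), and that controlling $\tfrac1N\norm{\xi}_2^2$ by $dA^2(R_{\mathcal H},R)$ uses the \emph{pointwise} $L^\infty$ strengthening of the approximation bound rather than merely the $L^2$ statement of Theorem~\ref{thm:approx-score-L2}.
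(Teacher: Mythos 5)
Your proposal is correct and follows essentially the same route as the paper's proof: exploit linearity of kernel GD with shared initialization to write $f_\tau^K-\tilde f_\tau^K$ as the zero-initialized kernel-GD predictor on the label difference $y-\tilde y$, bound the empirical mismatch by $dA^2(R_{\mcal H},R)$ via the pointwise ($L^\infty$) approximation bound behind Theorem~\ref{thm:approx-score-L2}, bound the RKHS norm through $\lambda_0$, and then invoke the vector-valued optimistic-rate bound of Reeve--Kab\'an on a truncated RKHS ball so that the guarantee holds uniformly over the class and hence over all $\tau$. The only cosmetic deviations are that the indicator $\mathbb{I}\{\norm{x}_2\le R\}$ should be folded into the function class (as in the paper's $\mcal F^R_\rho$) rather than into the sampling law, that the $\log\log N$ term comes from the Reeve--Kab\'an bound itself rather than from a union over iterations, and that your (correct) RKHS-norm bound carries $1/\sqrt{\lambda_0}$ where the paper's stated $\Gamma_\delta$ uses $1/\lambda_0$.
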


Theorem \ref{thm:mismatch} links the error between $ f_{\tau}^{K} $ and $ \tilde{f}_{\tau}^{K} $ to the approximation error $ A(R_{\mcal{H}}, R) $. The proof of Theorem \ref{thm:mismatch} consists of two parts. We first utilize  the kernel regression structure to show that the predictions of $ f_{\tau}^{K} $ and $ \tilde{f}_{\tau}^{K} $ are similar over all the samples $ \curly{(t_{j}, X_{t_{j}})}_{j=1}^N $. Next, we apply the vector-valued localized Rademacher complexity ({\it cf.}~Lemma \ref{lemma:local-Rad}) to show that the performance of these two functions is also close in terms of the population loss.
The localized Rademacher complexity result is stated as follows.
\begin{Lemma}{\cite[Theorem 1]{reeve2020optimistic}}\label{lemma:local-Rad}
    Let $ \mcal{F} = \curly{f: \R^{d} \times [T_{0}, T] \to [-\beta, \beta]^{d}} $ for some $ \beta \geq 1 $. Take $ \delta \in (0, 1) $ and define
    \begin{align*}
    \Gamma_{\delta}(\mcal{F}) & \coloneqq \parenthesis{2\sqrt{d}\parenthesis{\sqrt{d}\log^{3/2}\parenthesis{e\beta dN}\widehat{\mathcal{R}}_{dN}\parenthesis{\Pi \circ \mcal{F}} + \frac{1}{\sqrt{N}}}}^{2} + \frac{d\beta^{2}}{N}\parenthesis{\log(1/\delta) + \log(\log N)}.
    \end{align*}
    Here, the worst-case empirical Rademacher complexity is defined as
    \begin{align*}
    \widehat{\mcal{R}}_{n}(\Pi \circ \mcal{F}) \coloneqq \sup_{\curly{(z_{\ell}, i_{\ell})}_{\ell = 1}^{n}} \E_{\epsilon}\bracket{\sup_{f \in \mcal{F}}\frac{1}{n}\sum_{\ell = 1}^{n}\epsilon_{\ell}f^{i_{\ell}}(z_{\ell})}, 
    \end{align*}
    where the expectation is taken over  independent Rademacher random variables $\epsilon = (\epsilon_\ell)_{\ell=1}^n$ conditioned on all the given samples $ (z_{\ell}, i_{\ell})\in (\R^{d} \times [T_{0}, T]) \times [d]  $.
    There exists a numerical constant $ C_{0} $  such that with probability at least $ 1 - \delta $, it holds for all $ f \in \mcal{F} $ simultaneously that
    \begin{align*}
    &\frac{1}{T - T_{0}}\int_{T_{0}}^{T}\int \norm{f(x, t)}_{2}^{2} {\rm d}P_{X_{t}}(x)dt \\ & \leq \frac{1}{N}\sum_{j = 1}^{N}\norm{f(X_{t_{j}}, t_{j})}_{2}^{2} + C_{0} \parenthesis{\sqrt{\frac{1}{N}\sum_{j = 1}^{N}\norm{f(X_{t_{j}}, t_{j})}_{2}^{2} \cdot \Gamma_{\delta}(\mcal{F})} + \Gamma_{\delta}(\mcal{F})}.
    \end{align*}
\end{Lemma}

Lemma \ref{lemma:local-Rad} is a result of \cite[Theorem 1]{reeve2020optimistic} by setting $ \mathcal{X} = \R^{d} \times [T_{0}, T] $, $ \mcal{V} = [-\beta, \beta]^{d} $ and $ \mcal{Y} = \curly{0} \subset \R^{d} $, and letting $ \mcal{L}(v, y) = \norm{v}_{2}^{2} \leq d\beta^{2} $. Note that the loss function $ \mcal{L}(\cdot) $ is $ (2\sqrt{d}, 1/2) $-self-bounding Lipschitz as defined in \cite{reeve2020optimistic} since for any $ u, v \in \mcal{V} $, it holds that
\begin{align*}
\abs{\norm{u}_{2}^{2} - \norm{v}_{2}^{2}} = \abs{\norm{u}_{2} - \norm{v}_{2}}\parenthesis{\norm{u}_{2} + \norm{v}_{2}} \leq 2\sqrt{d} \max\curly{\norm{u}_{2}^{2}, \norm{v}_{2}^{2}}^{1/2}\norm{u - v}_{\infty}.
\end{align*}
Now we are ready to prove Theorem \ref{thm:mismatch}.

\begin{proof}
    We first bound the performance of the two kernel predictors $f_\tau^K$ and $\tilde{f}_\tau^K$ on training dataset $S$. Let $u^K(\tau)$ and $\tilde{u}^{K}(\tau)$ be the prediction of $f_\tau^K$ and $\tilde{f}_\tau^K$ on the training input, respectively. We \underline{claim} the following result holds:
    \begin{align}
        \norm{u^{K}(\tau) - \tilde{u}^{K}(\tau)}_{2}^{2} \leq dN A^{2}(R_{\mcal{H}}, R). \label{eq:emp-mismatch}
    \end{align}
    The proof of \eqref{eq:emp-mismatch} is deferred to Appendix \ref{sec:mismatch}. To further derive a bound on the population loss, we will apply Lemma \ref{lemma:local-Rad} to the following functional class:
\begin{align*}
\mcal{F}_{\rho}^{R} \coloneqq \curly{(x, t) \mapsto f(x, t)\mathbb{I}\curly{\norm{x}_{2} \leq R}\vert (x, t) \in \R^{d} \times [T_{0}, T], f \in \mcal{H}, \norm{f}_{\mcal{H}} \leq \rho}.
\end{align*}
Given a dataset $ \curly{(z_{\ell}, i_{\ell})}_{\ell = 1}^{n} $ with $ z_{\ell} = (x_\ell, t_{\ell}) \in \R^d \times [T_0, T]$ and $i_\ell \in [d]$, we define an index set $ L = \curly{\ell: \norm{x_{\ell}}_{2} \leq R} $. 
The empirical Rademacher complexity of $ \mcal{F}_{\rho}^{R} $ can be bounded as
\begin{align}
    \widehat{\mcal{R}}_{n}(\Pi \circ \mcal{F}_{\rho}^{R})&  = \sup_{\curly{(z_{\ell}, i_{\ell})}_{\ell = 1}^{n}} \E_{\epsilon}\bracket{\sup_{\norm{f}_{\mcal{H}} \leq \rho}\frac{1}{n}\sum_{\ell = 1}^{n}\epsilon_{\ell}f^{i_{\ell}}(z_{\ell})\mathbb{I}\curly{\norm{x_{\ell}}_{2} \leq R}} \nonumber \\ & = \sup_{\curly{(z_{\ell}, i_{\ell})}_{\ell = 1}^{n}} \E_{\epsilon}\bracket{\sup_{\norm{f}_{\mcal{H}} \leq \rho}\frac{1}{n}\sum_{\ell \in L}\epsilon_{\ell}f^{i_{\ell}}(z_{\ell})} \nonumber \\ & = \sup_{\curly{(z_{\ell}, i_{\ell})}_{\ell = 1}^{n}} \E_{\epsilon}\bracket{\sup_{\norm{f}_{\mcal{H}} \leq \rho}\frac{1}{n}\sum_{\ell \in L}\epsilon_{\ell}f(z_{\ell})^{\top}{\bf e}_{i_{\ell}}} \nonumber \\ & = \sup_{\curly{(z_{\ell}, i_{\ell})}_{\ell = 1}^{n}} \E_{\epsilon}\bracket{\sup_{\norm{f}_{\mcal{H}} \leq \rho}\frac{1}{n}\sum_{\ell \in L}\epsilon_{\ell}\inpro{f, K(\cdot, z_{\ell}){\bf e}_{i_{\ell}}}_{\mcal{H}}} \label{eq:reproducing}
    \end{align}
Here, Eq.~(\ref{eq:reproducing}) holds due to the reproducing property:
\begin{align*}
\inpro{f, K(\cdot, z)c} = f(z)^{\top}c, \quad \forall  f \in \mcal{H} , c \in \R^{d}.
\end{align*}
By linearity of inner product, we further deduce
    \begin{align}
     \widehat{\mcal{R}}_{n}(\Pi \circ \mcal{F}_{\rho}^{R})& = \sup_{\curly{(z_{\ell}, i_{\ell})}_{\ell = 1}^{n}} \frac{1}{n}\E_{\epsilon}\bracket{\sup_{\norm{f}_{\mcal{H}} \leq \rho}\inpro{f, \sum_{\ell \in L}\epsilon_{\ell}K(\cdot, z_{\ell}){\bf e}_{i_{\ell}}}_{\mcal{H}}} \nonumber \\ & =  \sup_{\curly{(z_{\ell}, i_{\ell})}_{\ell = 1}^{n}} \frac{1}{n}\E_{\epsilon}\bracket{\inpro{\rho \frac{ \sum_{\ell \in L}\epsilon_{\ell}K(\cdot, z_{\ell}){\bf e}_{i_{\ell}}}{\norm{ \sum_{\ell \in L}\epsilon_{\ell}K(\cdot, z_{\ell}){\bf e}_{i_{\ell}}}_{\mcal{H}}}, \sum_{\ell \in L}\epsilon_{\ell}K(\cdot, z_{\ell}){\bf e}_{i_{\ell}}}_{\mcal{H}}} \label{eq:Cauchy}\\ & = \sup_{\curly{(z_{\ell}, i_{\ell})}_{\ell = 1}^{n}} \frac{\rho}{n}\E_{\epsilon}\bracket{\norm{\sum_{\ell \in L}\epsilon_{\ell}K(\cdot, z_{\ell}){\bf e}_{i_{\ell}}}_{\mcal{H}}}, \nonumber
    \end{align}
where we utilize the equality condition of Cauchy-Schwarz inequality to obtain (\ref{eq:Cauchy}). Next, we apply the Jensen's inequality to have
    \begin{align}
     \widehat{\mcal{R}}_{n}(\Pi \circ \mcal{F}_{\rho}^{R})& = \sup_{\curly{(z_{\ell}, i_{\ell})}_{\ell = 1}^{n}} \frac{\rho}{n}\E_{\epsilon}\bracket{\sqrt{\norm{\sum_{\ell \in L}\epsilon_{\ell}K(\cdot, z_{\ell}){\bf e}_{i_{\ell}}}_{\mcal{H}}^{2}}} \nonumber \\ & \leq \sup_{\curly{(z_{\ell}, i_{\ell})}_{\ell = 1}^{n}} \frac{\rho}{n}\sqrt{\E_{\epsilon}\bracket{\norm{\sum_{\ell \in L}\epsilon_{\ell}K(\cdot, z_{\ell}){\bf e}_{i_{\ell}}}_{\mcal{H}}^{2}}} \nonumber 
     \\ &  = \sup_{\curly{(z_{\ell}, i_{\ell})}_{\ell = 1}^{n}} \frac{\rho}{n}\sqrt{\sum_{\ell \in L}\norm{K(\cdot, z_{\ell}){\bf e}_{i_{\ell}}}^{2}_{\mcal{H}}},
     \label{eq:exp-epsilon}
     \end{align}
where we apply the facts that $ \E\bracket{\epsilon_{\ell}\epsilon_{\ell'}} = 0 $ for $ \ell \neq \ell' $ and $ \E\bracket{\epsilon_{\ell}^{2}} = 1 $ to derive (\ref{eq:exp-epsilon}). Finally, we use the reproducing property again to have
     \begin{align}
     \widehat{\mcal{R}}_{n}(\Pi \circ \mcal{F}_{\rho}^{R}) & \leq \sup_{\curly{(z_{\ell}, i_{\ell})}_{\ell = 1}^{n}} \frac{\rho}{n}\sqrt{\sum_{\ell \in L}{\bf e}_{i_{\ell}}^{\top}K(z_{\ell}, z_{\ell}){\bf e}_{i_{\ell}}} \label{eq:diag}\\ & \leq \sup_{\abs{L}}\frac{\rho}{n}\sqrt{\abs{L}C_{\max}^{2}} \leq \frac{\rho C_{\max}}{\sqrt{n}}. \nonumber
\end{align}

We next calculate $\beta$ associated with the function class $\mathcal{F}_\rho^R$. Note that the reproducing property and the Cauchy-Schwarz inequality imply that
\begin{align*}
\beta & = \sup_{(x, t) \in \R^{d}\times [T_{0}, T]}\max_{1 \leq i \leq d}\abs{f^{i}(x, t)}\mathbb{I}\curly{\norm{x}_{2} \leq R} \\ & = \sup_{\norm{x}_{2} \leq R}\sup_{t \in [T_{0}, T]}\max_{1 \leq i \leq d}\abs{\inpro{f, K(\cdot, (x, t)){\bf e}_{i}}_{\mcal{H}}} \\ & \leq\sup_{\norm{x}_{2} \leq R}\sup_{t \in [T_{0}, T]}\norm{f}_{\mcal{H}}\max_{1 \leq i \leq d}\norm{K(\cdot, (x, t)){\bf e}_{i}}_{\mcal{H}}\\ & \leq \rho C_{\max}.
\end{align*}

It remains to find a $ \rho $ such that $ \norm{f_{\tau}^{K} - \tilde{f}_{\tau}^{K}}_{\mcal{H}} \leq \rho$. Note that 
\begin{align*}
\norm{f_{\tau}^{K} - \tilde{f}_{\tau}^{K}}_{\mcal{H}}^{2} & = \norm{\sum_{j = 1}^{N}K((X_{t_{j}}, t_{j}), \cdot)(\gamma_{j}(\tau) - \tilde{\gamma}_{j}(\tau))}_{\mcal{H}}^{2} \\ & = \sum_{j = 1}^{N}\sum_{\ell = 1}^{N}(\gamma_{j}(\tau) - \tilde{\gamma}_{j}(\tau))^{\top}K((X_{t_{j}}, t_{j}), (X_{t_{\ell}}, t_{\ell}))(\gamma_{j}(\tau) - \tilde{\gamma}_{j}(\tau)) \\ & = (\gamma(\tau) - \tilde{\gamma}(\tau))^{\top}H(\gamma(\tau) - \tilde{\gamma}(\tau)) \\ & = (u^K(\tau) - \tilde{u}^K(\tau))^{\top}H^{-1}(u^K(\tau) - \tilde{u}^K(\tau)).
\end{align*}
Note that update rule \eqref{eq:kernel-predictor} implies
\begin{align*}
u^K(\tau) - \tilde{u}^K(\tau) = (I_{dN} - (I_{dN} - \eta H)^{\tau})(y - \tilde{y}).
\end{align*}
Therefore, Assumption \ref{ass:gram-eigen} and \eqref{eq:approx-inf} lead to
\begin{align*}
    \norm{f_{\tau}^{K} - \tilde{f}_{\tau}^{K}}_{\mcal{H}} & = \norm{(I_{dN} - (I_{dN} - \eta H)^{\tau})(y - \tilde{y})}_{H^{-1}} \\ & \leq \norm{H^{-1}}_{2}\norm{I_{dN} - (I_{dN} - \eta H)^{\tau}}_{2}\norm{y - \tilde{y}}_{2}  \\ & \leq \frac{\norm{y - \tilde{y}}_{2}}{\lambda_{0}} \leq \frac{\sqrt{dN }A(R_{\mcal{H}}, R)}{\lambda_{0}} := \rho.
\end{align*}

Finally, we put all the results together and apply Lemma \ref{lemma:local-Rad} to conclude that with probability $ 1 - \delta $ it holds that
\begin{align*}
    &\frac{1}{T - T_{0}}\int_{T_{0}}^{T}\int_{\norm{x}_{2} \leq R} \norm{f_{\tau}^{K}(x, t) - \tilde{f}_{\tau}^{K}(x,t)}_{2}^{2} {\rm d}P_{X_{t}}(x)dt \\ & \leq \frac{1}{N}\norm{u^{K}(\tau) - \tilde{u}^{K}(\tau)}_{2}^{2} + C_{0} \parenthesis{\sqrt{\frac{1}{N}\norm{u^{K}(\tau) - \tilde{u}^{K}(\tau)}_{2}^{2} \cdot \Gamma_{\delta}} + \Gamma_{\delta}} \\ & \leq dA^{2}(R_{\mcal{H}}, R) + C_{0}\parenthesis{\sqrt{dA^{2}(R_{\mcal{H}}, R)\Gamma_{\delta}} + \Gamma_{\delta}},
\end{align*}
in which we overload the notation with  
\begin{align*}
  \Gamma_{\delta}(\mcal{F}_{\rho}^{R}) & =  \parenthesis{2\sqrt{d}\parenthesis{\sqrt{d}\log^{3/2}\parenthesis{e\beta dN}\widehat{\mathcal{R}}_{dN}\parenthesis{\Pi \circ \mcal{F}} + \frac{1}{\sqrt{N}}}}^{2} + \frac{d\beta^{2}}{N}\parenthesis{\log(1/\delta) + \log(\log N)} \\ & \leq \parenthesis{2\sqrt{d}\parenthesis{\sqrt{d}\log^{3/2}\parenthesis{e\rho C_{\max} dN}\frac{\rho C_{\max}}{\sqrt{dN}} + \frac{1}{\sqrt{N}}}}^{2} + \frac{d\rho^{2}C_{\max}^{2}}{N}\parenthesis{\log(1/\delta) + \log(\log N)} \\ & = 4d\parenthesis{\frac{\sqrt{d}A(R_{\mcal{H}}, R)C_{\max}}{\lambda_{0}}\log^{3/2}\parenthesis{\frac{eC_{\max}(dN)^{3/2}A(R_{\mcal{H}}, R)}{\lambda_{0}}} + \frac{1}{\sqrt{N}}}^{2} \\ & \qquad + \frac{d^{2}A^{2}(R_{\mcal{H}}, R)C_{\max}^{2}}{\lambda_{0}^{2}}\parenthesis{\log(1/\delta) + \log\parenthesis{\log N}} \eqqcolon \Gamma_{\delta}.
\end{align*}
\end{proof}

\subsection{Early Stopping}\label{subsec:stop}

Recall in \eqref{eq:kernel-predictor}--\eqref{eq:virtual-dataset}, the virtual kernel predictor is parametrized as $ \tilde{f}_{\tau}^{K}(\cdot) = \sum_{j = 1}^{N}K((X_{t_{j}}, t_{j}), \cdot)\tilde{\gamma}(\tau) $, where the parameter $\tilde{\gamma}$ is updated as 
\begin{align}
\tilde{\gamma}(\tau + 1) = \tilde{\gamma}(\tau) - \eta (H\tilde{\gamma}(\tau) - \tilde{y}), \quad \tilde{\gamma}(0) = 0, \label{eq:update-tilde-gamma-recall}
\end{align}
with $ \tilde{y}_{j} = f_{\mcal{H}}(X_{t_{j}}, t_{j}) + \varepsilon_{j} $. 
This enables us to transform the score matching problem to a classical kernel regression problem. The next technical result allows us to reduce the excess risk bound for the early-stopped GD learning in RKHS to the excess risk bound for learning Lipschitz functions. To proceed,  let us denote the eigenvalues of the Gram matrix $H^{11} = [H^{11}_{j\ell}]$ as $\widehat{\lambda}_1 \geq \dots \geq \widehat{\lambda}_N$. %

\begin{Theorem}\label{thm:early-stop}
    Suppose Assumptions \ref{ass:bounded target} and \ref{ass:gram-eigen} hold. Fix any $ f_{\mcal{H}} \in \mcal{H} $ with $ \norm{f_{\mcal{H}}}_{\mcal{H}}^{2} \leq R_{\mcal{H}} $ and assume the labels are constructed with  $ \tilde{X}_{0, j} = f_{\mcal{H}}(X_{t_{j}}, t_{j}) + \varepsilon_{j}$. 
    Let $ \tilde{f}_{\mathcal{T}}^{K} $  be obtained in \eqref{eq:kernel-predictor} by a GD-trained kernel regression with the number of iterations $\mathcal{T}$,~where
    \begin{align}
    \mathcal{T} = \argmin\curly{\tau \in \N: \widehat{R}_{\kappa}(1/\sqrt{\eta\tau N}) > (2e D \eta\tau N)^{-1}} - 1, \;\; \text{with} \;\;
    \widehat{R}_{\kappa}(x) \coloneqq \sqrt{\frac{1}{N}\sum_{j = 1}^{N}\min\curly{\frac{\widehat{\lambda}_{j}}{N}, x^{2}}}. \label{eq:def-stop-rule}
    \end{align}
    Then, there exist positive constants $c_1$ and $c_2$ independent of $N$ such that
    \begin{align*}
        \frac{1}{T - T_{0}}\int_{T_{0}}^{T}\E\bracket{\norm{\tilde{f}^{K}_{\mathcal{T}}(X_{t}, t) - f_{\mcal{H}}(X_{t}, t)}_{2}^{2}\mathbb{I}\curly{\norm{X_{t}}_{2}\leq {R}}}{\rm d}t \lesssim dR_\mcal{H}C_{\max}^{4}N^{-\frac{d+1}{d+3}},
    \end{align*}
    with probability at least $ 1 - c_{1}\exp(-c_{2}N^{\frac{2}{d+3}}) $.
\end{Theorem}
Here, $\mathcal{T}$ is a data-dependent {\it early-stopping rule}  to control the excess risk of kernel regression. Intuitively, the empirical localized Rademacher complexity $\widehat{R}_{\kappa}(\cdot)$ captures the complexity of functions contained in a ball under $\norm{\cdot}_{\kappa}$-norm within the RKHS.
For supervised learning with noisy labels, an early-stopping rule for GD is necessary to minimize the excess risk \citep{hu2021regularization,bartlett2002rademacher,li2020gradient}. {We remark that, although $\mathcal{T}$ is defined through the kernel regression for analytical purposes, it can be directly implemented in the neural network training.} To prove Theorem \ref{thm:early-stop}, we first reduce the vector-valued learning problem to a scalar-valued learning problem by leveraging the diagonal property of the NTK and then we follow the strategy as in \cite{raskutti2014early}. 

\begin{proof}

We start with the update rule for each coordinate $ i \in [d] $. Note that
\begin{align*}
(H\tilde{\gamma}(\tau))_{j}^{i} = \parenthesis{\sum_{\ell = 1}^{N}H_{j\ell}\tilde{\gamma}_{\ell}(\tau)}^{i} = \sum_{\ell = 1}^{N}\sum_{k = 1}^{d}H_{j\ell}^{ik}\tilde{\gamma}_{\ell}^{k}(\tau) = \sum_{\ell = 1}^{N}H_{j\ell}^{ii}\tilde{\gamma}_{\ell}^{i}(\tau) = \parenthesis{H^{ii}\tilde{\gamma}^{i}(\tau)}_{j}.
\end{align*}
Hence, the update rule for $ \tilde{\gamma}^{i} $ in \eqref{eq:update-tilde-gamma-recall} becomes
\begin{align}\label{eq:update-gamma-i}
    \tilde{\gamma}^{i}(\tau+1) = \tilde{\gamma}^{i}(\tau) - \eta \parenthesis{H^{ii}\tilde{\gamma}^{i}(\tau) - \tilde{y}^{i}}, \quad \tilde{\gamma}^{i}(0) = 0.
\end{align}
Furthermore, since $ \tilde{u}^{K,i}(\tau) = (H\tilde{\gamma}(\tau))^{i} = H^{ii}\tilde{\gamma}^{i}(\tau) $, multiplying $  H^{ii} $ on both sides of \eqref{eq:update-gamma-i} results in
\begin{align}\label{eq:update-uK-i}
\tilde{u}^{K,i}(\tau+1) = \tilde{u}^{K, i}(\tau) - \eta H^{ii}(\tilde{u}^{K,i}(\tau) - \tilde{y}^{i}), \quad \tilde{u}^{K,i}(0) = 0.
\end{align}

For ease of exposition, we denote $ u_{\mcal{H}} = \parenthesis{(u_{\mcal{H}})_{j}^{i}}_{i, j = 1}^{d, N} \in \R^{dN} $,  where $ (u_{\mcal{H}})_{j}^{i} = f_{\mcal{H}}^{i}(X_{t_{j}}, t_{j}) $. In order to establish the upper bound in Theorem \ref{thm:early-stop}, we first provide a bound for the empirical risk $ \frac{1}{N}\norm{\tilde{u}^{K,i}(\tau) - u_{\mcal{H}}^{i}}_{2}^{2} $ at each iteration $ \tau $. Consider the eigen-decomposition  $ H^{ii} = V\Lambda V^{\top} $, where $ V $ and $ \Lambda $ are universal for all $ i $ as $ H^{ii} = H^{11} $. Let us further denote $ \zeta^{i}(\tau) = V^{\top}\tilde{u}^{K,i}(\tau) $, $ \zeta_{\mcal{H}}^{i} = V^{\top}u_{\mcal{H}}^{i} $ and $\tilde{\varepsilon}^{i} = V^{\top}\varepsilon^{i}$. It follows from \eqref{eq:update-uK-i} that
\begin{align}
\zeta^{i}(\tau+1) & = \zeta^{i}(\tau) - \eta\Lambda (\zeta^{i}(\tau) - V^{\top}\tilde{y}^{i}) \nonumber \\ & = \zeta^{i}(\tau) - \eta\Lambda (\zeta^{i}(\tau) - V^{\top}(u^{i}_{\mcal{H}} + \varepsilon^{i})) \nonumber \\ & = \zeta^{i}(\tau) - \eta\Lambda(\zeta^{i}(\tau) - \zeta^{i}_{\mcal{H}}) + \eta \Lambda \tilde{\varepsilon}^{i},  \label{eq: recursion-zeta}
\end{align}
Unrolling the recursion \eqref{eq: recursion-zeta} yields
\begin{align*}
\zeta^{i}(\tau) - \zeta^{i}_{\mcal{H}} & = S(\tau)\parenthesis{\zeta^{i}(0) - \zeta^{i}_{\mcal{H}}} + (I_{N} - S(\tau))\tilde{\varepsilon}^{i} = (I_{N} - S(\tau))\tilde{\varepsilon}^{i} - S(\tau)\zeta^{i}_{\mcal{H}}.
\end{align*}
Here, we denote $ S(\tau) = (I_{N} - \eta \Lambda)^{\tau} $. Since $ S(\tau) $ is a diagonal matrix, we obtain the following upper bound:
\begin{align*}
\norm{\zeta^{i}(\tau) - \zeta^{i}_{\mcal{H}}}_{2}^{2} & \leq 2\norm{S(\tau) \zeta_{\mcal{H}}^{i}}_{2}^{2} + 2\norm{(I_{N} - S(\tau))\tilde{\varepsilon}^{i}}_{2}^{2} \\ & = 2\sum_{j = 1}^{N}(S(\tau))_{jj}^{2}(\zeta^{i}_{\mcal{H}, j})^{2} + 2\sum_{j = 1}^{N}(1 - S_{jj}({\tau}))^{2}\parenthesis{\tilde{\varepsilon}_{j}^{i}}^{2}.
\end{align*}
Moreover, since $ V $ is orthonormal, we have
\begin{align}\label{eq:bisa-var-decomp}
\frac{1}{N}\norm{\tilde{u}^{K,i}(\tau) - u_{\mcal{H}}^{i}}_{2}^{2} \leq \underbrace{\frac{2}{N}\sum_{j = 1}^{N}(S(\tau))_{jj}^{2}(V^{\top}u_{\mcal{H}}^{i})_{j}^{2}}_{\text{Squared Bias } (B_{\tau}^{i})^{2}} + \underbrace{\frac{2}{N}\sum_{j = 1}^{N}(1 - S_{jj}(\tau))^{2}\parenthesis{V^{\top}\varepsilon^{i}}_{j}^{2}}_{\text{Variance }V_{\tau}^{i}}
\end{align}
We \underline{claim} that at each iteration $\tau = 1, 2, \dots$, the squared bias is upper bounded with
\begin{align}\label{eq:upper-B-1}
(B_{\tau}^{i})^{2} \leq  \frac{R_\mathcal{H}}{e\tau \eta N},
\end{align}
and moreover there is a constant $c' > 0$ such that for all $\tau = 1,\dots, \mcal{T}$, it holds with probability at least $ 1 - \exp(-c'N\widehat{r}_{N}^{2}/D^{2}) $ we have
\begin{align}\label{eq:upper-V-1}
V_{\tau}^{i} \leq \frac{5}{e^{2} \eta \tau N}.
\end{align}
We defer the proof of the claim to Lemma \ref{lemma:bias-variance} in Appendix \ref{sec: proof bias-var}. Now, assuming the event \eqref{eq:upper-V-1} happens,  along with \eqref{eq:upper-B-1}, the empirical risk in \eqref{eq:bisa-var-decomp} can be further upper bounded by
\begin{align}
\frac{1}{N}\norm{\tilde{u}^{K, i}(\tau) - u_{\mcal{H}}^{i}}_{2}^{2}   \leq (B_{\tau}^{i})^{2} + V_{\tau}^{i}  \leq \frac{R_{\mcal{H}}}{e\eta\tau N} + \frac{5}{e^{2}\eta \tau N} \leq \frac{R_\mcal{H} + 5}{e\eta \tau N}. \label{eq:piece-1}
\end{align}
By the definition of $ \mcal{T} $ in \eqref{eq:def-stop-rule}, we must have $ \frac{1}{\eta\parenthesis{\mcal{T}+1}N} \leq \widehat{r}_{N}^{2} \leq \frac{1}{\eta{\mcal{T}}N} $ and 
\begin{align}
\frac{1}{\eta{\mcal{T}}N} \leq \frac{2}{\eta\parenthesis{\mcal{T}+1}N} \leq 2\widehat{r}_{N}^{2}. \label{eq:piece-2}
\end{align}
Combining \eqref{eq:piece-1} and \eqref{eq:piece-2},  we  have
\begin{align}
    \frac{1}{N}\norm{\tilde{u}^{K, i}(\mcal{T}) - u_{\mcal{H}}^{i}}_{2}^{2} \leq \frac{2\widehat{r}_{N}^{2}}{e}(R_\mcal{H} + 5) \leq {\widehat{r}_{N}^{2}}(R_\mcal{H} + 5). \label{eq:emp-r}
\end{align}

Recall that $ \curly{\mu_{p}}_{p = 1}^{\infty} $ are the eigenvalues of the scalar-valued kernel $ \kappa $.  Analogously to the localized empirical Rademacher complexity, we define the localized population Rademacher complexity by
\begin{align*}
R_{\kappa}(x) \coloneqq \sqrt{\frac{1}{N}\sum_{p = 1}^{\infty}\min\curly{\mu_{p}, x^{2}}}.  
\end{align*}
Let $B \coloneqq \sqrt{3 + 2R_\mathcal{H}} + \sqrt{R_\mathcal{H}}$ and consider the function class 
\begin{align*}
\mcal{F} = \curly{z \mapsto \frac{f(z) - h(z)}{BC^2_{\max}} \mathbb{I}\curly{\norm{z}_{2} \leq C_{\max}} : f, h \in \mcal{H}_{1}, \norm{f - h}_{\kappa} \leq B}.
\end{align*}
It is straightforward to see that $ \mcal{F} $ is star-shaped and $ 1 $-uniformly bounded. Moreover, with probability at least $1 - \exp(-cN\widehat{r}_N^2)$, the following inequality holds:
\begin{align}
\norm{(\tilde{f}_{\tau}^{K, i}(z) - f_{\mcal{H}}^{i}(z)) \mathbb{I}\curly{\norm{z}_2 \leq C_{\max}}}_{\infty} & \leq BC_{\max}^{2}. \label{eq:F-unif-1}
\end{align}
The proof of \eqref{eq:F-unif-1} is deferred to Lemma \ref{lemma:in-F} in Appendix \ref{sec: proof bias-var}. Conditioned on the event that  \eqref{eq:F-unif-1} holds, we apply \cite[Theorem 14.1, Corollary 14.5]{wainwright2019high} to $\mathcal{F}$ with $ r = r_{N}^{*} $ to deduce
\begin{align}
 \frac{1}{T - T_{0}}\int_{T_{0}}^{T}\int_{\norm{x}_{2} \leq R}\norm{\tilde{f}_{\mcal{T}}^{K, i}(x, t) - f_{\mcal{H}}^{i}(x, t)}_{2}^{2}{\rm d}P_{X_{t}}(x){\rm d}t \nonumber & \leq \frac{2}{N}\sum_{j = 1}^{N}\norm{\tilde{u}^{K, i}(\mcal{T}) - u_{\mcal{H}}^{i}}_{2}^{2} + B^2C_{\max}^{4}(r^{*}_{N})^{2} \nonumber \\ & \leq {\widehat{r}_{N}^{2}}(2R_\mcal{H} + 10) + B^2C_{\max}^{4}(r^*_N)^2 \nonumber \\ & \leq {\widehat{r}_{N}^{2}}(2R_\mcal{H} + 10) + 6(R_\mathcal{H} + 1)C_{\max}^{4}(r^*_N)^2, \label{eq:early-stop-bound-1}
\end{align}
where we recall \eqref{eq:emp-r}  to obtain the second inequality.
Furthermore, there exist two positive numerical constants $c_1$ and $c_2$ such that $\P(\frac{1}{4}r_N^* \leq \widehat{r}_N \leq r_N^*) \geq 1 - c_1 \exp(-c_2 N (r_N^*)^2)$ as shown in \cite[Lemma 11]{raskutti2014early}.
We combine this fact with \eqref{eq:early-stop-bound-1} to deduce that that with probability at least $1 - \tilde{c}_1 \exp(-\tilde{c}_2 N (r_N^*)^2)$ it holds 
\begin{align*}
\frac{1}{T - T_{0}}\int_{T_{0}}^{T}\int_{\norm{x}_{2} \leq R}\norm{\tilde{f}_{\mcal{T}}^{K, i}(x, t) - f_{\mcal{H}}^{i}(x, t)}_{2}^{2}{\rm d}P_{X_{t}}(x){\rm d}t & \leq \tilde{c}_3(R_\mathcal{H}+1)C_{\max}^{4}(r^*_N)^2,
\end{align*}
for some constants $\tilde{c}_1, \tilde{c}_2$ and $\tilde{c}_3$. 

Finally, over the domain $ \mathbb{B}_{R}^{2} \times [T_{0}, T] $, the eigenvalues of NTK enjoy the decay pattern $ \mu_{p} = \Theta(p^{-\frac{d+1}{2}}) $ \citep{bach2017breaking,scetbon2021spectral}. Following the steps in \citep[Corollary 3]{raskutti2014early} to prove an upper bound of the empirical Rademacher complexity, we obtain that $ (r^{*}_N)^{2} = \Theta(N^{-\frac{d+1}{d+3}}) $. Consequently, the following holds with probability at least $ 1 - \tilde{c}_{1}\exp(-\tilde{c}_{2}N^{\frac{2}{d+3}}) $:
\begin{align*}
    \frac{1}{T - T_{0}}\int_{T_{0}}^{T}\int_{\norm{x}_{2} \leq R}\norm{\tilde{f}_{\mcal{T}}^{K}(x, t) - f_{\mcal{H}}(x, t)}_{2}^{2}{\rm d}P_{X_{t}}(x){\rm d}t & \lesssim dR_\mcal{H}C_{\max}^{4}N^{-\frac{d+1}{d+3}}.
\end{align*}
This completes the proof. 
\end{proof}

\subsection{Score Estimation and Generalization}
In this subsection, we establish the score estimation and generalization result as follows. 
\begin{Theorem}[Score Estimation and Generalization]\label{thm:score-estimation}
    Suppose Assumptions \ref{ass:bounded target}, \ref{ass:g}, \ref{ass:lip-f*}, \ref{ass:gram-eigen} hold and we set $ m \gtrsim {\rm poly}(d)N^{26 + \frac{4}{d+3}} $ and $ \eta \lesssim N^{-2} $. Moreover, we use early stopping rule $ \mathcal{T} $ as in Theorem \ref{thm:early-stop} and set $T_0 = o(1)$ and $T = \mcal{O}(\log N)$. Then with probability at least $ 1 - N^{-1} - c_1 \exp(-c_2N^{\frac{2}{d+3}}) $ for constants $c_1, c_2 > 0$ as in Theorem \ref{thm:early-stop}, it holds that  
    \begin{align*}
    & \frac{1}{T - T_{0}}\int_{T_{0}}^{T}\lambda(t)\E\bracket{\norm{s_{{\bf W}(\mathcal{T})}(X_{t}, t) - \nabla \log p_{t}(X_{t})}_{2}^{2}}{\rm d}t = \tilde{\mcal{O}}\parenthesis{N^{-\frac{2}{d+3}}}.
    \end{align*}
    Here, $\tilde{\mcal{O}}(\cdot)$ hides polynomial dependency in $d$ and logarithmic terms in $N$. 
\end{Theorem}

Theorem \ref{thm:score-estimation} shows that the early-stopped neural network $ s_{{\bf W}(\mathcal{T})} $ learns the score function $ \nabla \log p_{t} $ well in the $ L^{2} $ sense over the interval $ [T_{0}, T] $. We remark that we choose a larger value of $m$ than in the existing literature in order to compensate for the unbounded nature of the input domain. To the best of our knowledge, this is the {\it first} algorithm-based analysis for score estimation with neural network parameterization. Combined with recent findings in the distribution recovery property of diffusion models, we are the first to obtain an end-to-end guarantee with a provably efficient algorithm for diffusion models.

We note that the polynomial dependence of the network width $m$ on the sample size $N$ appears pessimistic at first glance. This is primarily due to the unbounded nature of the input domain. In practice, however, our numerical experiments demonstrate that the proposed GD method performs well with much smaller network widths, suggesting that the theoretical requirement on $m$ is not tight. Improving the dependence of $m$ on $N$ remains an interesting open problem and is left for future work.

\begin{proof}
    We begin with simplifying the label mismatch error. Theorem \ref{thm:mismatch} implies that with probability $ 1 - \delta $ it holds
\begin{align*}
    \frac{1}{T - T_{0}}\int_{T_{0}}^{T}\int_{\norm{x}_{2} \leq R} \norm{f_{\tau}^{K}(x, t) - \tilde{f}_{\tau}^{K}(x,t)}_{2}^{2} {\rm d}P_{X_{t}}(x)dt  \leq dA^{2}(R_{\mcal{H}}, R) + C_{0}\parenthesis{\sqrt{dA^{2}(R_{\mcal{H}}, R)\Gamma_{\delta}} + \Gamma_{\delta}}, 
\end{align*}
where we recall the definition
\begin{align*}
\Gamma_{\delta} & = 4d\parenthesis{\frac{\sqrt{d}A(R_{\mcal{H}}, R)C_{\max}}{\lambda_{0}}\log^{3/2}\parenthesis{\frac{eC_{\max}(dN)^{3/2}A(R_{\mcal{H}}, R)}{\lambda_{0}}} + \frac{1}{\sqrt{N}}}^{2} \\ & \qquad + \frac{d^{2}A^{2}(R_{\mcal{H}}, R)C_{\max}^{2}}{\lambda_{0}^{2}}\parenthesis{\log(1/\delta) + \log\parenthesis{\log N}}.
\end{align*}
To proceed, we choose $ \delta = 1/N $ to obtain
\begin{align*}
\Gamma_{\delta} & \leq 8d\parenthesis{\frac{dA^{2}(R_{\mcal{H}}, R)C_{\max}^{2}}{\lambda_{0}^{2}}\log^{3}\parenthesis{\frac{eC_{\max}(dN)^{3/2}A(R_{\mcal{H}}, R)}{\lambda_{0}}} + \frac{1}{N}} \\ & \quad + \frac{d^{2}A^{2}(R_{\mcal{H}}, R)C_{\max}^{2}}{\lambda_{0}^{2}}\parenthesis{\log(N) + \log\parenthesis{\log N}}.
\end{align*}
We will choose $R_{\mcal{H}}$ and $R$ such that both $C_{\max}$ and $A(R_{\mcal{H}}, R)$ are both polynomial in $N$. 
Then, with probability at least $ 1 - 1/N $ it holds that
\begin{align*}
    \frac{1}{T - T_{0}}\int_{T_{0}}^{T}\int_{\norm{x}_{2} \leq R} \norm{f_{\tau}^{K}(x, t) - \tilde{f}_{\tau}^{K}(x,t)}_{2}^{2} {\rm d}P_{X_{t}}(x)dt  = \tilde{\mcal{O}}\parenthesis{\frac{A^{2}(R_{\mcal{H}}, R)C_{\max}^{2}}{\lambda_{0}^{2}}}.
\end{align*}
Together with Theorem \ref{thm:approx-score-L2} and Theorem \ref{thm:early-stop}, we have
\begin{align*}
    \frac{1}{T - T_{0}}\int_{T_{0}}^{T}\int_{\norm{x}_{2} \leq R} \norm{f_{\tau}^{K}(x, t) - f_{*}(x,t)}_{2}^{2} {\rm d}P_{X_{t}}(x)dt  & \lesssim \frac{A^{2}(R_{\mcal{H}}, R)C_{\max}^{2}}{\lambda_{0}^{2}} + A^{2}(R_{\mcal{H}}, R) + R_{\mcal{H}}C_{\max}^{4}N^{-\frac{d+1}{d+3}},
\end{align*}
where we recall $A(R_{\mcal{H}}, R) =  c_{1}\Lambda(R)\parenthesis{\frac{\sqrt{R_{\mcal{H}}}}{\Lambda(R)}}^{-\frac{2}{d-1}}\log\parenthesis{\frac{\sqrt{R_{\mcal{H}}}}{\Lambda(R)}}$ and $ \Lambda(R) = \mathcal{O}(R^{2}) $. Here, the $\lesssim$ notation hides a constant that is polynomial in $d$ and polylog in $N$.

To minimize $ R_{\mcal{H}} \mapsto x A^{2}(R, R_{\mcal{H}}) + y R_{\mcal{H}} $ with $ x = {C_{\max}^{2}}\lambda_{0}^{-2} + 1 $ and $ y = C_{\max}^{4}N^{-\frac{d+1}{d+3}} $, we choose $ R_{\mcal{H}} = \Lambda^{2}(R)(x/y)^{\frac{d-1}{d+1}} $ to have
\begin{align*}
    \frac{1}{T - T_{0}}\int_{T_{0}}^{T}\int_{\norm{x}_{2} \leq R} \norm{f_{\tau}^{K}(x, t) - f_{*}(x,t)}_{2}^{2} {\rm d}P_{X_{t}}(x)dt = \tilde{\mcal{O}}\parenthesis{N^{-\frac{2}{d+3}}{\rm poly}(C_{\max}, \lambda_{0}^{-1})}. 
\end{align*}
By Lemma \ref{lemma:H-Hii-eigen} and \cite[Proposition 2]{kuzborskij2022learning}, we know that $ \lambda_{0} \gtrsim {\rm polylog}(N, d)d $ with probability at least $ 1 - N^2 e^{-\mathcal{O}(\sqrt{d})}$. It remains to choose $ R $ and $ \Delta $ such that $ C_{\max} $ is logarithmic in $ N $. To achieve this, we choose $ R = \Theta(\log N^{\frac{8}{d+3}})^{1/2} $, $ \Delta = \Theta(N^{-2}) $, $T = \mathcal{O}(\log N)$ and $T_0 = o(1)$ as $N \to \infty$. Consequently, the following equality holds with probability at least 
\begin{align*}
    \frac{1}{T - T_{0}}\int_{T_{0}}^{T}\int_{\norm{x}_{2} \leq R} \norm{f_{\tau}^{K}(x, t) - f_{*}(x,t)}_{2}^{2} {\rm d}P_{X_{t}}(x)dt = \tilde{\mcal{O}}\parenthesis{N^{-\frac{2}{d+3}}}.
\end{align*}
Note that under the choice of $ R $, the RHS in Lemma \ref{lemma:tail bound} decays to 0 as $ N \to \infty $ with polynomial rate $\tilde{\mathcal{O}}(N^{-\frac{2}{d+3}})$. By choosing the width $ m \gtrsim {\rm poly}(d)N^{26 + \frac{4}{d+3}} $ and applying Theorem \ref{thm:coupling}, we complete the proof.  
\end{proof}
}

\section{Numerical Experiments}\label{sec:exp}
In this section, we evaluate the performance of the GD-trained score estimator (proposed in Section~\ref{sec:results}) with numerical experiments.
In our experiments, we utilized the Credit Default dataset, as detailed by \cite{yeh2009comparisons} in their comparative study. The dataset contains bill statements of
credit card customers, their default payments, history of payment as well as the demographic factors of the customers in Taiwan from April 2005 to September 2005.  This dataset consists of 30,000 instances, each characterized by 10 categorical and 13 numerical attributes. The data is divided into two distinct classes. We employ the embedding method proposed by \cite{sattarov2023findiff} to pre-process the categorical attributes. The goal of our experiments is to generate synthetic but realistic financial instances that follow the same distribution as those in the Credit Default dataset.

\begin{wrapfigure}{r}{0.48\textwidth}
\vspace{-10pt}
\centering
\includegraphics[width=0.47\textwidth]{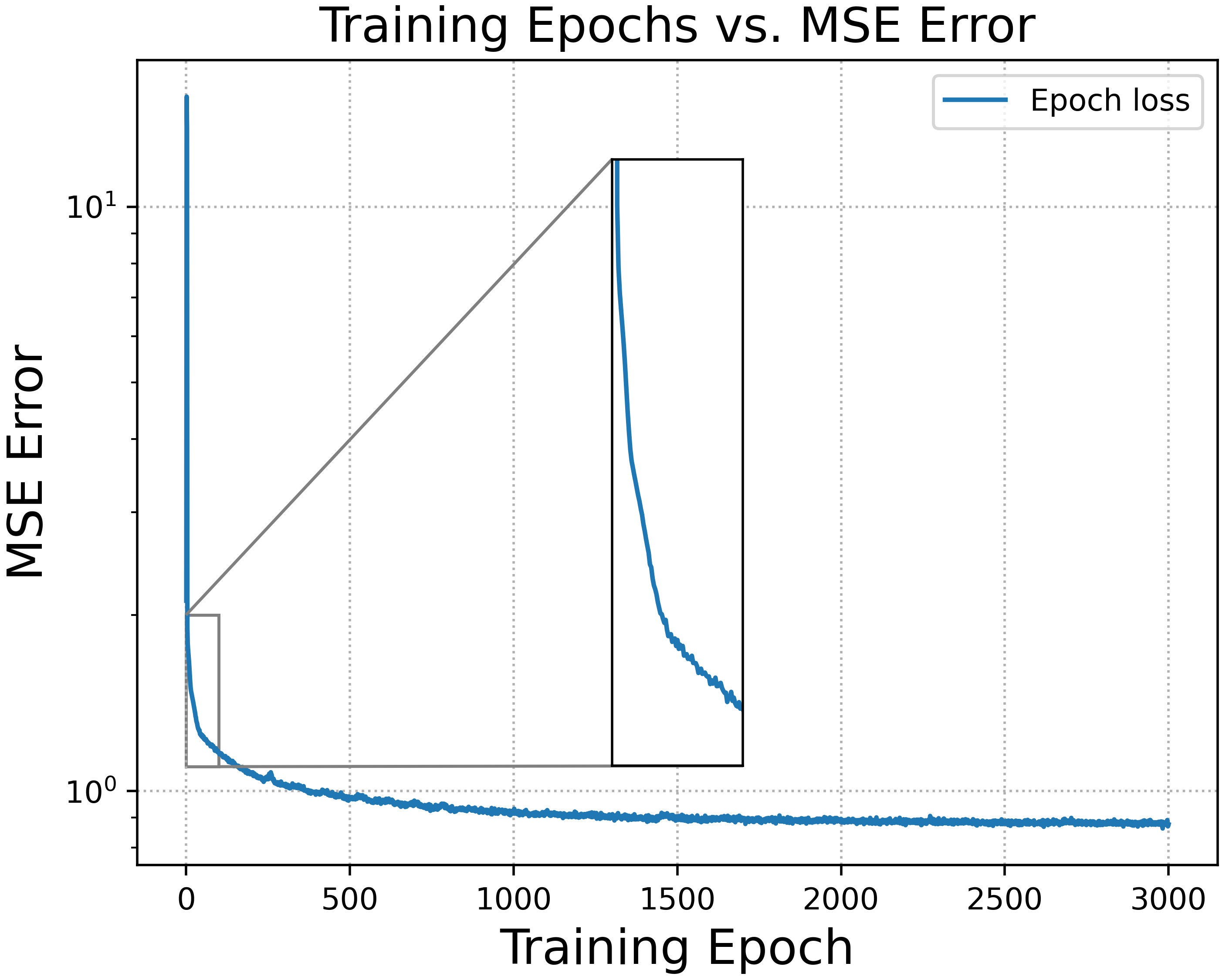}
\caption{\bf Convergence of gradient descent.}
\label{fig:GD_loss}
\vspace{-10pt}
\end{wrapfigure}

\paragraph{Model and Hyperparameter Setup.}
To implement the forward and backward processes, we consider the DDPM discretization scheme \citep{ho2020denoising} with constant variance. In particular, the number of diffusion steps is set to be 500. Moreover, the noise level is linearly increased from $10^{-4}$ to $0.02$, consistent with practical setup \citep{ho2020denoising}.

We utilize a two-hidden layer fully-connected neural network to learn the score function described in \eqref{eq:regression}. To be consistent with our theoretical set-up, we determine the number of neurons in each hidden layer to be 3000. The model parameters are initialized using Kaiming initialization \citep{he2015delving}. The neural network is trained for 3000 epochs using GD with a constant learning rate of $0.05$.

\paragraph{Evaluation and Discussion.} 

We first examine the convergence of the GD algorithm in the training of neural networks. As shown in Figure \ref{fig:GD_loss}, the GD algorithm converges within approximately 500 epochs despite the non-convex landscape of the training objective.  This observation is consistent with our theoretical analysis  ({\it cf.} Theorem \ref{thm:gd-conv}).

Next, we evaluate the quality of the data generated by the diffusion model using the neural score estimator trained by GD. In particular, we focus on the fidelity to quantify the relevance of the generated data. We adopt the \textit{column fidelity} and \textit{row fidelity} metrics introduced in \cite[Section 4.3]{sattarov2023findiff}, which evaluate the simulated samples feature-by-feature and instance-by-instance, respectively.  Table \ref{tab:fidelity} shows the fidelity results of the GD algorithm. For comparison, we also train a model using Adam \citep{kingma2014adam} for 3000 epochs with a mini-batch of size 512 and a cosine
learning rate scheduler \citep{loshchilov2016sgdr}. {In practice, this is viewed as a popular scheme for training diffusion models, achieving state-of-the-art performance.} We observe that the GD-trained score estimator's performance is comparable to that trained with Adam. The results on fidelity are visualized in Figure \ref{fig:fid}.

In summary, the training scheme inspired by our theoretical framework achieves promising performance comparable
to the complex training procedure involving Adam with large batch sizes and heuristic learning rate scheduling. This further suggests that theory sheds light on simple and effective implementations.

\begin{table}[h!]
\centering
\begin{tabular}{@{}lll@{}}
\toprule
& Column Fidelity & Row Fidelity\\ \midrule GD & 0.82               & 0.63               \\ \midrule Adam & 0.87           & 0.75                
                 \\ \bottomrule
\end{tabular}
\caption{Fidelity assessment of GD and Adam.}
\label{tab:fidelity} 
\end{table}

\begin{figure}[t]
\centering
\begin{subfigure}{0.68\textwidth}
    \centering
    \includegraphics[width=\linewidth]{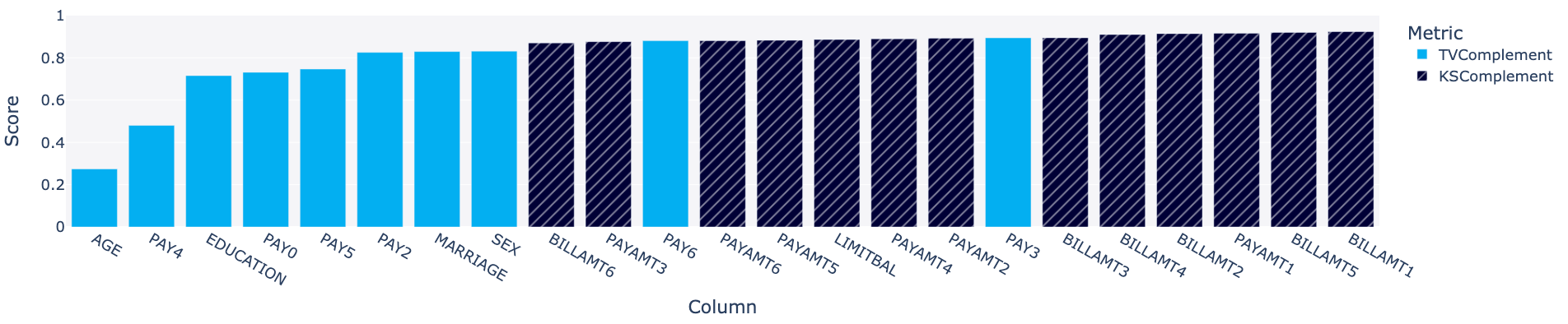}
\end{subfigure}
\hfill
\begin{subfigure}{0.28\textwidth}
    \centering
    \includegraphics[width=\linewidth]{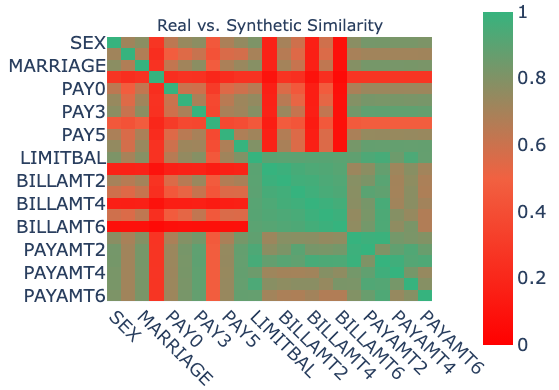}
\end{subfigure}
\caption{Left:~Column fidelity. Right:~Row fidelity.}
\label{fig:fid}
\end{figure}

\section{Conclusion and Discussions}
This work establishes the {\it first} algorithm-dependent analysis of neural network-based score estimation in diffusion models. We demonstrate that training overparametrized neural networks with GD can learn the ground-truth score function with a sufficient number of samples under an early-stopping rule. Our work investigates all three aspects of the score estimation task:~approximation, optimization, and generalization. The analytical framework laid out in this work sheds light on the understanding of diffusion models and inspires innovative architectural design. Our numerical experiments show that the proposed algorithm is comparable to the state-of-the-art training schemes for diffusion models.

In addition, our work leaves several open questions for future investigation. For instance, the dependence of our convergence results on the dimension of the problem seems sub-optimal. To address this issue, one approach is to consider the manifold structure of the data distribution, such as the linear subspace assumption as suggested by \cite{chen2023score} and \cite{oko2023diffusion}. Another direction is to understand the role of neural network architectures like U-nets and transformers in the implementation of diffusion models for machine learning tasks. Finally, the analysis of stochastic and adaptive algorithms such as SGD and Adam is crucial, closing the gap between theory and practice further.

\bibliography{arxiv/arxiv}

\begin{thebibliography}{10}

\bibitem{aghapour2026entropy}
Ahmad Aghapour, Erhan Bayraktar, and Ziqing Zhang.
\newblock Entropy-based dimension-free convergence and loss-adaptive schedules
  for diffusion models.
\newblock {\em arXiv preprint arXiv:2601.21943}, 2026.

\bibitem{allen2019learning}
Zeyuan Allen-Zhu, Yuanzhi Li, and Yingyu Liang.
\newblock Learning and generalization in overparameterized neural networks,
  going beyond two layers.
\newblock {\em Advances in neural information processing systems}, 32, 2019.

\bibitem{allen2019convergence}
Zeyuan Allen-Zhu, Yuanzhi Li, and Zhao Song.
\newblock A convergence theory for deep learning via over-parameterization.
\newblock In {\em International conference on machine learning}, pages
  242--252. PMLR, 2019.

\bibitem{han2024anonymous}
{Anonymous}.
\newblock Neural network-based score estimation in diffusion models:
  Optimization and generalization.
\newblock In {\em The Twelfth International Conference on Learning
  Representations}, 2024.

\bibitem{arora2019fine}
Sanjeev Arora, Simon Du, Wei Hu, Zhiyuan Li, and Ruosong Wang.
\newblock Fine-grained analysis of optimization and generalization for
  overparameterized two-layer neural networks.
\newblock In {\em International Conference on Machine Learning}, pages
  322--332. PMLR, 2019.

\bibitem{arora2019exact}
Sanjeev Arora, Simon~S Du, Wei Hu, Zhiyuan Li, Russ~R Salakhutdinov, and
  Ruosong Wang.
\newblock On exact computation with an infinitely wide neural net.
\newblock {\em Advances in neural information processing systems}, 32, 2019.

\bibitem{bach2017breaking}
Francis Bach.
\newblock Breaking the curse of dimensionality with convex neural networks.
\newblock {\em The Journal of Machine Learning Research}, 18(1):629--681, 2017.

\bibitem{bakry2014analysis}
Dominique Bakry, Ivan Gentil, and Michel Ledoux.
\newblock {\em Analysis and geometry of Markov diffusion operators}, volume
  103.
\newblock Springer, 2014.

\bibitem{bartlett2002rademacher}
Peter~L Bartlett and Shahar Mendelson.
\newblock Rademacher and {Gaussian} complexities: Risk bounds and structural
  results.
\newblock {\em Journal of Machine Learning Research}, 3(Nov):463--482, 2002.

\bibitem{bartlett2021deep}
Peter~L Bartlett, Andrea Montanari, and Alexander Rakhlin.
\newblock Deep learning: a statistical viewpoint.
\newblock {\em Acta numerica}, 30:87--201, 2021.

\bibitem{cai2019neural}
Qi~Cai, Zhuoran Yang, Jason~D Lee, and Zhaoran Wang.
\newblock Neural temporal-difference learning converges to global optima.
\newblock {\em Advances in Neural Information Processing Systems}, 32, 2019.

\bibitem{cao2024exploring}
Yang Cao, Yingyu Liang, Zhenmei Shi, and Zhao Song.
\newblock Exploring the frontiers of softmax: Provable optimization,
  applications in diffusion model, and beyond.
\newblock {\em arXiv e-prints}, pages arXiv--2405, 2024.

\bibitem{carmeli2010vector}
Claudio Carmeli, Ernesto De~Vito, Alessandro Toigo, and Veronica Umanit{\'a}.
\newblock Vector valued reproducing kernel hilbert spaces and universality.
\newblock {\em Analysis and Applications}, 8(01):19--61, 2010.

\bibitem{cattiaux2021time}
Patrick Cattiaux, Giovanni Conforti, Ivan Gentil, and Christian L{\'e}onard.
\newblock Time reversal of diffusion processes under a finite entropy
  condition.
\newblock In {\em Annales de l'Institut Henri Poincar{\'e} (B) Probabilit{\'e}s
  et Statistiques}, volume 59(4), pages 1844--1881. Institut Henri
  Poincar{\'e}, 2023.

\bibitem{celisse2021analyzing}
Alain Celisse and Martin Wahl.
\newblock Analyzing the discrepancy principle for kernelized spectral filter
  learning algorithms.
\newblock {\em The Journal of Machine Learning Research}, 22(1):3498--3556,
  2021.

\bibitem{chen2022improved}
Hongrui Chen, Holden Lee, and Jianfeng Lu.
\newblock Improved analysis of score-based generative modeling: User-friendly
  bounds under minimal smoothness assumptions.
\newblock In {\em International Conference on Machine Learning}, pages
  4735--4763. PMLR, 2023.

\bibitem{chen2023score}
Minshuo Chen, Kaixuan Huang, Tuo Zhao, and Mengdi Wang.
\newblock Score approximation, estimation and distribution recovery of
  diffusion models on low-dimensional data.
\newblock In {\em International Conference on Machine Learning}, pages
  4672--4712. PMLR, 2023.

\bibitem{chen2023probability}
Sitan Chen, Sinho Chewi, Holden Lee, Yuanzhi Li, Jianfeng Lu, and Adil Salim.
\newblock {The probability flow ODE is provably fast}.
\newblock {\em Advances in Neural Information Processing Systems}, 36, 2024.

\bibitem{chen2022sampling}
Sitan Chen, Sinho Chewi, Jerry Li, Yuanzhi Li, Adil Salim, and Anru~R Zhang.
\newblock Sampling is as easy as learning the score: theory for diffusion
  models with minimal data assumptions.
\newblock {\em International Conference on Learning Representations}, 2023.

\bibitem{dathathri2019plug}
Sumanth Dathathri, Andrea Madotto, Janice Lan, Jane Hung, Eric Frank, Piero
  Molino, Jason Yosinski, and Rosanne Liu.
\newblock Plug and play language models: A simple approach to controlled text
  generation.
\newblock {\em International Conference on Learning Representations}, 2020.

\bibitem{de2021diffusion}
Valentin De~Bortoli, James Thornton, Jeremy Heng, and Arnaud Doucet.
\newblock {Diffusion Schr{\"o}dinger bridge with applications to score-based
  generative modeling}.
\newblock {\em Advances in Neural Information Processing Systems},
  34:17695--17709, 2021.

\bibitem{dou2024optimal}
Zehao Dou, Subhodh Kotekal, Zhehao Xu, and Harrison~H Zhou.
\newblock From optimal score matching to optimal sampling.
\newblock {\em arXiv preprint arXiv:2409.07032}, 2024.

\bibitem{du2019gradient}
Simon Du, Jason Lee, Haochuan Li, Liwei Wang, and Xiyu Zhai.
\newblock Gradient descent finds global minima of deep neural networks.
\newblock In {\em International conference on machine learning}, pages
  1675--1685. PMLR, 2019.

\bibitem{du2018gradient}
Simon~S. Du, Xiyu Zhai, Barnabas Poczos, and Aarti Singh.
\newblock Gradient descent provably optimizes over-parameterized neural
  networks.
\newblock In {\em International Conference on Learning Representations}, 2019.

\bibitem{folland1999real}
Gerald~B Folland.
\newblock {\em Real analysis: modern techniques and their applications},
  volume~40.
\newblock John Wiley \& Sons, 1999.

\bibitem{follmer2005entropy}
Hans F{\"o}llmer.
\newblock An entropy approach to the time reversal of diffusion processes.
\newblock In {\em Stochastic Differential Systems Filtering and Control:
  Proceedings of the IFIP-WG 7/1 Working Conference Marseille-Luminy, France,
  March 12--17, 1984}, pages 156--163. Springer, 2005.

\bibitem{gao2025wasserstein}
Xuefeng Gao, Hoang~M Nguyen, and Lingjiong Zhu.
\newblock Wasserstein convergence guarantees for a general class of score-based
  generative models.
\newblock {\em Journal of machine learning research}, 26(43):1--54, 2025.

\bibitem{goodfellow2014generative}
Ian Goodfellow, Jean Pouget-Abadie, Mehdi Mirza, Bing Xu, David Warde-Farley,
  Sherjil Ozair, Aaron Courville, and Yoshua Bengio.
\newblock Generative adversarial nets.
\newblock {\em Advances in neural information processing systems}, 27, 2014.

\bibitem{gu2021efficiently}
Albert Gu, Karan Goel, and Christopher R{\'e}.
\newblock Efficiently modeling long sequences with structured state spaces.
\newblock {\em International Conference on Learning Representations}, 2022.

\bibitem{han2024feature}
Andi Han, Wei Huang, Yuan Cao, and Difan Zou.
\newblock On the feature learning in diffusion models.
\newblock In {\em The Thirteenth International Conference on Learning
  Representations}, 2025.

\bibitem{haussmann1986time}
Ulrich~G Haussmann and Etienne Pardoux.
\newblock Time reversal of diffusions.
\newblock {\em The Annals of Probability}, pages 1188--1205, 1986.

\bibitem{he2015delving}
Kaiming He, Xiangyu Zhang, Shaoqing Ren, and Jian Sun.
\newblock Delving deep into rectifiers: Surpassing human-level performance on
  {ImageNet} classification.
\newblock In {\em Proceedings of the IEEE international conference on computer
  vision}, pages 1026--1034, 2015.

\bibitem{ho2020denoising}
Jonathan Ho, Ajay Jain, and Pieter Abbeel.
\newblock Denoising diffusion probabilistic models.
\newblock {\em Advances in Neural Information Processing Systems},
  33:6840--6851, 2020.

\bibitem{hu2024statistical}
Jerry Yao-Chieh Hu, Weimin Wu, Yi-Chen Lee, Yu-Chao Huang, Minshuo Chen, and
  Han Liu.
\newblock On statistical rates of conditional diffusion transformers:
  Approximation, estimation and minimax optimality.
\newblock In {\em The Thirteenth International Conference on Learning
  Representations}, 2025.

\bibitem{hu2021regularization}
Tianyang Hu, Wenjia Wang, Cong Lin, and Guang Cheng.
\newblock Regularization matters: A nonparametric perspective on
  overparametrized neural network.
\newblock In {\em International Conference on Artificial Intelligence and
  Statistics}, pages 829--837. PMLR, 2021.

\bibitem{huang2025convergence}
Daniel~Zhengyu Huang, Jiaoyang Huang, and Zhengjiang Lin.
\newblock Convergence analysis of probability flow {ODE} for score-based
  generative models.
\newblock {\em IEEE Transactions on Information Theory}, 2025.

\bibitem{huang2026art}
Yilie Huang, Wenpin Tang, and Xunyu Zhou.
\newblock Art for diffusion sampling: A reinforcement learning approach to
  timestep schedule.
\newblock {\em arXiv preprint arXiv:2601.18681}, 2026.

\bibitem{jacot2018neural}
Arthur Jacot, Franck Gabriel, and Cl{\'e}ment Hongler.
\newblock Neural tangent kernel: Convergence and generalization in neural
  networks.
\newblock {\em Advances in neural information processing systems}, 31, 2018.

\bibitem{kingma2014adam}
Diederik~P. Kingma and Jimmy Ba.
\newblock Adam: {A} method for stochastic optimization.
\newblock {\em International Conference on Learning Representations}, 2015.

\bibitem{kingma2013auto}
Diederik~P Kingma and Max Welling.
\newblock {Auto-encoding variational Bayes}.
\newblock {\em International Conference on Learning Representations}, 2014.

\bibitem{kuzborskij2022learning}
Ilja Kuzborskij and Csaba Szepesv{\'a}ri.
\newblock Learning {Lipschitz} functions by {GD-trained} shallow
  overparameterized {ReLU} neural networks.
\newblock {\em arXiv preprint arXiv:2212.13848}, 2022.

\bibitem{lee2023convergence}
Holden Lee, Jianfeng Lu, and Yixin Tan.
\newblock Convergence of score-based generative modeling for general data
  distributions.
\newblock In {\em International Conference on Algorithmic Learning Theory},
  pages 946--985. PMLR, 2023.

\bibitem{li2024accelerating}
Gen Li, Yu~Huang, Timofey Efimov, Yuting Wei, Yuejie Chi, and Yuxin Chen.
\newblock Accelerating convergence of score-based diffusion models, provably.
\newblock In {\em Forty-first International Conference on Machine Learning},
  2024.

\bibitem{li2023towards}
Gen Li, Yuting Wei, Yuxin Chen, and Yuejie Chi.
\newblock Towards non-asymptotic convergence for diffusion-based generative
  models.
\newblock In {\em The Twelfth International Conference on Learning
  Representations}, 2024.

\bibitem{li2024sharp}
Gen Li, Yuting Wei, Yuejie Chi, and Yuxin Chen.
\newblock A sharp convergence theory for the probability flow {ODE}s of
  diffusion models.
\newblock {\em arXiv preprint arXiv:2408.02320}, 2024.

\bibitem{li2024d}
Gen Li and Yuling Yan.
\newblock O(d/t) convergence theory for diffusion probabilistic models under
  minimal assumptions.
\newblock In {\em The Thirteenth International Conference on Learning
  Representations}, 2025.

\bibitem{li2020gradient}
Mingchen Li, Mahdi Soltanolkotabi, and Samet Oymak.
\newblock Gradient descent with early stopping is provably robust to label
  noise for overparameterized neural networks.
\newblock In {\em International conference on artificial intelligence and
  statistics}, pages 4313--4324. PMLR, 2020.

\bibitem{li2024generalization}
Puheng Li, Zhong Li, Huishuai Zhang, and Jiang Bian.
\newblock On the generalization properties of diffusion models.
\newblock {\em Advances in Neural Information Processing Systems}, 36, 2024.

\bibitem{li2024good}
Sixu Li, Shi Chen, and Qin Li.
\newblock A good score does not lead to a good generative model.
\newblock {\em arXiv preprint arXiv:2401.04856}, 2024.

\bibitem{liang2025low}
Jiadong Liang, Zhihan Huang, and Yuxin Chen.
\newblock Low-dimensional adaptation of diffusion models: Convergence in total
  variation.
\newblock In {\em The Thirty Eighth Annual Conference on Learning Theory},
  pages 3723--3729. PMLR, 2025.

\bibitem{liao2020conditional}
Shujian Liao, Hao Ni, Lukasz Szpruch, Magnus Wiese, Marc Sabate-Vidales, and
  Baoren Xiao.
\newblock {Conditional Sig-Wasserstein GANs for time series generation}.
\newblock {\em arXiv preprint arXiv:2006.05421}, 2020.

\bibitem{lipman2022flow}
Yaron Lipman, Ricky~TQ Chen, Heli Ben-Hamu, Maximilian Nickel, and Matt Le.
\newblock Flow matching for generative modeling.
\newblock {\em International Conference on Learning Representations}, 2022.

\bibitem{liu2022loss}
Chaoyue Liu, Libin Zhu, and Mikhail Belkin.
\newblock Loss landscapes and optimization in over-parameterized non-linear
  systems and neural networks.
\newblock {\em Applied and Computational Harmonic Analysis}, 59:85--116, 2022.

\bibitem{loshchilov2016sgdr}
Ilya Loshchilov and Frank Hutter.
\newblock {SGDR}: Stochastic gradient descent with warm restarts.
\newblock {\em International Conference on Learning Representations}, 2017.

\bibitem{mei2023deep}
Song Mei and Yuchen Wu.
\newblock {\em IEEE Transactions on Information Theory}, 2025.

\bibitem{nguyen2021tight}
Quynh Nguyen, Marco Mondelli, and Guido~F Montufar.
\newblock Tight bounds on the smallest eigenvalue of the neural tangent kernel
  for deep {ReLU} networks.
\newblock In {\em International Conference on Machine Learning}, pages
  8119--8129. PMLR, 2021.

\bibitem{oko2023diffusion}
Kazusato Oko, Shunta Akiyama, and Taiji Suzuki.
\newblock Diffusion models are minimax optimal distribution estimators.
\newblock In {\em International Conference on Machine Learning}, pages
  26517--26582. PMLR, 2023.

\bibitem{qi1999some}
Feng Qi and Jia-Qiang Mei.
\newblock Some inequalities of the incomplete {Gamma} and related functions.
\newblock {\em Zeitschrift f{\"u}r Analysis und ihre Anwendungen},
  18(3):793--799, 1999.

\bibitem{raskutti2014early}
Garvesh Raskutti, Martin~J Wainwright, and Bin Yu.
\newblock Early stopping and non-parametric regression: an optimal
  data-dependent stopping rule.
\newblock {\em The Journal of Machine Learning Research}, 15(1):335--366, 2014.

\bibitem{reeve2020optimistic}
Henry Reeve and Ata Kaban.
\newblock Optimistic bounds for multi-output learning.
\newblock In {\em International Conference on Machine Learning}, pages
  8030--8040. PMLR, 2020.

\bibitem{rezende2015variational}
Danilo Rezende and Shakir Mohamed.
\newblock Variational inference with normalizing flows.
\newblock In {\em International conference on machine learning}, pages
  1530--1538. PMLR, 2015.

\bibitem{rojas2025diffuse}
Kevin Rojas, Yuchen Zhu, Sichen Zhu, Felix X-F Ye, and Molei Tao.
\newblock Diffuse everything: Multimodal diffusion models on arbitrary state
  spaces.
\newblock {\em arXiv preprint arXiv:2506.07903}, 2025.

\bibitem{rudelson2013hanson}
Mark Rudelson and Roman Vershynin.
\newblock Hanson-wright inequality and sub-gaussian concentration.
\newblock {\em arXiv preprint arXiv:1306.2872}, 2013.

\bibitem{sattarov2023findiff}
Timur Sattarov, Marco Schreyer, and Damian Borth.
\newblock Findiff: Diffusion models for financial tabular data generation.
\newblock In {\em Proceedings of the Fourth ACM International Conference on AI
  in Finance}, pages 64--72, 2023.

\bibitem{scetbon2021spectral}
Meyer Scetbon and Zaid Harchaoui.
\newblock A spectral analysis of dot-product kernels.
\newblock In {\em International conference on artificial intelligence and
  statistics}, pages 3394--3402. PMLR, 2021.

\bibitem{shah2023learning}
Kulin Shah, Sitan Chen, and Adam Klivans.
\newblock Learning mixtures of {Gaussians} using the {DDPM} objective.
\newblock {\em Advances in Neural Information Processing Systems},
  36:19636--19649, 2023.

\bibitem{smith2022simplified}
Jimmy~TH Smith, Andrew Warrington, and Scott~W Linderman.
\newblock Simplified state space layers for sequence modeling.
\newblock {\em International Conference on Learning Representations}, 2023.

\bibitem{sohl2015deep}
Jascha Sohl-Dickstein, Eric Weiss, Niru Maheswaranathan, and Surya Ganguli.
\newblock Deep unsupervised learning using nonequilibrium thermodynamics.
\newblock In {\em International conference on machine learning}, pages
  2256--2265. PMLR, 2015.

\bibitem{song2019generative}
Yang Song and Stefano Ermon.
\newblock Generative modeling by estimating gradients of the data distribution.
\newblock {\em Advances in neural information processing systems}, 32, 2019.

\bibitem{song2020improved}
Yang Song and Stefano Ermon.
\newblock Improved techniques for training score-based generative models.
\newblock {\em Advances in neural information processing systems},
  33:12438--12448, 2020.

\bibitem{song2020score}
Yang Song, Jascha Sohl-Dickstein, Diederik~P Kingma, Abhishek Kumar, Stefano
  Ermon, and Ben Poole.
\newblock Score-based generative modeling through stochastic differential
  equations.
\newblock {\em International Conference on Learning Representations}, 2021.

\bibitem{suh2024survey}
Namjoon Suh and Guang Cheng.
\newblock A survey on statistical theory of deep learning: Approximation,
  training dynamics, and generative models.
\newblock {\em Annual Review of Statistics and Its Application},
  12(1):177--207, 2025.

\bibitem{tang2024contractive}
Wenpin Tang and Hanyang Zhao.
\newblock Contractive diffusion probabilistic models.
\newblock {\em arXiv preprint arXiv:2401.13115}, 2024.

\bibitem{vahdat2021score}
Arash Vahdat, Karsten Kreis, and Jan Kautz.
\newblock Score-based generative modeling in latent space.
\newblock {\em Advances in Neural Information Processing Systems},
  34:11287--11302, 2021.

\bibitem{vincent2011connection}
Pascal Vincent.
\newblock A connection between score matching and denoising autoencoders.
\newblock {\em Neural computation}, 23(7):1661--1674, 2011.

\bibitem{wainwright2019high}
Martin~J Wainwright.
\newblock {\em High-dimensional statistics: A non-asymptotic viewpoint},
  volume~48.
\newblock Cambridge university press, 2019.

\bibitem{wang2019neural}
Lingxiao Wang, Qi~Cai, Zhuoran Yang, and Zhaoran Wang.
\newblock Neural policy gradient methods: Global optimality and rates of
  convergence.
\newblock {\em International Conference on Learning Representations}, 2020.

\bibitem{wang2024evaluating}
Yuqing Wang, Ye~He, and Molei Tao.
\newblock Evaluating the design space of diffusion-based generative models.
\newblock {\em Advances in Neural Information Processing Systems},
  37:19307--19352, 2024.

\bibitem{wei2017early}
Yuting Wei, Fanny Yang, and Martin~J Wainwright.
\newblock Early stopping for kernel boosting algorithms: A general analysis
  with localized complexities.
\newblock {\em Advances in Neural Information Processing Systems}, 30, 2017.

\bibitem{wibisono2024optimal}
Andre Wibisono, Yihong Wu, and Kaylee~Yingxi Yang.
\newblock Optimal score estimation via empirical bayes smoothing.
\newblock In {\em The Thirty Seventh Annual Conference on Learning Theory},
  pages 4958--4991. PMLR, 2024.

\bibitem{wu2024stochastic}
Yuchen Wu, Yuxin Chen, and Yuting Wei.
\newblock Stochastic runge-kutta methods: Provable acceleration of diffusion
  models.
\newblock {\em arXiv preprint arXiv:2410.04760}, 2024.

\bibitem{yeh2009comparisons}
I-Cheng Yeh and Che-hui Lien.
\newblock The comparisons of data mining techniques for the predictive accuracy
  of probability of default of credit card clients.
\newblock {\em Expert systems with applications}, 36(2):2473--2480, 2009.

\bibitem{zhang2024emergence}
Huijie Zhang, Jinfan Zhou, Yifu Lu, Minzhe Guo, Peng Wang, Liyue Shen, and Qing
  Qu.
\newblock The emergence of reproducibility and consistency in diffusion models.
\newblock In {\em Forty-first International Conference on Machine Learning},
  2024.

\bibitem{zhang2024minimax}
Kaihong Zhang, Heqi Yin, Feng Liang, and Jingbo Liu.
\newblock Minimax optimality of score-based diffusion models: Beyond the
  density lower bound assumptions.
\newblock In {\em Forty-first International Conference on Machine Learning},
  2024.

\bibitem{zhang2025convergence}
Yiran Zhang, Weihang Xu, Mo~Zhou, Maryam Fazel, and Simon~Shaolei Du.
\newblock {Convergence Dynamics of Over-Parameterized Score Matching for a
  Single Gaussian}.
\newblock In {\em The Fourteenth International Conference on Learning
  Representations}, 2026.

\bibitem{zhang2025generalization}
Zekai Zhang, Xiao Li, Xiang Li, Lianghe Shi, Meng Wu, Molei Tao, and Qing Qu.
\newblock Generalization of diffusion models arises with a balanced
  representation space.
\newblock In {\em The Fourteenth International Conference on Learning
  Representations}, 2026.

\bibitem{zhao2016energy}
Junbo Zhao, Michael Mathieu, and Yann LeCun.
\newblock Energy-based generative adversarial network.
\newblock {\em International Conference on Learning Representations}, 2017.

\bibitem{zhao2025diffusion}
Zihao Zhao, Christopher Yeh, Lingkai Kong, and Kai Wang.
\newblock Diffusion-dfl: Decision-focused diffusion models for stochastic
  optimization.
\newblock {\em arXiv preprint arXiv:2510.11590}, 2025.

\bibitem{zou2020gradient}
Difan Zou, Yuan Cao, Dongruo Zhou, and Quanquan Gu.
\newblock Gradient descent optimizes over-parameterized deep {ReLU} networks.
\newblock {\em Machine learning}, 109:467--492, 2020.

\end{thebibliography}
\bibliographystyle{plain}

\newpage
\appendix

\begin{center}
    {\Huge  \textbf{\paperappendixtitle}}
\end{center}

\section{Auxiliary Results for Theorem \ref{thm:approx-score-L2}}\label{sec:pf of lip-t}
This appendix is devoted to the proofs of an auxiliary lemma used in the proof for Theorem \ref{thm:approx-score-L2}.

\begin{Lemma}\label{lemma:lip-t}
    Suppose Assumptions \ref{ass:bounded target} and \ref{ass:g} hold.
    For each $ R > 0 $, the regression function $ f_{*}(x, t) $ is $ \beta_2 (R) $-Lipschitz in $ t $ for all $ \norm{x}_{\infty} \leq R$ and $ t \in [T_{0}, T] $, i.e., $ \abs{f_{*}(x, t) - f_{*}(x, t')}_{2} \leq \beta_2(R)\abs{t - t'} $, where $ \beta_2(R) = \mathcal{O}(R) $. 
    \end{Lemma}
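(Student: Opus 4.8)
The plan is to prove the $t$-Lipschitz continuity of $f_*(x,t) = \E[X_0 \mid X_t = x]$ by differentiating under the integral sign, since $f_*(x,t) = \int x_0 \, p_{0\mid t}(x_0 \mid x) \, {\rm d}x_0$ with $p_{0\mid t}(x_0\mid x) \propto p_{t\mid 0}(x \mid x_0) p_0(x_0)$ and the transition kernel $p_{t\mid 0}(x\mid x_0) = (2\pi h(t))^{-d/2}\exp(-\|x - \alpha(t)x_0\|_2^2/(2h(t)))$ is smooth in $t$ on $[T_0,\infty)$ (where $h(t) \geq h(T_0) > 0$, so there is no singularity). First I would write $f_*(x,t)$ as a ratio $f_*(x,t) = \frac{\int x_0\, p_{t\mid 0}(x\mid x_0)p_0(x_0)\,{\rm d}x_0}{\int p_{t\mid 0}(x\mid x_0)p_0(x_0)\,{\rm d}x_0} = \frac{g_1(x,t)}{g_0(x,t)}$ and compute $\partial_t f_* = \frac{\partial_t g_1}{g_0} - \frac{g_1 \partial_t g_0}{g_0^2}$, which will express $\partial_t f_*$ in terms of conditional expectations of the form $\E[X_0\, \phi(X_0,x,t)\mid X_t = x]$ and $\E[\phi(X_0,x,t)\mid X_t=x]$, where $\phi = \partial_t \log p_{t\mid 0}$.

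The key computation is $\partial_t \log p_{t\mid 0}(x\mid x_0) = -\frac{d}{2}\frac{h'(t)}{h(t)} - \partial_t\!\left(\frac{\|x-\alpha(t)x_0\|_2^2}{2h(t)}\right)$; using $\alpha'(t) = -\tfrac12 g(t)\alpha(t)$ and $h'(t) = -2\alpha(t)\alpha'(t) = g(t)\alpha^2(t)$ (both bounded since $g$ is bounded by Assumption~\ref{ass:g} and $\alpha(t)\in(0,1]$), one sees $\partial_t\log p_{t\mid 0}(x\mid x_0)$ is a polynomial of degree $2$ in $(x, x_0)$ with coefficients bounded uniformly over $t\in[T_0,\infty)$. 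Combined with $\|x\|_\infty \leq R$ (so $\|x\|_2 \leq \sqrt d R$) and $\|X_0\|_2 \leq D$ a.s.\ (Assumption~\ref{ass:bounded target}), every term in the expression for $\partial_t f_*$ is bounded by $O(\sqrt d (\sqrt d R + D)\cdot(\sqrt d R + D)) $; the dominant scaling in $R$ (for $R$ large relative to the fixed constants $D, T_0, T$, and absorbing $d$-factors into the $O(\cdot)$ appropriately to match the claimed $\beta_t(R) = O(\sqrt d R)$) gives $\|\partial_t f_*(x,t)\|_2 = O(\sqrt d R)$. Integrating $\partial_t f_*$ over $[t,t']$ then yields $\|f_*(x,t) - f_*(x,t')\|_2 \leq \beta_t(R)|t-t'|$. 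I would also need to justify the differentiation under the integral sign via a dominated-convergence argument, using that $h(t)$ is bounded away from $0$ on $[T_0,\infty)$ and that $p_0$ has compact support, so the integrands and their $t$-derivatives are dominated by integrable (in fact Gaussian-tailed) functions locally uniformly in $t$.

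The main obstacle is bookkeeping the dependence on $R$ and $d$ carefully enough to land exactly at $\beta_t(R) = O(\sqrt d R)$ rather than, say, $O(\sqrt d R^2)$: naively the term $\|x - \alpha(t)x_0\|_2^2/h(t)$ is quadratic in $x$ and would suggest an $R^2$ bound. The resolution is that this quadratic term appears only inside a \emph{centered} conditional expectation — i.e.\ $\partial_t f_* = \Cov(X_0, \partial_t\log p_{t\mid 0}(x\mid X_0) \mid X_t = x)$ (plus a term that vanishes, since the constant $-\frac d2 h'/h$ has zero covariance) — and the conditional distribution of $X_0$ given $X_t = x$ is concentrated on the support of $p_0$, which has diameter $\leq 2D$ independent of $R$. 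So one should rewrite $\partial_t f_* = \E[(X_0 - f_*(x,t))\,\partial_t\log p_{t\mid0}(x\mid X_0)\mid X_t=x]$ and bound the quadratic-in-$x$ part using $\|X_0 - f_*\|_2 \leq 2D$ together with the fact that $\partial_t\log p_{t\mid 0}$, viewed as a function of $X_0$ with $x$ fixed, has the linear-in-$x$ coefficient $\frac{\alpha(t)}{h(t)}\langle x, X_0\rangle + \cdots$ contributing the $\sqrt d R$ factor; the genuinely $R^2$-order pieces are $X_0$-independent and hence killed by the centering. This covariance reformulation is the crux that makes the linear-in-$R$ bound go through, and I expect it to be where the proof in the appendix spends its effort.
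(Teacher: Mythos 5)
Your proposal is correct and follows essentially the same route as the paper's proof: differentiate $f_*(x,t)=\E[X_0\mid X_t=x]$ in $t$ under the integral via the quotient rule, compute $\partial_t p_{t\mid 0}$ using $\alpha'=-\tfrac12 g\alpha$ and $h'=g\alpha^2$, and observe that the $x_0$-independent terms (including the $\|x\|_2^2$-order piece) cancel, leaving exactly the centered/covariance expression $\frac{\alpha g}{2h^2}\bigl(\alpha\,\E[\|X_0\|_2^2(X_0-\E[X_0\mid X_t])\mid X_t=x]-(1+\alpha^2)\,x\,\mathrm{Cov}(X_0\mid X_t=x)\bigr)$, which is bounded by $O(\sqrt d R)$ using $\|X_0\|_2\le D$, $h(t)\ge h(T_0)$, and boundedness of $g$. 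Your covariance reformulation is precisely the cancellation the paper carries out explicitly, so there is no substantive gap.
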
 

\begin{proof}
We start with computing the derivative of $f_*$ with respect to $t$. The dominated convergence theorem implies
\begin{align}
\frac{\partial}{\partial t}f_{*}(x, t) & = \frac{\partial}{\partial t}\int x_{0}p_{0\vert t}(x_{0} \vert x){\rm d}x_{0} \nonumber\\ & = \frac{\partial}{\partial t}\int \frac{x_{0}p_{t \vert 0}(x\vert x_{0})p_{0}(x_{0})}{\int p_{t \vert 0}(x \vert x_{0}')p_{0}(x_{0}'){\rm d}x_{0}'}{\rm d}x_{0} \nonumber \\ & = \int \frac{x_{0}\frac{\partial}{\partial t}p_{t\vert 0}(x \vert x_{0})p_{0}(x_{0})}{\int p_{t \vert 0}(x \vert x_{0}')p_{0}(x_{0}'){\rm d}x_{0}'}{\rm d}x_{0} \nonumber \\ & \qquad - \int \frac{x_{0}p_{t\vert 0}(x \vert x_{0})p_{0}(x_{0})\int \frac{\partial}{\partial t}p_{t\vert 0}(x \vert x_{0}'')p_{0}(x_{0}''){\rm d}x_{0}''}{\parenthesis{\int p_{t \vert 0}(x \vert x_{0}')p_{0}(x_{0}'){\rm d}x_{0}'}^{2}}{\rm d}x_{0}. \label{eq:dev-f*-t}
\end{align}
To proceed, recall that $ X_{t} \vert X_{0} \sim \mcal{N}(\alpha(t)X_{0}, h(t)I_{d}) $ with $ \alpha(t) = \exp\parenthesis{-\int_{0}^{t}\frac{g(s)}{2} {\rm d}s} $ and $ h(t) = 1 - \alpha^{2}(t) $. By straightforward calculations,
\begin{align}
& \frac{\partial }{\partial t}p_{t\vert 0}(x \vert x_{0}) \nonumber\\ & = \frac{\partial }{\partial t} \parenthesis{(2\pi h(t))^{-d/2}\exp\parenthesis{-\frac{\norm{x - \alpha(t)x_{0}}_{2}^{2}}{2h(t)}}} \nonumber\\ & = -\frac{d}{2}(2\pi h(t))^{-\frac{d}{2}-1}(2\pi)h'(t)\exp\parenthesis{-\frac{\norm{x - \alpha(t)x_{0}}_{2}^{2}}{2h(t)}} \nonumber \\ & \qquad + (2\pi h(t))^{-d/2}\exp\parenthesis{-\frac{\norm{x - \alpha(t)x_{0}}_{2}^{2}}{2h(t)}}\parenthesis{\frac{2(x - \alpha(t)x_{0})^{\top}x_{0}\alpha'(t)}{2h(t)} + \frac{\norm{x - \alpha(t)x_{0}}_{2}^{2}h'(t)}{2h^{2}(t)}} \nonumber\\ & = \frac{p_{t\vert 0}(x \vert x_{0})}{2h^{2}(t)}\parenthesis{-dh(t)h'(t) + 2(x - \alpha(t)x_{0})^{\top}x_{0}\alpha'(t)h(t) + \norm{x - \alpha(t)x_{0}}_{2}^{2}h'(t)}. \label{eq:dev-pt0-1}
\end{align} 
Since $ \alpha'(t) = -\alpha(t)g(t)/2 $ and $ h'(t) = -2\alpha(t)\alpha'(t) = \alpha^{2}(t)g(t) $, we can rewrite (\ref{eq:dev-pt0-1}) as
\begin{align}
    & \frac{\partial }{\partial t}p_{t\vert 0}(x \vert x_{0}) \nonumber \\ & = \frac{p_{t\vert 0}(x \vert x_{0})}{2h^{2}(t)}\parenthesis{-dh(t)\alpha^{2}(t)g(t) - (x - \alpha(t)x_{0})^{\top}x_{0}\alpha(t)g(t)h(t) + \norm{x - \alpha(t)x_{0}}_{2}^{2}\alpha^{2}(t)g(t)} \nonumber \\ & = p_{t\vert 0}(x \vert x_{0})\frac{\alpha(t)g(t)}{2h^{2}(t)}\parenthesis{-dh(t)\alpha(t) - (x - \alpha(t)x_{0})^{\top}x_{0}h(t) + \norm{x - \alpha(t)x_{0}}_{2}^{2}\alpha(t)} \nonumber \\ & = p_{t\vert 0}(x \vert x_{0})\frac{\alpha(t)g(t)}{2h^{2}(t)}\parenthesis{-dh(t)\alpha(t) + \alpha(t)\norm{x}_{2}^{2} - (1 +\alpha^{2}(t))x_{0}^{\top}x + \alpha(t)\norm{x_{0}}_{2}^{2}}.
    \label{eq:dev-pt0-2}
\end{align}
Plugging (\ref{eq:dev-pt0-2}) back into (\ref{eq:dev-f*-t}), we  have
\begin{align*}
    & \int \frac{x_{0}\frac{\partial}{\partial t}p_{t\vert 0}(x \vert x_{0})p_{0}(x_{0})}{\int p_{t \vert 0}(x \vert x_{0}')p_{0}(x_{0}'){\rm d}x_{0}'}{\rm d}x_{0} \\ & = \frac{\alpha(t)g(t)}{2h^{2}(t)} \E\bracket{X_{0}\parenthesis{-dh(t)\alpha(t) + \alpha(t)\norm{X_{t}}_{2}^{2} - (1 +\alpha^{2}(t))X_{0}^{\top}X_{t} + \alpha(t)\norm{X_{0}}_{2}^{2}}\bigg\vert X_{t} = x} \\ & = \frac{\alpha(t)g(t)}{2h^{2}(t)}\bigg(-dh(t)\alpha(t)\E\bracket{X_{0}\vert X_{t} = x} + \alpha(t)\norm{x}_{2}^{2}\E\bracket{X_{0}\vert X_{t} = x} \\ & \qquad\qquad\qquad\qquad - (1 + \alpha^{2}(t))x\E\bracket{\norm{X_{0}}^{2}_{2}\big\vert X_{t} = x}+ \alpha(t)\E\bracket{X_{0}\norm{X_{0}}_{2}^{2}\vert X_{t} = x}\bigg),
    \end{align*}
    and also
    \begin{align*}
    & \int \frac{x_{0}p_{t\vert 0}(x \vert x_{0})p_{0}(x_{0})\int \frac{\partial}{\partial t}p_{t\vert 0}(x \vert x_{0}'')p_{0}(x_{0}''){\rm d}x_{0}''}{\parenthesis{\int p_{t \vert 0}(x \vert x_{0}')p_{0}(x_{0}'){\rm d}x_{0}'}^{2}}{\rm d}x_{0} \\ & = \frac{\alpha(t)g(t)}{2h^{2}(t)}\E\bracket{X_{0}\E\bracket{-dh(t)\alpha(t) + \alpha(t)\norm{X_{t}}_{2}^{2} - (1 +\alpha^{2}(t))X_{0}^{\top}X_{t} + \alpha(t)\norm{X_{0}}_{2}^{2} \bigg\vert X_{t}} \bigg\vert X_{t} = x} \\ & = \frac{\alpha(t)g(t)}{2h^{2}(t)}\E\bracket{-dh(t)\alpha(t) + \alpha(t)\norm{X_{t}}_{2}^{2} - (1 +\alpha^{2}(t))X_{0}^{\top}X_{t} + \alpha(t)\norm{X_{0}}_{2}^{2} \bigg\vert X_{t} = x}\E\bracket{X_{0} \vert X_{t} = x} \\ & = \frac{\alpha(t)g(t)}{2h^{2}(t)}\parenthesis{-dh(t)\alpha(t) + \alpha(t)\norm{x}_{2}^{2} - (1 + \alpha^{2}(t))x^{\top}\E\bracket{X_{0}\vert X_{t} = x} + \alpha(t)\E\bracket{\norm{X_{0}}^{2}_{2}\vert X_{t} = x}}\E\bracket{X_{0} \vert X_{t} = x}.
\end{align*}
Therefore, we conclude that
\begin{align*}
    \frac{\partial}{\partial t}f_{*}(x, t) & = \frac{\alpha(t)g(t)}{2h^{2}(t)}\bigg(\alpha(t)\E\bracket{\norm{X_{0}}_{2}^{2}\parenthesis{X_{0} - \E\bracket{X_{0}\vert X_{t}}}\big\vert X_{t} = x} \\ &  \qquad\qquad\qquad\qquad -(1 + \alpha^{2}(t))x\parenthesis{\E\bracket{\norm{X_{0}}_{2}^{2} \vert X_{t} = x} - \norm{\E\bracket{X_{0} \vert X_{t} = x}}_{2}^{2}}\bigg) \\ & = \frac{\alpha(t)g(t)}{2h^{2}(t)}\bigg(\alpha(t)\E\bracket{\norm{X_{0}}_{2}^{2}\parenthesis{X_{0} - \E\bracket{X_{0}\vert X_{t}}}\big\vert X_{t} = x}-(1 + \alpha^{2}(t))x{\rm Cov}(X_{0}\vert X_{t} = x)\bigg). 
\end{align*}
The Pythagorean theorem implies that $ \norm{X_{0} - \E\bracket{X_{0} \vert X_{t}}}_{2} \leq \norm{X_{0}}_{2} $. Since $ \norm{X_{0}}_{2} \leq D$ by Assumption \ref{ass:bounded target}, we can apply the triangle inequality to obtain
\begin{align*}
\norm{\frac{\partial}{\partial t}f_{*}(x, t)}_{2} & \leq  \frac{\alpha(t)g(t)}{2h^{2}(t)}\bigg[\alpha(t)\E\bracket{\norm{X_{0}}_{2}^{2}\norm{X_{0} - \E\bracket{X_{0} \vert X_{t}}}_{2} \bigg\vert X_{t} = x} \\ & \qquad\qquad\qquad + (1 + \alpha^{2}(t))\norm{x}_{2}\norm{{\rm Cov}(X_{0}\vert X_{t} = x)}_{2}\bigg] \\ & \leq \frac{\max_{t\in [T_0, T]}g(t)}{2h^2(T_0)} (D^3 + 2D^2 \sqrt{d}R) = \mathcal{O}(R),
\end{align*} 
where we have used the facts that $ \alpha(t) \leq 1 $, $ h(t) \geq h(T_{0}) $ and $ g(t) $ is uniformly bounded on $ [T_{0}, T] $.
   \end{proof}

\section{Proof of Theorem \ref{thm:coupling}}\label{sec:proof-couploing}

To prove Theorem \ref{thm:coupling}, we first show a linear convergence rate of the GD on the training dataset $S = \curly{(t_{j}, X_{0, j}, X_{t_{j}})}_{j=1}^N$. Recall the definition $ z_{j} = (X_{t_{j}}, t_{j} - T_{0}) $. Consider the Gram matrix $ H(\tau) \in \R^{dN \times dN} $ at each iteration $ \tau $ defined as the following block matrix:
\begin{align*}
H(\tau) \coloneqq 
\begin{pmatrix}
    H_{11}(\tau) & \dots & H_{1N}(\tau) \\  \vdots & \ddots & \vdots \\ H_{N1}(\tau) & \dots & H_{NN}(\tau)
\end{pmatrix},
\quad
    H^{ik}_{j\ell}(\tau) = \frac{1}{m}z_{j}^{\top}z_{\ell}\sum_{r = 1}^{m}a_{r}^{i}a_{r}^{k}\mathbb{I}\curly{z_{j}^{\top}w_{r}(\tau)\geq 0, z_{\ell}^{\top}w_{r}(\tau)\geq 0}.
\end{align*}
It is straightforward to check that $ H = \E\bracket{H(0)} $ with the expectation taken over the random initialization. For ease of presentation, recall that we have set $ C_{\max} = \sqrt{R^{2} + (T - T_{0})^{2}} $ so that $ \norm{(X_{t_{j}}, t_{j} - T_{0})}_{2} \in [\Delta, C_{\max}] $ with probability at least $1 - \delta_1$ by Lemma \ref{lemma:sampling-prob}. Moreover, we denote the activation pattern of neural $ w_{r} $ for sample $ j $ at iteration $ \tau $ as $ \mathbb{I}_{j, r}(\tau) \coloneqq \mathbb{I}\curly{w_{r}(\tau)^{\top}z_{j} \geq 0} $. The convergence of the GD algorithm is given in the next result.

\begin{Lemma}[Convergence Rate of GD]\label{thm:gd-conv}
    Suppose Assumptions \ref{ass:bounded target} and \ref{ass:gram-eigen} hold. If we set $ m = \Omega\parenthesis{\frac{d^5 N^6 C_{\max}^8 D^2}{\lambda_0^4\delta^2 \Delta^2}}$ with $ w_{r}(0) $ and $ a_{r}^{i} $ being initialized as in \eqref{eq:init a}--\eqref{eq:init w}, and we set $ \eta = \mcal{O}\parenthesis{\frac{\lambda_{0}}{(dN)^{2}C_{\max}^{4}}} $, then with probability at least $ 1 - \delta - \delta_1 $, for all $ \tau \geq 0 $ and $ r = 1, \dots, m $ simultaneously, we have
    \begin{align}
    \widehat{\mcal{L}}({\bf W}(\tau)) \leq \parenthesis{1 - \eta \lambda_{0}}^{\tau}\widehat{\mcal{L}}({\bf W}(0)),\label{eq:conv rate}
    \end{align}
    and 
    \begin{align}\label{eq:w_norm_bound}
        \norm{w_{r}(\tau) - w_{r}(0)}_{2} \leq \frac{2C_{\max} \sqrt{d}ND}{\sqrt{m}\lambda_0} \eqqcolon R_w.
    \end{align}
\end{Lemma}

\begin{proof}
    Following the idea in \cite{du2018gradient} and \cite{arora2019fine}, we prove the convergence of GD by a double induction.
    The induction is to show that (\ref{eq:conv rate}) holds for all $ \tau $. 
    It is straightforward to see the inequality holds for $ \tau = 0 $. Assuming (\ref{eq:conv rate}) holds for $ 0 \leq \tau' \leq \tau $, we will show it is also true for $ \tau' = \tau+1 $. Let $ u(\tau) = {\rm vec}(u_{1}, \dots, u_{N})(\tau) $ and $ y = {\rm vec}(y_{1}, \dots, y_{N}) $ with $ u_{j}(\tau) = f_{\bf W(\tau)}(X_{t_{j}}, t_{j}) $ and $ y_{j} = X_{0, j} $. We first need the following result {hold uniformly} for all $ \tau' = 0, \dots, \tau + 1 $:
    \begin{align}
    \norm{w_{r}(\tau') - w_{r}(0)}_{2} & = \norm{\eta \sum_{\tau'' = 0}^{\tau'-1}\frac{\partial \widehat{\mcal{L}}({\bf W}(\tau''))}{\partial w_{r}(\tau'')}}_{2} \nonumber \\ & \leq \eta \sum_{\tau'' = 0}^{\tau'-1}\norm{\frac{\partial \widehat{\mcal{L}}({\bf W}(\tau''))}{\partial w_{r}(\tau'')}}_{2} \nonumber \\ & \leq \eta C_{\max} \sum_{\tau'' = 0}^{\tau'-1} \frac{\sqrt{dN}\norm{u(\tau'') - y}_{2}}{\sqrt{m}} \label{eq:wr-1} \\ & \leq \frac{\eta C_{\max}\sqrt{dN}}{\sqrt{m}}\sum_{\tau'' = 0}^{\tau'-1}\parenthesis{1 - \eta \lambda_{0}}^{\tau''/2}\norm{u(0) - y}_{2} \label{eq:wr-2}\\ & \leq \frac{\eta C_{\max}\sqrt{dN}}{\sqrt{m}}\sum_{\tau'' = 0}^{\infty}(1 - \eta \lambda_{0}/2)^{\tau''}\norm{u(0) - y}_{2} \label{eq:wr-3}\\ & = \frac{2C_{\max}\sqrt{dN}\norm{u(0) - y}_{2}}{\sqrt{m}\lambda_{0}}. \label{eq:wr-4}
    \end{align}
    Here, we have an upper bound on gradient (\ref{eq:gradient}) to derive (\ref{eq:wr-1}). Also, (\ref{eq:wr-2}) and (\ref{eq:wr-3}) follow from the induction hypothesis (\ref{eq:conv rate}) and the fact that $ \sqrt{1 - x} \leq 1 - x/2 $. 
    Furthermore, the designed initialization implies $u(0) = 0$ and thus
    \begin{align*}
  \norm{u(0) - y}_2 = \norm{y}_2 = \sqrt{\sum_{j=1}^N \norm{y_j}_2^2} \leq \sqrt{N}D,
    \end{align*}
    where we use the fact that $ \norm{y_{j}}_{2} \leq D$.  Thus, we have
    \begin{align}
    \norm{w_{r}(\tau') - w_{r}(0)}_{2} \leq \frac{2C_{\max} \sqrt{d}ND}{\sqrt{m}\lambda_0} = R_w, \quad \forall  \tau' = 0, \dots, \tau +1, r = 1, \dots, m. \label{eq:wr-prob}
    \end{align}

    Define the following index sets
    \begin{align*}
    S_{j} \coloneqq \curly{r \in [m]: \mathbb{I}\curly{A_{j, r}} = 0}, \quad \bar{S}_{j} \coloneqq \curly{r \in [m]: \mathbb{I}\curly{A_{j, r}} \neq 0},
    \end{align*}
    where $ A_{j, r} \coloneqq \curly{\abs{w_{r}(0)^{\top}z_{j}} \leq R_{w} C_{\max}} $. Note that
    \begin{align}
    \mathbb{I}\curly{\mathbb{I}_{j, r}(\tau') \neq \mathbb{I}_{j, r}(0)} \leq \mathbb{I}\curly{A_{j, r}} + \mathbb{I}\curly{\norm{w_{r}(\tau') - w_{r}(0)}_{2} > R_w}. \label{eq:Sj-ineq}
    \end{align}
    To see this, note that if $ \norm{w_{r}(\tau') - w_{r}(0)}_{2} \leq R_{w} $  it follows $ \abs{w_{r}(\tau')^{\top}z_{j} - w_{r}(0)^{\top}z_{j}} \leq R_{w}C_{\max} $. If $ w_{r}(0)^{\top}z_{j} > R_{w}C_{\max} $, then $ w_{r}(\tau')^{\top}z_{j} > 0 $. Similarly, if $ w_{r}(0)^{\top}z_{j} < -R_{w}C_{\max} $, then $ w_{r}(\tau')^{\top}z_{j} < 0 $. Hence, we must have $ \mathbb{I}_{j, r}(\tau') = \mathbb{I}_{j, r}(0) $. From (\ref{eq:wr-prob}) and (\ref{eq:Sj-ineq}), we deduce that all neurons {with indices} in $ S_{j} $ will not change their activation pattern on $ z_{j} $ during optimization, i.e., 
    \begin{align}
    r \in S_{j} \implies  \mathbb{I}_{j, r}(\tau') = \mathbb{I}_{j, r}(0) , \quad \forall  \tau' = 0, \dots, \tau + 1. \label{eq:Sj-prop}
    \end{align}
    With such a partition, we can write the dynamics of $ u_{j}^{i}(\tau) $ as
    \begin{align}
    u_{j}^{i}(\tau+1) - u_{j}^{i}(\tau) & = \frac{1}{\sqrt{m}}\sum_{r = 1}^{m}a_{r}^{i}\bracket{\sigma(w_{r}(\tau+1)^{\top}z_{j}) - \sigma(w_{r}(\tau)^{\top}z_{j})} \nonumber \\ & = \frac{1}{\sqrt{m}}\sum_{r \in S_{j}}a_{r}^{i}\bracket{\sigma(w_{r}(\tau+1)^{\top}z_{j}) - \sigma(w_{r}(\tau)^{\top}z_{j})} \nonumber\\ & \qquad + \frac{1}{\sqrt{m}}\sum_{r \in \bar{S}_{j}}a_{r}^{i}\bracket{\sigma(w_{r}(\tau+1)^{\top}z_{j}) - \sigma(w_{r}(\tau)^{\top}z_{j})}. \label{eq:u-dynamics}
    \end{align}
    By utilizing the condition (\ref{eq:Sj-prop}), we bound the first summation in (\ref{eq:u-dynamics})  as
    \begin{align}
        & \frac{1}{\sqrt{m}}\sum_{r \in S_{j}}^{}a_{r}^{i}\bracket{\sigma(w_{r}(\tau+1)^{\top}z_{j}) - \sigma(w_{r}(\tau)^{\top}z_{j})} \nonumber \\ & = \frac{1}{\sqrt{m}}\sum_{r \in S_{j}}a_{r}^{i}\mathbb{I}_{j, r}(\tau)\parenthesis{w_{r}(\tau+1)^{\top}z_{j} -w_{r}(\tau)^{\top}z_{j}} \nonumber \\ & = \frac{1}{\sqrt{m}}\sum_{r \in S_{j}}a_{r}^{i}\mathbb{I}_{j, r}(\tau)\parenthesis{-\frac{\eta}{\sqrt{m}}\sum_{\ell=1}^{N}\sum_{k = 1}^{d}(u_{\ell}^{k}(\tau) - y_{\ell}^{k})a_{r}^{k}z_{\ell}\mathbb{I}_{\ell, r}(\tau)}^{\top}z_{j} \label{eq:Sj-1} \\ & = -\frac{\eta}{m}\sum_{\ell = 1}^{N}\sum_{k = 1}^{d}(u_{\ell}^{k}(\tau) - y_{\ell}^{k})z_{j}^{\top}z_{\ell}\sum_{r \in S_{j}}a_{r}^{i}a_{r}^{k}\mathbb{I}_{j, r}(\tau)\mathbb{I}_{\ell, r}(\tau) \nonumber \\ & = -\eta\sum_{\ell = 1}^{N}\sum_{k = 1}^{d}(u_{\ell}^{k}(\tau) - y_{\ell}^{k})H_{j\ell}^{ik}(\tau) + \epsilon_{j}^{i}(\tau),\label{eq:Sj-2}
    \end{align}
    where we set $$ \epsilon_{j}^{i}(\tau) \coloneqq \frac{\eta}{m}\sum_{\ell = 1}^{N}\sum_{k = 1}^{d}(u_{\ell}^{k}(\tau) - y_{\ell}^{k})z_{j}^{\top}z_{\ell}\sum_{r \in \bar{S}_{j}}a_{r}^{i}a_{r}^{k}(\tau)\mathbb{I}_{j, r}(\tau)\mathbb{I}_{\ell, r}(\tau). $$ Here, we use the GD update rule and the definition of $ H_{j\ell}^{ik}(\tau) $ to derive (\ref{eq:Sj-1}) and (\ref{eq:Sj-2}). We further bound the error term as
    \begin{align}
    \abs{\epsilon_{j}^{i}(\tau)} \leq \frac{\eta}{m}\sum_{\ell = 1}^{N}\sum_{k = 1}^{d}(u_{\ell}^{k}(\tau) - y_{\ell}^{k})\norm{z_{j}}_{2}\norm{z_{\ell}}_{2}\abs{\bar{S}_{j}} \leq \frac{\eta C_{\max}^{2}\abs{\bar{S}_{j}}\sqrt{dN}}{m}\norm{u(\tau) - y}_{2}. \label{eq:epsilon}
    \end{align}
    Next, we denote the second summation in (\ref{eq:u-dynamics}) by $ \bar{\epsilon}_{j}^{i}(\tau) $ and provide the following upper bound:
    \begin{align}
         \abs{\bar{\epsilon}_{j}^{i}(\tau)} & = \left|\frac{1}{\sqrt{m}}\sum_{r \in \bar{S}_{j}}a_{r}^{i}\bracket{\sigma(w_{r}(\tau+1)^{\top}z_{j}) - \sigma(w_{r}(\tau)^{\top}z_{j})}\right| \nonumber\\ & \leq \frac{1}{\sqrt{m}}\sum_{r \in \bar{S}_{j}}\abs{a_{r}^{i}}\abs{(w_{r}(\tau+1) - w_{r}(\tau))^{\top}z_{j}} \label{eq:bar-Sj-1} \\ & \leq \frac{C_{\max}}{\sqrt{m}}\sum_{r \in \bar{S}_{j}}\left\|w_{r}(\tau+1) - w_{r}(\tau)\right\|_{2} \label{eq:bar-Sj-2} \\ & = \frac{C_{\max}}{\sqrt{m}}\sum_{r \in \bar{S}_{j}}\left\|-\frac{\eta}{\sqrt{m}}\sum_{\ell=1}^{N}\sum_{k = 1}^{d}(u_{\ell}^{k}(\tau) - y_{\ell}^{k})a_{r}^{k}z_{\ell}\mathbb{I}_{\ell, r}(\tau)\right\|_{2} \label{eq:bar-Sj-3} \\ & \leq \frac{\eta C_{\max}}{m}\sum_{r \in \bar{S}_{j}}\sum_{\ell = 1}^{N}\sum_{k = 1}^{d}\abs{u_{\ell}^{k}(\tau) - y_{\ell}^{k}}\norm{z_{\ell}}_{2} \nonumber \\ & \leq \frac{\eta C^{2}_{\max}\abs{\bar{S}_{j}}\sqrt{dN}}{m}\norm{u(\tau) - y}_{2}, \label{eq:bar-Sj-4}
    \end{align}
    where we apply the $ 1 $-Lipschitz property of the ReLU activation function to obtain (\ref{eq:bar-Sj-1}). Also, we employ the facts that $ \abs{a_{r}^{i}} \leq 1 $ and $ \norm{z_{j}}_{2} \leq C_{\max} $ in (\ref{eq:bar-Sj-2}). The GD update rule is utilized to achieve (\ref{eq:bar-Sj-3}). Combining (\ref{eq:u-dynamics}), (\ref{eq:Sj-2}), \eqref{eq:epsilon} and (\ref{eq:bar-Sj-4}), we have
    \begin{align*}
        u_{j}^{i}(\tau+1) - u_{j}^{i}(\tau) = -\eta\sum_{\ell = 1}^{N}\sum_{k = 1}^{d}(u_{\ell}^{k}(\tau) - y_{\ell}^{k})H_{j\ell}^{ik}(\tau) + \epsilon_{j}^{i}(\tau) + \bar{\epsilon}_{j}^{i}(\tau),
    \end{align*}
    which can be further written in a compact form via vectorization:
    \begin{align}
    u(\tau+1) - u(\tau) & = -\eta H(\tau)(u(\tau) - y) + \epsilon(\tau) + \bar{\epsilon}(\tau) \nonumber \\ & = - \eta H(u(\tau) - y) + \eta(H - H(\tau))(u(\tau) - y) + \epsilon(\tau) + \bar{\epsilon}(\tau), \label{eq:GD-dynamics}
    \end{align}
    {where $ \epsilon(\tau) $ and $ \bar{\epsilon}(\tau) $ are defined in a  similar way as $ u(\tau) $ by vectorization}.

    We move on to show that $ H(\tau) $ is close to $ H $ when the neural network is sufficiently wide. Recall that the second half of $H^{ik}_{j\ell}(0)$ takes the same value as the first half. Thus, the Hoeffding's inequality implies that, with probability at least $ 1 - \delta' $, it holds that
    \begin{align*}
    \abs{H_{j\ell}^{ik}(0) - H_{j\ell}^{ik}} \leq C_{\max}^{2}\sqrt{\frac{\log(2/\delta')}{m}}.
    \end{align*} 
    Setting $ \delta' = \delta/(dN)^{2} $ and applying the union bound, we obtain 
    \begin{align}
    \norm{H - H(0)}_{F}^{2} = \sum_{i,k,j, \ell}\abs{H_{j\ell}^{ik}(0) - H_{j\ell}^{ik}}^{2} \leq (dN)^{2} C_{\max}^{4}\cdot \frac{\log(2(dN)^{2}/\delta)}{m}, \label{eq:H-H0}
    \end{align}
    with probability at least $ 1 - \delta $.    Next, note that (\ref{eq:Sj-ineq}) also implies
    \begin{align*}
        \sum_{r = 1}^{m}\mathbb{I}\curly{\mathbb{I}_{j, r}(\tau') \neq \mathbb{I}_{j, r}(0)} \leq \sum_{r = 1}^{m}\mathbb{I}\curly{A_{j, r}} + \sum_{r=1}^m\mathbb{I}\curly{\norm{w_{r}(\tau') - w_{r}(0)}_{2} > R_w}.
    \end{align*}
    It follows
    \begin{align*}
    \abs{H_{j\ell}^{ik}(\tau) - H_{j\ell}^{ik}(0)} & = \abs{\frac{1}{m}z_{j}^{\top}z_{\ell}\sum_{r = 1}^{m}a_{r}^{i}a_{r}^{k}\bracket{\mathbb{I}_{j, r}(\tau)\mathbb{I}_{\ell, r}(\tau) - \mathbb{I}_{j, r}(0)\mathbb{I}_{\ell, r}(0)}}  \\ & \leq \frac{C_{\max}^{2}}{m}\sum_{r = 1}^{m}\bracket{\mathbb{I}\curly{\mathbb{I}_{j,r}(\tau) \neq \mathbb{I}_{j, r}(0)} + \mathbb{I}\curly{\mathbb{I}_{\ell,r}(\tau) \neq \mathbb{I}_{\ell, r}(0)}} \\ & \leq \frac{C_{\max}^{2}}{m}\sum_{r = 1}^{m}\bracket{\mathbb{I}\curly{A_{j, r}} + \mathbb{I}\curly{A_{\ell, r}} + 2\mathbb{I}\curly{\norm{w_{r}(\tau) - w_{r}(0)}_{2} > R_{w}}}.
    \end{align*}
    Taking expectation on both sides and applying (\ref{eq:wr-prob}), we have
    \begin{align}
    & \E\bracket{\abs{H_{j\ell}^{ik}(\tau) - H_{j\ell}^{ik}(0)}} \nonumber \\ & \leq \frac{C_{\max}^{2}}{m}\sum_{r = 1}^{m}\E\bracket{\mathbb{I}\curly{A_{j, r}} + \mathbb{I}\curly{A_{\ell, r}}} + \frac{2C_{\max}^{2}}{m}\sum_{r=1}^m\E\bracket{\mathbb{I}\curly{\norm{w_{r}(\tau) - w_{r}(0)}_{2} > R_{w}}} \nonumber \\ & \leq \frac{4R_{w}C_{\max}^{3}}{\sqrt{2\pi}\Delta}, \label{eq:H-tau-H-0}
    \end{align}
    where we use \eqref{eq:wr-prob} to eliminate the second term and the following anti-concentration inequality for Gaussian random variables:
    \begin{align}
    \E\bracket{\mathbb{I}\curly{A_{j, r}}} = \P_{z \sim \mcal{N}(0, \norm{z_{j}}^{2}_{2})}\parenthesis{\abs{z} \leq R_{w}C_{\max}} = \int_{-R_{w}C_{\max}}^{R_{w}C_{\max}}\frac{1}{\sqrt{2\pi\norm{z_{j}}_{2}^{2}}}e^{-z^{2}/2\norm{z_{j}}_{2}^{2}} \leq \frac{2R_{w}C_{\max}}{\sqrt{2\pi}\Delta}. \label{eq:A-prob}
    \end{align}
    Hence, we have
    \begin{align*}
        \E\bracket{\norm{H(\tau) - H(0)}_{F}} \leq \sum_{i,k, j, \ell}\E\bracket{\abs{H_{j\ell}^{ik}(\tau) - H_{j\ell}^{ik}(0)}} \leq \frac{4(dN)^{2}R_{w}C_{\max}^{3}}{\sqrt{2\pi}\Delta}.
    \end{align*}
    Finally, by Markov's inequality, with probability at least $ 1 - \delta $ it holds  that
    \begin{align}
    \norm{H(\tau) - H(0)}_{F} \lesssim \frac{d^{5/2}N^3C_{\max}^{4}D}{\sqrt{m}\lambda_{0}\delta \Delta}. \label{eq:Htau-H0}
    \end{align}
    Therefore,  combining (\ref{eq:H-H0}) and (\ref{eq:Htau-H0}), we have with probability at least $ 1 - 2\delta$ that
    \begin{align}
    \norm{H - H(\tau)}_{2} & \leq \norm{H - H(0)}_{2} + \norm{H(0) - H(\tau)}_{2} \nonumber \\ &  \lesssim \frac{dN C_{\max}\sqrt{\log(2(dN)^{2}/\delta)}}{\sqrt{m}} + \frac{d^{5/2}N^3C_{\max}^{4}D}{\sqrt{m}\lambda_{0}\delta \Delta} \nonumber \\ & = \tilde{\mcal{O}}\parenthesis{\frac{d^{5/2}N^3C_{\max}^{4}D}{\sqrt{m}\lambda_{0}\delta\Delta}}, \;\; {\rm as}\; N \to \infty.\label{eq:H-Htau}
    \end{align}
    
    It remains to bound two error terms in (\ref{eq:GD-dynamics}). From (\ref{eq:epsilon}) and (\ref{eq:bar-Sj-4}), we know that
    \begin{align}
        \norm{\epsilon(\tau) + \bar{\epsilon}(\tau)}_{2} & \leq \norm{\epsilon(\tau) + \bar{\epsilon}(\tau)}_{1} \nonumber\\ & = \sum_{j = 1}^{N}\sum_{i = 1}^{d}\abs{\epsilon_{j}^{i}(\tau) + \bar{\epsilon}_{j}^{i}}(\tau)\nonumber \\ & \leq \sum_{j = 1}^{N}\sum_{i = 1}^{d}\frac{2\eta C_{\max}^{2}\abs{\bar{S}_{j}}\sqrt{dN}}{m}\norm{u(\tau) - y}_{2} \nonumber \\ & \leq \frac{2\eta C_{\max}^{2}d\sqrt{dN}}{m}\norm{u(\tau) - y}_{2}\sum_{j = 1}^{N}\abs{\bar{S}_{j}}. \label{eq:epsilon-sum}
    \end{align}
    Furthermore, it follows from (\ref{eq:wr-prob}) and (\ref{eq:A-prob})  that
    \begin{align*}
    \E\bracket{\abs{\bar{S}_{j}}} = \E\bracket{\sum_{r = 1}^{m}\mathbb{I}\curly{A_{j, r}}} \leq \frac{2mR_{w}C_{\max}}{\sqrt{2\pi}\Delta} \lesssim \frac{\sqrt{md}NC_{\max}^{2}D}{\lambda_{0}\Delta}.
    \end{align*}
    Thus, the Markov's inequality implies that $$ \sum_{j = 1}^{N}\abs{\bar{S}_{j}} \lesssim \frac{\sqrt{md}NC_{\max}^{2}D}{\lambda_{0}\Delta \delta}, $$ with probability at least $ 1 - \delta $. 
    
   Before proceeding to the induction hypothesis,  we need the following result, which holds under the same argument as in (\ref{eq:bar-Sj-4}) by considering all indices $r \in [m]$,
    \begin{align}
    \norm{u(\tau+1) - u(\tau)}_{2}^{2} & \leq \sum_{i, j}\abs{u_{j}^{i}(\tau+1) - u_{j}^{i}(\tau)}^{2}_{2} \nonumber \\ & \leq (dN)\parenthesis{\eta C_{\max}^{2}\sqrt{dN}\norm{u(\tau) - y}_{2}}^{2} \nonumber \\ & = \eta^{2}(dN)^{2}C_{\max}^{4}\norm{u(\tau) - y}_{2}^{2}. \label{eq:u-quadratic}
    \end{align}
    We now prove the induction hypothesis.
    With the prediction dynamics (\ref{eq:GD-dynamics}) and all the estimates (\ref{eq:H-Htau}), (\ref{eq:epsilon-sum}) and (\ref{eq:u-quadratic}), for large enough $N$ we have
    \begin{align*}
    & \norm{u(\tau+1) - y}_{2}^{2} \\ & = \norm{u(\tau+1) - u(\tau) + u(\tau) - y}_{2}^{2} \\ & = \norm{u(\tau) - y}_{2}^{2} + \norm{u(\tau+1) - u(\tau)}_{2}^{2} + 2(u(\tau+1) - u(\tau))^{\top}(u(\tau) - y) \\ & = \norm{u(\tau) - y}_{2}^{2} + \norm{u\parenthesis{\tau+1} - u(\tau)}_{2}^{2} -2\eta(u(\tau) - y)^{\top}H(u(\tau) - y ) \\ & \qquad + 2\eta(u(\tau) - y)^{\top}(H - H(\tau))(u(\tau) - y) + 2(\epsilon(\tau) + \bar{\epsilon}(\tau))^{\top}(u(\tau) - y) \\ & \lesssim \parenthesis{1 + \eta^{2}(dN)^{2}C_{\max}^{4} - 2\eta \lambda_{0} + \frac{\eta d^{5/2}N^3C_{\max}^{4}D}{\sqrt{m}\lambda_{0}\delta\Delta} + \frac{\eta d^2 N^{3/2}C_{\max}^{4}D}{\sqrt{m}\lambda_{0}\delta\Delta}}\norm{u(\tau) - y}_{2}^{2} \\ & \lesssim \parenthesis{1 + \eta^{2}(dN)^{2}C_{\max}^{4} - 2\eta \lambda_{0} + \frac{\eta d^{5/2}N^3C_{\max}^{4}D}{\sqrt{m}\lambda_{0}\delta\Delta}}\norm{u(\tau) - y}_{2}^{2},
   \end{align*}
   with probability at least $ 1 - 3\delta$. Now we choose 
   \begin{align*}
       m \gtrsim \frac{d^5 N^6 C_{\max}^8 D^2}{\lambda_0^4\delta^2 \Delta^2}, \;\; \text{and}\;\; \eta \lesssim \frac{\lambda_0}{(dN)^2C_{\max}^4},
   \end{align*}
   and consequently, we have
   \begin{align*}
    \norm{u(\tau+1) - y}_{2}^{2}   \leq (1 + \eta \lambda_0/2 - 2\eta \lambda_0 + \eta \lambda_0/2) \norm{u(\tau) - y}_{2}^{2} \leq    (1 - \eta \lambda_0)\norm{u(\tau) - y}_{2}^{2}.
   \end{align*}
   Finally, we finish the induction and conclude the proof by scaling $ \delta $. 
\end{proof}   

The non-expansive property of the projection operator and Assumption \ref{ass:bounded target} imply that
\begin{align}
     & \frac{1}{T - T_{0}}\int_{T_{0}}^{T}\E\bracket{\norm{\Pi_{D}\parenthesis{f_{\bf W(\tau)}(X_{t}, t)} - f^{K}_{\tau}(X_{t}, t)}_{2}^{2}\mathbb{I}\curly{\norm{X_{t}}_{2}\leq {R}}}{\rm d}t \nonumber\\ &  = \frac{1}{T - T_{0}}\int_{T_{0}}^{T_0 + \Delta}\E\bracket{\norm{\Pi_{D}\parenthesis{f_{\bf W(\tau)}(X_{t}, t)} - f^{K}_{\tau}(X_{t}, t)}_{2}^{2}\mathbb{I}\curly{\norm{X_{t}}_{2}\leq {R}}}{\rm d}t \nonumber\\ & \qquad + \frac{1}{T - T_{0}}\int_{T_0 + \Delta}^{T}\E\bracket{\norm{\Pi_{D}\parenthesis{f_{\bf W(\tau)}(X_{t}, t)} - f^{K}_{\tau}(X_{t}, t)}_{2}^{2}\mathbb{I}\curly{\norm{X_{t}}_{2}\leq {R}}}{\rm d}t \nonumber\\ & \leq \frac{4\Delta D^{2}}{T - T_{0}} + \frac{1}{T - T_{0}}\int_{T_0 + \Delta}^{T}\E\bracket{\norm{\Pi_D(f_{\bf W(\tau)}(X_{t}, t)) - f^{K}_{\tau}(X_{t}, t)}_{2}^{2}\mathbb{I}\curly{\norm{X_{t}}_{2}\leq {R}}}{\rm d}t.  \label{eq:coupling-1}
\end{align}
 To bound the second term in (\ref{eq:coupling-1}), we introduce a linearized neural network $f_{\bar{\bf W}(\tau)}^{\rm lin}$ defined as
\begin{align*}
f^{{\rm lin}, i}_{\bar{\bf W}(\tau)}(x, t) \coloneqq \frac{1}{\sqrt{m}}\sum_{r = 1}^{m}a_{r}^{i}\bar{w}_{r}(\tau)^{\top}(x, t - T_{0})\mathbb{I}\curly{{{w}_{r}}(0)^{\top}(x, t - T_{0}) \geq 0}.
\end{align*}
where we keep $w_r(0)$ fixed and only update parameter $\bar{w}_r(\tau)$ during training with
\begin{align*}
    \bar{w}_{r}\parenthesis{\tau+1} = \bar{w}_{r}(\tau) - \eta\nabla\widehat{\mcal{L}}^{{\rm lin}}(\bar{\bf W}(\tau)), \quad \widehat{\mcal{L}}^{{\rm lin}}(\bar{\bf W}(\tau)) = \frac{1}{2}\sum_{j = 1}^{N}\norm{f_{\bar{\bf W}(\tau)}^{\rm lin}(X_{t_{j}}, t_{j}) - X_{0, j}}_{2}^{2}.
\end{align*}
In particular, we initialize $ \bar{w}_{r}(0) = w_{r}(0) $. Our next lemma provides the coupling error between $f_{{\bf W}(\tau)}$ and $f^{{\rm lin}}_{\bar{\bf W}(\tau)}$. Let $P_{X_t}$ be the probability distribution induced by $X_t$.

\begin{Lemma}\label{lemma:f-flin}
    Assume the same conditions as in Theorem \ref{thm:coupling}. With probability at least $1 - \delta$, it holds simultaneously for each $\tau$ that
    \begin{align*}
    \frac{1}{T - T_{0}}\int_{T_{0}+\Delta}^{T}\int_{\norm{x}_{2} \leq R}\norm{f_{\bf{W}(\tau)}(x, t) - f^{{\rm lin}}_{\bar{\bf W}(\tau)}(x, t)}_{2}^{2}{\rm d}P_{X_{t}}(x){\rm d}t \lesssim \frac{d^7 N^7C_{\max}^{12}D^4}{\sqrt{m}\lambda_0^3\delta^2\Delta^2}.
    \end{align*}
\end{Lemma}
\begin{proof}
    Denote by $ \mathbb{I}_{r}(\tau) \coloneqq \mathbb{I}\curly{w_{r}(\tau)^{\top}(x, t - T_{0}) \geq 0} $. 
Note that for each $ i = 1, \dots, d $ we have
\begin{align}
    & \abs{f_{\bf{W}(\tau)}^{i}(x, t) - f^{{\rm lin}, i}_{\bar{\bf W}(\tau)}(x, t)} \nonumber\\ & = \abs{\frac{1}{\sqrt{m}}\sum_{r = 1}^{m}a_{r}^{i}\sigma\parenthesis{w_{r}(\tau)^{\top}(x, t - T_{0})} - \frac{1}{\sqrt{m}}\sum_{r = 1}^{m}a_{r}^{i}\bar{w}_{r}(\tau)^{\top}(x, t - T_{0})\mathbb{I}_{r}(0)} \nonumber\\ & \leq \abs{\frac{1}{\sqrt{m}}\sum_{r = 1}^{m}a_{r}^{i}\sigma\parenthesis{w_{r}(\tau)^{\top}(x, t - T_{0})} - \frac{1}{\sqrt{m}}\sum_{r = 1}^{m}a_{r}^{i}w_{r}(\tau)^{\top}(x, t - T_{0})\mathbb{I}_{r}(0)}\nonumber \\ & \qquad + \bigg|\frac{1}{\sqrt{m}}\sum_{r = 1}^{m}a_{r}^{i}w_{r}(\tau)^{\top}(x, t - T_{0})\mathbb{I}_{r}(0) - \frac{1}{\sqrt{m}}\sum_{r = 1}^{m}a_{r}^{i}\bar{w}_{r}(\tau)^{\top}(x, t - T_{0})\mathbb{I}_{r}(0)\bigg| \nonumber \\ & = \abs{\frac{1}{\sqrt{m}}\sum_{r = 1}^{m}a_{r}^{i}w_{r}(\tau)^{\top}(x, t - T_{0})\parenthesis{\mathbb{I}_{r}(\tau) - \mathbb{I}_{r}(0)}}  + \abs{\frac{1}{\sqrt{m}}\sum_{r = 1}^{m}a_{r}^{i}(w_{r}(\tau) - \bar{w}_{r}(\tau))^{\top}(x, t - T_{0})\mathbb{I}_{r}(0)} \nonumber \\ & \leq \frac{1}{\sqrt{m}}\sum_{r=1}^m \abs{w_r(\tau)^\top(x, t - T_0)}\mathbb{I}\curly{\mathbb{I}_{r}(\tau) \neq \mathbb{I}_{r}(0)}  + \frac{1}{\sqrt{m}}\sum_{r = 1}^{m}\abs{\parenthesis{w_{r}(\tau) - \bar{w}_{r}(\tau)}^{\top}(x, t - T_{0})}\mathbb{I}_{r}(0). \label{eq:lin-error-1}
\end{align}
Note that the following fact holds:
\begin{align}
\abs{a}\mathbb{I}\curly{{\rm sgn}(a) \neq {\rm sgn}(b)} \leq \abs{a - b}\mathbb{I}\curly{{\rm sgn}(a) \neq {\rm sgn}(b)}, \quad \forall a, b \in \R.\label{eq:sgn}
\end{align}
By taking $a = w_r(\tau)^\top(x, t - T_0)$ and $b = w_r(0)^\top(x, t - T_0)$, we further bound \eqref{eq:lin-error-1} with 
\begin{align}
\abs{f_{\bf{W}(\tau)}^{i}(x, t) - f^{{\rm lin}, i}_{\bar{\bf W}(\tau)}(x, t)}  & \leq 
    \frac{1}{\sqrt{m}}\sum_{r = 1}^{m}\abs{\parenthesis{w_{r}(\tau) - w_{r}(0)}^{\top}(x, t - T_{0})}\mathbb{I}\curly{\mathbb{I}_{r}(\tau) \neq \mathbb{I}_{r}(0)}  \nonumber\\ & \qquad\qquad\qquad+ \frac{1}{\sqrt{m}}\sum_{r = 1}^{m}\abs{\parenthesis{w_{r}(\tau) - \bar{w}_{r}(\tau)}^{\top}(x, t - T_{0})}\mathbb{I}_{r}(0). \label{eq:lin-error}
\end{align}
Taking square on both sides of (\ref{eq:lin-error}) and applying the Jensen's inequality, we have that
\begin{align}
    \abs{f_{\bf{W}(\tau)}^{i}(x, t) - f^{{\rm lin}, i}_{\bar{\bf W}(\tau)}(x, t)}^{2} & \leq 2\sum_{r = 1}^{m}\abs{\parenthesis{w_{r}(\tau) - w_{r}(0)}^{\top}(x, t - T_{0})}^{2}\mathbb{I}\curly{\mathbb{I}_{r}(\tau) \neq \mathbb{I}_{r}(0)} \nonumber \\ & \quad + 2\parenthesis{ \frac{1}{\sqrt{m}}\sum_{r = 1}^{m}\abs{\parenthesis{w_{r}(\tau) - \bar{w}_{r}(\tau)}^{\top}(x, t - T_{0})}\mathbb{I}_{r}(0)}^{2}. \label{eq:jensen}
\end{align}

To proceed, we start with the bound for the first term in (\ref{eq:jensen}). Recall that Theorem \ref{thm:gd-conv} implies $\norm{w_{r}(\tau) - w_{r}(0)}_{2} \leq R_{w}$  for all $ \tau \geq 0 $ and $ r = 1, \dots m$ simultaneously. 
Combining this result with  the Cauchy-Schwarz inequality, we deduce that it holds  uniformly  for all $ \norm{x}_{2} \leq R $ and $ t \in [T_{0}+ \Delta, T] $ that 
\begin{align}
& \sum_{r = 1}^{m}\abs{(w_{r}(\tau) - w_{r}(0))^{\top}(x, t - T_{0})}^{2}\mathbb{I}\curly{\mathbb{I}_{r}(\tau) \neq \mathbb{I}_{r}(0)} \nonumber\\ & \leq \sum_{r = 1}^{m}\norm{w_{r}(\tau) - w_{r}(0)}_{2}^{2}\norm{(x, t - T_{0})}^{2}_{2}\mathbb{I}\curly{\mathbb{I}_{r}(\tau) \neq \mathbb{I}_{r}(0)} \nonumber\\ & \leq R_{w}^{2}C_{\max}^{2}\sum_{r = 1}^{m}\mathbb{I}\curly{\mathbb{I}_{r}(\tau) \neq \mathbb{I}_{r}(0)}. \label{eq:lin-error-Cauchy}
\end{align}
Taking integration of \eqref{eq:lin-error-Cauchy} over $ \norm{x}_2 \leq R $ and $t \in [T_0 +\Delta, T]$, we obtain
\begin{align}
& \int_{T_{0}+\Delta}^{T}\int_{\norm{x}_{2}\leq R}\sum_{r = 1}^{m}\abs{(w_{r}(\tau) - w_{r}(0))^{\top}(x, t - T_{0})}^{2}\mathbb{I}\curly{\mathbb{I}_{r}(\tau) \neq \mathbb{I}_{r}(0)}{\rm d}P_{X_{t}}(x){\rm d}t \nonumber \\ & \leq R_{w}^{2}C^{2}_{\max}\int_{T_{0} + \Delta}^{T}\int_{\norm{x}_{2}\leq R}\sum_{r = 1}^{m}\mathbb{I}\curly{\mathbb{I}_{r}(\tau) \neq \mathbb{I}_{r}(0)}{\rm d}P_{X_{t}}(x){\rm d}t.\label{eq:inner-pattern}
\end{align}
Next, similar to (\ref{eq:Sj-ineq}), when $ \norm{x}_{2} \leq R $ and $ t \in [T_{0} + \Delta, T] $ we have
\begin{align}
    \mathbb{I}\curly{\mathbb{I}_{r}(\tau) \neq \mathbb{I}_{r}(0)} \leq \mathbb{I}\curly{\abs{w_{r}(0)^{\top}(x, t - T_{0})} \leq R_{w} C_{\max}} + \mathbb{I}\curly{\norm{w_{r}(\tau) - w_{r}(0)} > R_{w}}. \label{eq:actpat-x-t}
\end{align}
By taking expectation w.r.t. $ \curly{w_{r}(0)}_{r = 1}^{m} $ in (\ref{eq:actpat-x-t}),  we have
\begin{align}
    \E\bracket{\sum_{r = 1}^{m}\mathbb{I}\curly{\mathbb{I}_{r}(\tau) \neq \mathbb{I}_{r}(0)}} & \leq \sum_{r = 1}^{m}\E\bracket{\mathbb{I}\curly{\abs{w_{r}(0)^{\top}(x, t - T_{0})} \leq R_{w}C_{\max}}} + \E\bracket{\mathbb{I}\curly{\norm{w_{r}(\tau) - w_{r}(0)} > R_{w}}} \nonumber \\ & \leq \frac{2mR_{w}C_{\max}}{\sqrt{2\pi}\Delta}. \label{eq:unif-bound-actpat}
\end{align}
Now integrating over $ (x, t) $, we get
\begin{align*}
\int_{T_{0} + \Delta}^{T}\int_{\norm{x}_{2}\leq R}\E\bracket{\sum_{r = 1}^{m}\mathbb{I}\curly{\mathbb{I}_{r}(\tau) \neq \mathbb{I}_{r}(0)} }{\rm d}P_{X_{t}}(x){\rm d}t \leq (T - T_{0} - \Delta)\parenthesis{\frac{2mR_{w}C_{\max}}{\sqrt{2\pi}\Delta}}.
\end{align*}
Since the neural network parameters are initialized independent of the training data sampling procedure, the Fubini's theorem and the Markov inequality imply that with probability at least $ 1 - \delta $, the following inequality holds:
\begin{align}\label{eq:fubini}
    \int_{T_{0} + \Delta}^{T}\int_{\norm{x}_{2}\leq R}\sum_{r = 1}^{m}\mathbb{I}\curly{\mathbb{I}_{r}(\tau) \neq \mathbb{I}_{r}(0)}{\rm d}P_{X_{t}}(x){\rm d}t \leq (T - T_{0} - \Delta)\parenthesis{\frac{2mR_{w}C_{\max}}{\sqrt{2\pi}\Delta\delta}}.
\end{align}
Therefore, we combine \eqref{eq:inner-pattern} and \eqref{eq:fubini} to deduce that with probability $ 1 - \delta$ it holds
\begin{align}
    & \frac{1}{T - T_{0}}\int_{T_{0}+\Delta}^{T}\int_{\norm{x}_{2}\leq R}\sum_{r = 1}^{m}\abs{(w_{r}(\tau) - w_{r}(0))^{\top}(x, t - T_{0})}^{2}\mathbb{I}\curly{\mathbb{I}_{r}(\tau) \neq \mathbb{I}_{r}(0)}{\rm d}P_{X_{t}}(x){\rm d}t \nonumber \\ & \leq R^{2}_{w}C^{2}_{\max}\parenthesis{\frac{2mR_{w}C_{\max}}{\sqrt{2\pi}\Delta\delta}} \frac{T - T_{0} - \Delta}{T - T_{0}} \lesssim \frac{d^{3/2}N^3D^3C_{\max}^{6}}{\sqrt{m}\delta\lambda_{0}^{3}}. \label{eq:nn-lin-term1}
\end{align}

We move on to bound the second term in (\ref{eq:lin-error}). Note that for all $ \norm{x}_{2} \leq R $ and $ t \in [T_{0}+\Delta, T] $, the Cauchy-Schwarz inequality implies
\begin{align}
\frac{1}{\sqrt{m}}\sum_{r = 1}^{m}\abs{(w_{r}(\tau) - \bar{w}_{r}(\tau))^{\top}(x, t - T_{0})}\mathbb{I}_{r}(0)  & \leq \frac{1}{\sqrt{m}}\sum_{r = 1}^{m}\norm{w_{r}(\tau) - \bar{w}_{r}(\tau)}_{2}\norm{(x, t - T_{0})}_{2}\mathbb{I}_{r}(0) \nonumber \\ & \leq \frac{C_{\max}}{\sqrt{m}}\sum_{r = 1}^{m}\norm{w_{r}(\tau) - \bar{w}_{r}(\tau)}_{2}.\label{eq:second-term-coupling}
\end{align}
Recall the GD update rule for $ w_{r}(\tau) $ and $ \bar{w}_{r}(\tau) $ as follow:
\begin{align*}
w_{r}(\tau+1) & = w_{r}(\tau) - \frac{\eta}{\sqrt{m}}\sum_{j = 1}^{N}\sum_{i = 1}^{d}(u_{j}^{i}(\tau) - y_{j}^{i})a_{r}^{i}z_{j}\mathbb{I}\curly{w_{r}(\tau)^{\top}z_{j} \geq 0}, \\ 
\bar{w}_{r}(\tau+1) & = \bar{w}_{r}(\tau) - \frac{\eta}{\sqrt{m}}\sum_{j = 1}^{N}\sum_{i = 1}^{d}(u_{j}^{{\rm lin}, i}(\tau) - y_{j}^{i})a_{r}^{i}z_{j}\mathbb{I}\curly{w_{r}(0)^{\top}z_{j} \geq 0},
\end{align*}
in which we let $ u_{j}^{i}(\tau) = f_{{\bf W}(\tau)}^{i} $ and $ u_{j}^{{\rm lin}, i}(\tau) = f_{\bar{\bf W}(\tau)}^{{\rm lin}, i} $ be evaluated at the sample $ (X_{t_{j}}, t_{j}) $. Thus, we~write
\begin{align*}
    w_{r}(\tau+1) - \bar{w}_{r}(\tau+1) & = w_{r}(\tau) - \bar{w}_{r}(\tau) - \frac{\eta}{\sqrt{m}}\sum_{j = 1}^{N}\sum_{i = 1}^{d}(u_{j}^{i}(\tau) - y_{j}^{i})a_{r}^{i}z_{j}\parenthesis{\mathbb{I}_{j, r}(\tau) -  \mathbb{I}_{j, r}(0)} \\ & \quad -  \frac{\eta}{\sqrt{m}}\sum_{j = 1}^{N}\sum_{i = 1}^{d}\parenthesis{u_{j}^{i}(\tau) - u_{j}^{{\rm lin}, i}(\tau)}a_{r}^{i}z_{j}\mathbb{I}_{j, r}(0).
\end{align*}
Taking the 2-norm on both sides and applying the Cauchy-Schwarz inequality, we obtain
\begin{align*}
& \norm{w_{r}(\tau+1) - \bar{w}_{r}(\tau+1)}_{2} \\ & \leq \norm{w_{r}(\tau) - \bar{w}_{r}(\tau)}_{2} + \frac{\eta}{\sqrt{m}}\sum_{j = 1}^{N}\sum_{i = 1}^{d}\abs{u_{j}^{i}(\tau) - y_{j}^{i}}\abs{a_{r}^{i}}\norm{z_{j}}_{2}\abs{\mathbb{I}_{j, r}(\tau) - \mathbb{I}_{j, r}(0)} \\ & \qquad + \frac{\eta}{\sqrt{m}}\sum_{j = 1}^{N}\sum_{i = 1}^{d}\abs{u_{j}^{i}(\tau) - u_{j}^{{\rm lin}, i}(\tau)}\abs{a_{j}^{i}}\norm{z_{j}}_{2}\abs{\mathbb{I}_{j, r}(0)}  \\ & \leq \norm{w_{r}(\tau) - \bar{w}_{r}(\tau)}_{2} + \frac{\eta \sqrt{d}C_{\max}}{\sqrt{m}}\norm{u(\tau) - y}_2\sqrt{\sum_{j = 1}^{N}\mathbb{I}\curly{\mathbb{I}_{j, r}(\tau) \neq \mathbb{I}_{j, r}(0)}} \\ & \qquad + \frac{\eta \sqrt{d}C_{\max}}{\sqrt{m}}\norm{u(\tau) - u^{\rm lin}(\tau)}_2\sqrt{\sum_{j = 1}^{N}\mathbb{I}_{j, r}(0)}.
\end{align*}
Summing over all neurons and applying the Cauchy-Schwarz inequality again, we get
\begin{align}
& \sum_{r = 1}^{m}\norm{w_{r}(\tau+1) - \bar{w}_{r}(\tau+1)}_{2}\nonumber  \\ & \leq \sum_{r = 1}^{m}\norm{w_{r}(\tau) - \bar{w}_{r}(\tau)}_{2} + \eta \sqrt{d}C_{\max}\norm{u(\tau) - y}_2\sqrt{\sum_{r = 1}^{m}\sum_{j = 1}^{N}\mathbb{I}\curly{\mathbb{I}_{j, r}(\tau) \neq \mathbb{I}_{j, r}(0)}} \nonumber \\ & \qquad + \eta \sqrt{d}C_{\max}\norm{u(\tau) - u^{\rm lin}(\tau)}_2\sqrt{\sum_{r = 1}^{m}\sum_{j = 1}^{N}\mathbb{I}_{j, r}(0)}. \label{eq: w-bar-w}
\end{align}
Since $ w_{r}(0) = \bar{w}_{r}(0) $, telescoping sum over $\tau$ in  (\ref{eq: w-bar-w}) leads to 
\begin{align}
\sum_{r = 1}^{m}\norm{w_{r}(\tau) - \bar{w}_{r}(\tau)}_{2} & \leq \eta \sqrt{d}C_{\max}\sum_{s = 0}^{\tau - 1}\norm{u(s) - y}_{2}\sqrt{\sum_{r = 1}^{m}\sum_{j = 1}^{N}\mathbb{I}\curly{\mathbb{I}_{j, r}(\tau) \neq \mathbb{I}_{j, r}(0)}} \nonumber \\ & \qquad + \eta\sqrt{d}C_{\max}\sum_{s = 0}^{\tau - 1}\norm{u(\tau) - u^{\rm lin}(s)}_{2}\sqrt{\sum_{r = 1}^{m}\sum_{j = 1}^{N}\mathbb{I}_{j, r}(0)}. \label{eq:w-bar-w-sum}
\end{align}
Recall that Lemma \ref{thm:gd-conv} implies that
\begin{align}
\norm{u(\tau) - y}_{2}^{2} \leq (1 - \eta\lambda_{0})^{\tau}\norm{u(0) - y}_{2}^{2} =  (1 - \eta\lambda_{0})^{\tau}D^2 . \label{eq:utau-y}
\end{align}
Moreover, (\ref{eq:unif-bound-actpat}) leads to 
\begin{align*}
\E\bracket{\sum_{r = 1}^{m}\sum_{j = 1}^{N}\mathbb{I}\curly{\mathbb{I}_{j, r}(\tau) \neq \mathbb{I}_{j, r}(0)}} \leq N\parenthesis{\frac{2mR_{w}C_{\max}}{\sqrt{2\pi}\Delta}}.
\end{align*}
The Markov inequality implies with probability at least $ 1 - \delta $, we have
\begin{align}
    {\sum_{r = 1}^{m}\sum_{j = 1}^{N}\mathbb{I}\curly{\mathbb{I}_{j, r}(\tau) \neq \mathbb{I}_{j, r}(0)}} \leq N\parenthesis{\frac{2mR_{w}C_{\max}}{\sqrt{2\pi}\Delta\delta}}. \label{eq:act-pattern-sum}
\end{align}

It remains to establish a high probability bound of $ \norm{u(\tau) - u^{\rm lin}(\tau)}_{2} $. From the definitions of $ u(\tau) $ and $ u^{\rm lin}(\tau) $, we have
\begin{align*}
& u_{j}^{i}(\tau+1) - u^{{\rm lin}, i}_{j}(\tau+1) \\ & = \frac{1}{\sqrt{m}}\sum_{r = 1}^{m}a_{r}^{i}\sigma\parenthesis{w_{r}(\tau+1)^{\top}z_{j}} - \frac{1}{\sqrt{m}}\sum_{r = 1}^{m}a_{r}^{i}\bar{w}_{r}(\tau+1)^{\top}z_{j}\mathbb{I}_{j,r}(0) \\ & = \frac{1}{\sqrt{m}}\sum_{r = 1}^{m}a_{r}^{i}w_{r}(\tau+1)^{\top}z_{j}\mathbb{I}_{j, r}(\tau) + \frac{1}{\sqrt{m}}\sum_{r = 1}^{m}a_{r}^{i}w_{r}(\tau+1)^{\top}z_{j}\parenthesis{\mathbb{I}_{j, r}(\tau+1)-\mathbb{I}_{j, r}(\tau)} \\ & \quad - \frac{1}{\sqrt{m}}\sum_{r = 1}^{m}a_{r}^{i}\bar{w}_{r}(\tau+1)^{\top}z_{j}\mathbb{I}_{j,r}(0) \\ & = \frac{1}{\sqrt{m}}\sum_{r = 1}^{m}a_{r}^{i}\parenthesis{w_{r}(\tau) - \eta \frac{\partial \widehat{\mcal{L}}(\bf W(\tau))}{\partial w_{r}(\tau)}}^{\top}z_{j}\mathbb{I}_{j, r}(\tau) + \frac{1}{\sqrt{m}}\sum_{r = 1}^{m}a_{r}^{i}w_{r}(\tau+1)^{\top}z_{j}\parenthesis{\mathbb{I}_{j, r}(\tau+1)-\mathbb{I}_{j, r}(\tau)} \\ & \quad - \frac{1}{\sqrt{m}}\sum_{r = 1}^{m}a_{r}^{i}\parenthesis{\bar{w}_{r}(\tau) - \eta \frac{\partial \widehat{\mcal{L}}^{\rm lin}(\bar{\bf W}(\tau))}{\partial \bar{w}_{r}(\tau)}}^{\top}z_{j}\mathbb{I}_{j,r}(0) \\ & = u_{j}^{i}(\tau) - u_{j}^{{\rm lin}, i}(\tau) + \frac{\eta}{\sqrt{m}}\sum_{r = 1}^{m}a_{r}^{i}\parenthesis{\frac{\partial \widehat{\mcal{L}}^{\rm lin}(\bar{\bf W}(\tau))}{\partial \bar{w}_{r}(\tau)}\mathbb{I}_{j, r}(0) - \frac{\partial \widehat{\mcal{L}}(\bf W(\tau))}{\partial w_{r}(\tau)}\mathbb{I}_{j, r}(\tau)}^{\top}z_{j} \\ & \quad + \frac{1}{\sqrt{m}}\sum_{r = 1}^{m}a_{r}^{i}w_{r}(\tau+1)^{\top}z_{j}\parenthesis{\mathbb{I}_{j, r}(\tau+1) - \mathbb{I}_{j, r}(\tau)} \\ & = u_{j}^{i}(\tau) - u_{j}^{{\rm lin}, i}(\tau) + \eta\sum_{\ell = 1}^{N}\sum_{k = 1}^{d}(u_{\ell}^{{\rm lin}, k}(\tau) - y_{\ell}^{k})H_{j\ell}^{ik}(0) - \eta\sum_{\ell = 1}^{N}\sum_{k = 1}^{d}(u_{\ell}^{k}(\tau) - y_{\ell}^{k})H_{j\ell}^{ik}(\tau) \\ & \quad + \frac{1}{\sqrt{m}}\sum_{r = 1}^{m}a_{r}^{i}w_{r}(\tau+1)^{\top}z_{j}\parenthesis{\mathbb{I}_{j, r}(\tau+1) - \mathbb{I}_{j, r}(\tau)} \\ & = u_{j}^{i}(\tau) - u_{j}^{{\rm lin}, i}(\tau) + \eta\sum_{\ell = 1}^{N}\sum_{k = 1}^{d}(u_{\ell}^{{\rm lin}, k}(\tau) - u_{\ell}^{k}(\tau))H_{j\ell}^{ik}(0) - \eta\sum_{\ell = 1}^{N}\sum_{k = 1}^{d}(u_{\ell}^{k}(\tau) - y_{\ell}^{k})(H_{j\ell}^{ik}(\tau) - H_{j\ell}^{ik}(0)) \\ & \quad + \frac{1}{\sqrt{m}}\sum_{r = 1}^{m}a_{r}^{i}w_{r}(\tau+1)^{\top}z_{j}\parenthesis{\mathbb{I}_{j, r}(\tau+1) - \mathbb{I}_{j, r}(\tau)}.
\end{align*}
Here, we use the facts that $\sigma(x) = x \cdot \mathbb{I}\curly{x \geq 0}$ and the GD update rules for $w_r(\tau)$ and $\bar{w}_r(\tau)$. 
Define a block matrix $ {\bf Z}(\tau) $ such that its $ (i, j) $-th row is 
\begin{align*}
    \parenthesis{{\bf Z}^{i}_{j}}^{\top}(\tau) \coloneqq \frac{1}{\sqrt{m}}\bracket{a_{1}^{i}z_{j}^{\top}\mathbb{I}_{j, 1}(\tau), \dots,  a_{m}^{i}z_{j}^{\top}\mathbb{I}_{j, m}(\tau)}.
\end{align*}
With vectorization, we rewrite the above equation in a compact form:
\begin{align}
u(\tau+1) - u^{\rm lin}(\tau+1) & = u(\tau) - u^{\rm lin}(\tau) + \eta H(0)(u^{\rm lin}(\tau) - u(\tau)) - \eta (H(\tau) - H(0))(u(\tau) - y) \nonumber \\ & \quad + ({\bf Z}(\tau+1) - {\bf Z}(\tau)){\rm vec}(\bf W)(\tau+1) \nonumber \\ & = \parenthesis{I_{dN} - \eta H(0)}(u(\tau) - u^{\rm lin}(\tau)) - \eta \underbrace{(H(\tau) - H(0))(u(\tau) - y)}_{\eqqcolon \xi(\tau)} \nonumber \\ & \quad + \underbrace{({\bf Z}(\tau+1) - {\bf Z}(\tau)){\rm vec}(\bf W)(\tau+1)}_{\eqqcolon\bar{\xi}(\tau)}.\label{eq:u-ulin-vec}
\end{align}
Unrolling recursion (\ref{eq:u-ulin-vec}) and noticing that $ u(0) = u^{\rm lin}(0) $ by construction, we obtain
\begin{align*}
    u(\tau) - u^{\rm lin}(\tau) = \sum_{s = 0}^{\tau-1}(I_{dN} - \eta H(0))^{\tau - 1 - s}\parenthesis{-\eta \xi(s) + \bar{\xi}(s)}.
\end{align*}
The summation should be understood as $ 0 $ when $ \tau = 0 $. 
Taking 2-norm on both sides and applying the Cauchy-Schwarz inequality and the triangle inequality, we deduce that
\begin{align}
\norm{u(\tau) - u^{\rm lin}(\tau)}_{2} & \leq \sum_{s = 0}^{\tau-1}\norm{(I_{dN} - \eta H(0))^{\tau - 1 - s}}_{2}\parenthesis{\eta\norm{\xi(s)}_{2} + \norm{\bar{\xi}(s)}_{2}}. \label{eq:u-ulin-0}
\end{align}
Since $\lambda_{\min}(H) \geq \lambda_0$ by Assumption \ref{ass:gram-eigen}, we choose $m \geq \frac{(dN)^2 C_{\max}^4 \log(2(dN)^2/\delta)}{\lambda_0^2}$ and apply \eqref{eq:H-H0} to have $\lambda_{\min}(H_0) \geq \lambda_0/2$ with probability at least $ 1- \delta$. Thus, we bound \eqref{eq:u-ulin-0} with
\begin{align}
    \norm{u(\tau) - u^{\rm lin}(\tau)}_{2} \leq \sum_{s = 0}^{\tau-1}(1 - \eta \lambda_{0}/2)^{\tau - 1 - s}\parenthesis{\eta\norm{\xi(s)}_{2} + \norm{\bar{\xi}(s)}_{2}}. \label{eq:u-ulin}
\end{align}

We now turn to bound $ \norm{\xi(s)}_{2} $ and $ \norm{\bar{\xi}(s)}_{2} $. Note that (\ref{eq:Htau-H0}) and (\ref{eq:utau-y}) imply that with probability at least $ 1 - \delta $, it holds that 
\begin{align}
\norm{\xi(s)}_{2} & \leq \norm{H(0) - H(s)}_{2}\norm{u(s) - y}_{2} \lesssim  \frac{d^{5/2}N^3C_{\max}^{4}D^2}{\sqrt{m}\lambda_{0}\delta \Delta}(1 - \eta\lambda_{0})^{s/2}. \label{eq:xi-bound}
\end{align}
Next, we notice that the $ (i, j) $-entry of $\bar{\xi}(s)$ is bounded with
\begin{align}
\abs{\bar{\xi}_{j}^{i}(s)} & \leq \frac{1}{\sqrt{m}}\sum_{r = 1}^{m}\abs{a_{r}^{i}}\abs{w_{r}(s+1)^{\top}z_{j}}\abs{\mathbb{I}_{j, r}(s+1) - \mathbb{I}_{j, r}(s)} \nonumber\\ & \leq \frac{1}{\sqrt{m}}\sum_{r = 1}^{m}\abs{w_{r}(s+1)^{\top}z_{j} - w_{r}(s)^{\top}z_{j}}\abs{\mathbb{I}_{j, r}(s+1) - \mathbb{I}_{j, r}(s)} \nonumber\\ & \leq \frac{C_{\max}}{\sqrt{m}}\sum_{r = 1}^{m}\norm{w_{r}(s+1) - w_{r}(s)}_{2}\abs{\mathbb{I}_{j, r}(s+1) - \mathbb{I}_{j, r}(s)}, \label{eq:xi-bar}
\end{align}
where the second inequality follows from \eqref{eq:sgn}.
To proceed, we apply the GD update rule to get
\begin{align}
\norm{w_{r}(s+1) - w_{r}(s)}_{2} & = \norm{\frac{\eta}{\sqrt{m}}\sum_{j = 1}^{N}\sum_{i = 1}^{d}(u_{j}^{i}(s) - y_{j}^{i})a_{r}^{i}z_{j}\mathbb{I}_{j, r}(s)}_{2} \nonumber  \\ & \leq \frac{\eta C_{\max}}{\sqrt{m}}\norm{u(s) - y}_{1} \leq \frac{\eta \sqrt{dN} C_{\max}}{\sqrt{m}}\norm{u(s) - y}_{2}. \label{eq:ws}
\end{align}
Substituting (\ref{eq:ws}) into (\ref{eq:xi-bar}),  with probability at least $  1 - 3\delta $ it holds that
\begin{align}
\abs{\bar{\xi}_{j}^{i}(s)} & \leq \frac{\eta \sqrt{dN}C_{\max}^{2}}{m}\norm{u(s) - y}_{2}\sum_{r = 1}^{m}\abs{\mathbb{I}_{j, r}(s+1) - \mathbb{I}_{j, r}(s)} \nonumber \\ & \leq \frac{\eta \sqrt{dN}C_{\max}^{2}}{m}\norm{u(s) - y}_{2}\parenthesis{\sum_{r = 1}^{m}\abs{\mathbb{I}_{j, r}(s+1) - \mathbb{I}_{j, r}(0)} + \sum_{r = 1}^{m}\abs{\mathbb{I}_{j, r}(s) - \mathbb{I}_{j, r}(0)}}. \label{eq:xi-bar-1}
\end{align}
Recall that \eqref{eq:Sj-ineq} and \eqref{eq:A-prob} imply the following holds with probability $1 - \delta$ that
\begin{align}
    \abs{\mathbb{I}_{j, r}(s) - \mathbb{I}_{j, r}(0)} = \mathbb{I}\curly{\mathbb{I}_{j, r}(s) \neq \mathbb{I}_{j, r}(0)} \leq \frac{2R_w C_{\max}}{\sqrt{2\pi}\Delta \delta}. \label{eq:neuron-pattern}
\end{align}
Combining \eqref{eq:utau-y}, \eqref{eq:xi-bar-1} and \eqref{eq:neuron-pattern},  we deduce with probability at least $1 - 2\delta$ that
\begin{align*}
\abs{\bar{\xi}_{j}^{i}(s)} \lesssim  \frac{\eta\sqrt{dN}R_w C_{\max}^3 D}{\Delta \delta}(1 - \eta \lambda_0)^{s/2} \lesssim \frac{\eta dN^{3/2}C_{\max}^4 D^2}{\sqrt{m}\Delta \delta \lambda_0}(1 - \eta \lambda_0)^{s/2}. 
\end{align*}
Thus, with probability at least $ 1 - 2 \delta $ it holds that
\begin{align}
\norm{\bar{\xi}(s)}_{2} \leq \norm{\bar{\xi}(s)}_{1} = \sum_{j = 1}^{N}\sum_{i = 1}^{d}\abs{\bar{\xi}_{j}^{i}(s)} \lesssim \frac{\eta d^2N^{5/2}C_{\max}^4 D^2}{\sqrt{m}\Delta \delta \lambda_0}(1 - \eta \lambda_0)^{s/2}. 
\label{eq:xi-bar-bound}
\end{align}
Combining \eqref{eq:u-ulin}, \eqref{eq:xi-bound} and \eqref{eq:xi-bar-bound}, the inequality holds with probability at least $ 1 - 3\delta$ that
\begin{align*}
    \norm{u(\tau) - u^{\rm lin}(\tau)}_{2} \leq \frac{\eta d^{5/2}N^3 C_{\max}^4 D^2}{\sqrt{m}\Delta \delta \lambda_0}\sum_{s = 0}^{\tau-1}(1 - \eta \lambda_{0}/2)^{\tau - 1 - s/2} .
\end{align*}
Moreover, we note that
\begin{align*}
    \sum_{s = 0}^{\tau-1}(1 - \eta \lambda_{0}/2)^{\tau - 1 - \frac{s}{2}} &  = (1 - \eta \lambda_{0}/2)^{\frac{\tau - 1}{2}}\sum_{s = 0}^{\tau - 1}(1 - \eta \lambda_{0}/2)^{\frac{\tau - 1-s}{2}} \\ & = (1 - \eta \lambda_{0}/2)^{\frac{\tau - 1}{2}}\sum_{s = 0}^{\tau - 1}(1 - \eta \lambda_{0}/2)^{\frac{s}{2}} \\ &  \leq (1 - \eta \lambda_{0}/2)^{\frac{\tau - 1}{2}}\frac{1}{1 - \sqrt{1 - \eta \lambda_{0}/2}} \\ & \leq \frac{4(1 - \eta\lambda_{0}/2)^{\frac{\tau - 1}{2}}}{\eta \lambda_{0}}.
\end{align*}
where we use the fact that $\sqrt{1 - x} \leq 1 - x/2$. 
Therefore,  with probability at least $ 1 - 3\delta $, it holds~that
\begin{align}
    \norm{u(\tau)- u^{\rm lin}(\tau)}_{2}  \lesssim \frac{d^{5/2}N^3C_{\max}^{4}D^2}{\sqrt{m}\lambda_{0}^{2}\delta\Delta}(1 - \eta\lambda_{0}/2)^{\frac{\tau - 1}{2}}. \label{eq:u-ulin-l2}
\end{align}
Now, substituting (\ref{eq:utau-y}), (\ref{eq:act-pattern-sum}) and (\ref{eq:u-ulin-l2}) back into (\ref{eq:w-bar-w-sum}), we have with probability at least $ 1 - 4\delta $ that 
\begin{align}
\sum_{r = 1}^{m}\norm{w_{r}(\tau) - \bar{w}_{r}(\tau)}_{2} & \lesssim \eta \sqrt{d}C_{\max}\sum_{s = 0}^{\tau-1}(1 - \eta\lambda_{0})^{\frac{s}{2}} 
D \sqrt{\frac{mNR_wC_{\max}}{\Delta \delta}}
\nonumber \\ & \quad + \eta \sqrt{d}C_{\max}\sum_{s = 1}^{\tau-1}\frac{d^{5/2}N^{3}C_{\max}^{4}D^2}{\sqrt{m}\lambda_{0}^{2}\delta\Delta}(1 - \eta\lambda_{0}/2)^{\frac{s - 1}{2}}\sqrt{mN} \nonumber \\ & \lesssim  \frac{m^{1/4}d^{3/4}NC^2_{\max}D^{3/2}}{\sqrt{\Delta \delta}\lambda_0^{3/2}}+ \frac{d^3 N^{7/2}C_{\max}^5D^2}{\lambda_0^3 \delta \Delta} \lesssim \frac{m^{1/4}d^3N^{7/2}C_{\max}^5D^2}{\delta \Delta \lambda_0^{3/2}}.
\label{eq:w-bar-w-final}
\end{align}
Thus, Cauchy-Schwarz inequality implies (\ref{eq:second-term-coupling}) is bounded for all $ \norm{x}_{2} \leq R $ and $ t \in [T_{0} + \Delta, T] $:
\begin{align}
    \frac{1}{\sqrt{m}}\sum_{r = 1}^{m}\abs{(w_{r}(\tau) - \bar{w}_{r}(\tau))^{\top}(x, t - T_{0})}\mathbb{I}_{r}(0) & \leq \frac{C_{\max}}{\sqrt{m}}\sum_{r = 1}^{m}\norm{w_{r}(\tau) - \bar{w}_{r}(\tau)}_{2} \lesssim \frac{d^3N^{7/2}C_{\max}^6D^2}{m^{1/4}\delta \Delta \lambda_0^{3/2}},\label{eq:nn-lin-term2}
\end{align}
with probability at least $ 1 - 4\delta $.
Integrating over (\ref{eq:lin-error}) and combining (\ref{eq:nn-lin-term1}) and (\ref{eq:nn-lin-term2}), with probability at least $ 1 - 5\delta $ it holds that
\begin{align*}
    & \frac{1}{T - T_{0}}\int_{T_{0}+\Delta}^{T}\int_{\norm{x}_{2} \leq R}\abs{f_{\bf{W}(\tau)}^{i}(x, t) - f^{{\rm lin}, i}_{\bar{\bf W}(\tau)}(x, t)}^{2}{\rm d}P_{X_{t}}(x){\rm d}t \\ & \lesssim \frac{d^{3/2}N^{3}C_{\max}^{6}D^3}{\sqrt{m}\delta\lambda_{0}^{3}} + \parenthesis{\frac{d^3N^{7/2}C_{\max}^{6}D^2}{m^{1/4}\lambda_{0}^{3/2}\delta\Delta}}^{2} \lesssim \frac{d^6N^7C_{\max}^{12}D^4}{\sqrt{m}\lambda_0^3\delta^2\Delta^2}.
\end{align*}
As a consequence, with probability at least $ 1 - 5\delta $, we have
\begin{align*}
    \frac{1}{T - T_{0}}\int_{T_{0}+\Delta}^{T}\int_{\norm{x}_{2} \leq R}\norm{f_{\bf{W}(\tau)}(x, t) - f^{{\rm lin}}_{\bar{\bf W}(\tau)}(x, t)}_{2}^{2}{\rm d}P_{X_{t}}(x){\rm d}t \lesssim \frac{d^7 N^7C_{\max}^{12}D^4}{\sqrt{m}\lambda_0^3\delta^2\Delta^2}.
\end{align*}
The proof completes by scaling $\delta$.
\end{proof}

Next, we control the coupling error between the linearized neural network $f^{{\rm lin}}_{\bar{\bf W}(\tau)}$ and the kernel function $f_\tau^K$ defined as in \eqref{eq:kernel-predictor}. Recall the update rule for $ \gamma(\tau) $ is given by
\begin{align}\label{eq:gamma-update}
    \gamma(\tau+1) = \gamma(\tau) - \eta (H \gamma(\tau) - y), \quad \gamma(0) = 0.
\end{align}
Let $u^K(\tau) = H\gamma(\tau)$ be the evaluation of $f_\tau^K$ on all the training data points. 
Consequently, multiplying both sides of the update rule \eqref{eq:gamma-update} by $ H $ leads to
\begin{align*}
    u^{K}(\tau+1) = u^{K}(\tau) - \eta H(u^{K}(\tau) - y), \quad u^{K}(0) = 0.
\end{align*}
We remark that the update rule for $ \gamma $ can be viewed as a GD update rule under an alternative coordinate system. Let $ \omega = \sqrt{H}\gamma $ and define the training objective
\begin{align*}
    \widehat{\mcal{L}}^{K}(\omega) = \frac{1}{2}\norm{u^{K} - y}_{2}^{2} = \frac{1}{2}\norm{\sqrt{H}\omega - y}_{2}^{2}. 
\end{align*}
Here, we use the fact that $ u^{K} = H\gamma = \sqrt{H}\omega $. Thus, the GD update rule for $ \omega $ becomes
\begin{align}\label{eq:update-omega}
    \omega(\tau+1) = \omega(\tau) - \eta \sqrt{H}\parenthesis{u^{K}(\tau) - y}.
\end{align}
Multiplying both sides of (\ref{eq:update-omega}) by $ \sqrt{H^{-1}} $, we recover the update rule of $ \gamma(\tau) $. 

\begin{Lemma}\label{lemma:flin-fk}
    Assume the same conditions as in Theorem \ref{thm:coupling}. Then it holds with probability at least $ 1- \delta $ that
    \begin{align*}
         \frac{1}{T - T_0}\int_{T_{0}+\Delta}^{T}\int_{\norm{x}_{2} \leq R}\norm{f^{\rm lin}_{\bar{\bf W}(\tau)}(x, t) - f_{\tau}^{K}(x, t)}_{2}^{2}{\rm d}P_{X_{t}}(x){\rm d}t = \tilde{\mcal{O}}\parenthesis{\frac{d^4N^4C_{\max}^8D^2}{m\lambda_0^2\delta}}.
    \end{align*}
\end{Lemma}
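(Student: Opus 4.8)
The plan is to unroll the two gradient-descent recursions into closed form at a fixed test point $z=(x,t-T_0)$ and compare them term by term. Because the linearized network freezes the activation patterns $\mathbb{I}\curly{w_r(0)^\top z\geq 0}$, its training dynamics are \emph{exactly} linear: with $H(0)$ the empirical Gram matrix and $\widehat K(z)\in\R^{d\times dN}$ the corresponding finite-width cross-kernel between $z$ and the samples (so that $\widehat K(z_j)$ is the $j$-th block row of $H(0)$), one has $u^{\rm lin}(s)-y=(I_{dN}-\eta H(0))^{s}(u(0)-y)$ with $u^{\rm lin}(0)=u(0)$, and unrolling the update for $\bar w_r$ as in the derivation of (\ref{eq:u-ulin-vec}) gives
\begin{align*}
f^{\rm lin}_{\bar{\bf W}(\tau)}(x,t)=f_{{\bf W}(0)}(x,t)-\widehat K(z)\,H(0)^{-1}\bigl(I_{dN}-(I_{dN}-\eta H(0))^{\tau}\bigr)(u(0)-y).
\end{align*}
With $\gamma(0)=H(0)^{-1}u(0)$ and $f^K_0$ anchored at $f_{{\bf W}(0)}$, the kernel predictor admits the parallel closed form with $H(0),\widehat K(z)$ replaced by $H,K(z_\cdot,z)$ and $u^K(s)-y=(I_{dN}-\eta H)^s(u(0)-y)$. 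Subtracting, the discrepancy $f^{\rm lin}_{\bar{\bf W}(\tau)}(x,t)-f^K_\tau(x,t)$ splits into (i) a Gram-matrix mismatch from replacing $H(0)^{-1}$ and $(I-\eta H(0))^\tau$ by $H^{-1}$ and $(I-\eta H)^\tau$, (ii) a cross-kernel mismatch driven by $\widehat K(z)-K(z_\cdot,z)$, and (iii) a residual initialization mismatch at test points.

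For the bookkeeping I would reuse the estimates from the proof of Theorem~\ref{thm:gd-conv}: the Hoeffding bound (\ref{eq:H-H0}) gives $\norm{H-H(0)}_2=\tilde{\mcal O}(dNC_{\max}^2/\sqrt m)$, and Assumption~\ref{ass:gram-eigen} with Weyl's inequality yields $\lambda_{\min}(H(0))\geq\lambda_0/2$ \citep[Lemma 3.2]{du2018gradient}, hence $\norm{H^{-1}}_2,\norm{H(0)^{-1}}_2\leq 2/\lambda_0$, $\norm{H^{-1}-H(0)^{-1}}_2\leq 4\lambda_0^{-2}\norm{H-H(0)}_2$, and $\norm{(I_{dN}-\eta H)^s-(I_{dN}-\eta H(0))^s}_2\leq s\eta\norm{H-H(0)}_2(1-\eta\lambda_0/2)^{s-1}$; the contraction $\norm{I_{dN}-\eta H}_2,\norm{I_{dN}-\eta H(0)}_2\leq 1-\eta\lambda_0/2$ then turns each geometric sum $\sum_{s\geq 0}(1-\eta\lambda_0/2)^{s}$ (resp.\ $\sum_s s(1-\eta\lambda_0/2)^{s-1}$) into a factor $\mcal O(1/(\eta\lambda_0))$ (resp.\ $\mcal O(1/(\eta\lambda_0)^2)$) that absorbs the stray $\eta$, making the bound uniform in $\tau$. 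Together with $\norm{u(0)-y}_2=\mcal O(\sqrt{dN/\delta}\,C_{\max})$ (the Markov step in Theorem~\ref{thm:gd-conv}) and $\norm{y}_2\leq D\sqrt N$, term (i) reduces to a product of $\norm{H-H(0)}_2$, powers of $\lambda_0^{-1}$, and $\norm{u(0)-y}_2$. For (ii) I would \emph{not} union-bound over the uncountable test set but control the averaged quantity $\frac{1}{T-T_0}\int_{T_0+\Delta}^{T}\int_{\norm{x}_2\leq R}\norm{\widehat K(z)-K(z_\cdot,z)}_F^2\,{\rm d}P_{X_t}(x)\,{\rm d}t$ directly: it is an average over the $m$ neurons of i.i.d.\ mean-zero terms, so Hoeffding gives $\tilde{\mcal O}(d^2NC_{\max}^4/m)$ in expectation over the initialization, which becomes a high-probability bound via Fubini and Markov exactly as in the passage (\ref{eq:inner-pattern})--(\ref{eq:nn-lin-term1}). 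Squaring the bound on $\norm{f^{\rm lin}_{\bar{\bf W}(\tau)}(x,t)-f^K_\tau(x,t)}_2$, integrating in $(X_t,t)$, and taking a union bound over the $\mcal O(1)$ favorable events (with a final rescaling of $\delta$) leaves the Gram-mismatch term dominant and produces the claimed $\tilde{\mcal O}(d^5N^4C_{\max}^8/(m\delta^2\lambda_0^2))$; the rate is $1/m$, not $1/\sqrt m$, precisely because it is the square of one $1/\sqrt m$ fluctuation.

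The main obstacle is piece (iii): $f_{{\bf W}(0)}$ and $f^K_0$ must be close not only on the $N$ training samples (immediate from $\gamma(0)=H(0)^{-1}u(0)$) but in $L^2(P_{X_t})$ over the whole region $\curly{\norm{x}_2\leq R,\ t\in[T_0+\Delta,T]}$. A naive ``pure kernel expansion'' initialization makes $f_{{\bf W}(0)}(x,t)-K(z_\cdot,z)H(0)^{-1}u(0)$ an $\Omega(1)$ object (it is $\Phi(z)(I-P)\theta(0)$ for $\Phi(z)$ the finite-width feature map and $P$ the projector onto the $dN$-dimensional training-feature span, which does not vanish as $m\to\infty$), so the comparison must be set up with $f^K_0$ genuinely tracking $f_{{\bf W}(0)}$; after this anchoring the residual is, modulo the cross-kernel mismatch already handled in (ii), a finite-width fluctuation of the type controlled above. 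The remaining steps --- unrolling, triangle inequalities, and the geometric-series bookkeeping --- are routine.
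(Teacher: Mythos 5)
Your quantitative plan is essentially the paper's: the paper also reduces the lemma to (a) a coefficient-mismatch term driven by $\norm{H-H(0)}_2=\tilde{\mcal{O}}(dNC_{\max}^2/\sqrt m)$ from (\ref{eq:H-H0}), $\lambda_{\min}(H(0))\geq\lambda_0/2$, the geometric sums in $(1-\eta\lambda_0/2)$, $\norm{u(0)-y}_2=\mcal{O}(\sqrt{dN/\delta}\,C_{\max})$ and $\norm{\gamma(\tau)}_2=\mcal{O}(\sqrt{dN}C_{\max}/(\lambda_0\sqrt\delta))$, and (b) an integrated cross-kernel concentration term $\int\int\norm{{\bf Z}(x,t){\bf Z}(0)^{\top}-\hat K(x,t)}^2$ which it controls exactly as you propose, through a per-entry variance bound $\mcal{O}(C_{\max}^4/m)$ plus Fubini and Markov (around (\ref{eq:flin-fK-1})) rather than a uniform bound over the uncountable test set; squaring the $1/\sqrt m$ fluctuations gives the $1/m$ rate, as you say. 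So items (i)--(ii) of your sketch track the paper's argument step for step.

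The genuine divergence is your item (iii). The paper does \emph{not} anchor $f^K_0$ at $f_{{\bf W}(0)}$: it keeps the pure kernel expansion $f^K_\tau(\cdot)=\sum_j K((X_{t_j},t_j),\cdot)\gamma_j(\tau)$ with $\gamma(0)=H(0)^{-1}u(0)$, and instead argues, via an SVD computation, that ${\rm vec}(\bar{\bf W}(0))={\bf Z}(0)^{\top}\bar\gamma(0)$, i.e.\ that the initial weight vector already lies in the row space of ${\bf Z}(0)$; granting this, ${\rm vec}(\bar{\bf W}(\tau))={\bf Z}(0)^{\top}\bar\gamma(\tau)$ for all $\tau$ and the decomposition (\ref{eq:flin-fk}) has no separate initialization residual. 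Your objection targets precisely that step: the identity ${\bf Z}(0)^{\top}H(0)^{-1}{\bf Z}(0)\,{\rm vec}({\bf W}(0))={\rm vec}({\bf W}(0))$ holds only if ${\rm vec}({\bf W}(0))$ lies in that $dN$-dimensional span, which a generic Gaussian initialization in $\R^{m(d+1)}$ does not, and the resulting off-sample residual ${\bf Z}(x,t)(I-P){\rm vec}({\bf W}(0))$ (with $P$ the orthogonal projector onto the row space of ${\bf Z}(0)$) is $\Omega(1)$ rather than $\mcal{O}(m^{-1/2})$, exactly as you note. The cost of your fix, however, is that you bound the distance to a different object: once $f^K_\tau$ carries $f_{{\bf W}(0)}$ as an offset, it is no longer the kernel-GD predictor defined in Section \ref{sec:results} via (\ref{eq:gamma-update}) and reused in Theorem \ref{thm:coupling} and the label-mismatch argument of Theorem \ref{thm:mismatch}. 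So, as written, your argument does not establish the lemma verbatim; to complete it you must either redefine $f^K_\tau$ (and $\tilde f^K_\tau$) consistently throughout the downstream decomposition, or supply an argument that with this particular $\gamma(0)$ the pure expansion tracks $f_{{\bf W}(0)}$ off-sample --- which is exactly what the paper's projection identity is invoked to deliver and what your own analysis indicates cannot hold without modification.
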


\begin{proof}
    Note that the gradient of the training loss is
\begin{align*}
\frac{\partial \widehat{\mcal{L}}^{\rm lin}(\bar{\bf W})}{\partial {\rm vec}(\bar{\bf W})} = \frac{\partial}{\partial {\rm vec}(\bar{\bf W})}\frac{1}{2}\norm{u^{\rm lin} - y}_{2}^{2} = {\bf Z}(0)^{\top}(u^{\rm lin} - y).
\end{align*}
It follows for each $ \tau $, there is a vector $ \bar{\gamma}(\tau) \in \R^{dN} $ such that 
\begin{align*}
{\rm vec}(\bar{\bf W}(\tau)) & = {\rm vec}(\bar{\bf W}(\tau-1)) - \eta {\bf Z}(0)^{\top}(u^{\rm lin}(\tau-1) - y) 
\\ & = {\rm vec}(\bar{\bf W}(0)) -  {\bf Z}(0)^{\top}\underbrace{\sum_{s = 0}^{\tau - 1}\eta (u^{\rm lin}(s) - y)}_{\eqqcolon \bar{\gamma}(\tau)}.
\end{align*}
Define a matrix $ {\bf Z}(x, t) \in \R^{d \times m(d+1)} $ such that its $ i $-th row is 
\begin{align*}
    \parenthesis{{\bf Z}^{i}(x, t)}^{\top} \coloneqq \frac{1}{\sqrt{m}}\bracket{a_{1}^{i}(x, t - T_{0})^{\top}\mathbb{I}_{1}(0), \dots,  a_{m}^{i}(x, t - T_{0})^{\top}\mathbb{I}_{m}(0)}.
\end{align*}
As ${\bf Z}(x, t)(\bar{\bf W}(0)) = 0$ by the choice of $\bar{\bf W}(0)$, we rewrite 
\begin{align}
f^{\rm lin}_{\bar{\bf W}(\tau)}(x, t) - f_{\tau}^{K}(x, t) & = {\bf Z}(x, t){\rm vec}(\bar{\bf W}(\tau)) - \sum_{j = 1}^{N}K((X_{t_{j}}, t_{j}), (x, t))\gamma_{j}(\tau) \nonumber \\ & = {\bf Z}(x, t){\bf Z}(0)^{\top}\bar{\gamma}(\tau) - \sum_{j = 1}^{N}K((X_{t_{j}}, t_{j}), (x, t))\gamma_{j}(\tau) \nonumber \\ & = {\bf Z}(x, t){\bf Z}(0)^{\top}\bar{\gamma}(\tau) - \hat{K}(x, t)\gamma(\tau) \nonumber\\ & = {\bf Z}(x, t){\bf Z}(0)^{\top}\parenthesis{\bar{\gamma}(\tau) - \gamma(\tau)} - \parenthesis{{\bf Z}(x, t){\bf Z}(0)^{\top} - \hat{K}(x, t)}\gamma(\tau), \label{eq:flin-fk}
\end{align}
in which we define
\begin{align*}
    \hat{K}(x, t) \coloneqq [K((X_{t_{1}}, t_{1}), (x, t)), \dots, K((X_{t_{N}}, t_{N}), (x, t))], \quad \gamma(\tau) \coloneqq [\gamma_{1}^{\top}(\tau), \dots, \gamma_{N}^{\top}(\tau)]^{\top}.
\end{align*}
Taking square on both sides of (\ref{eq:flin-fk}), we obtain
\begin{align}
& \norm{f^{\rm lin}_{\bar{\bf W}(\tau)}(x, t) - f_{\tau}^{K}(x, t)}_{2}^{2}\nonumber \\ & \leq 2\norm{{\bf Z}(x, t){\bf Z}(0)^{\top}\parenthesis{\bar{\gamma}(\tau) - \gamma(\tau)}}_{2}^{2} + 2\norm{\parenthesis{{\bf Z}(x, t){\bf Z}(0)^{\top} - \hat{K}(x, t)}\gamma(\tau)}_{2}^{2} \nonumber\\ & \leq 2\norm{{\bf Z}(x, t){\bf Z}(0)^{\top}}_{2}^{2}\norm{{\bar{\gamma}(\tau) - \gamma(\tau)}}_{2}^{2} + 2\norm{{{\bf Z}(x, t){\bf Z}(0)^{\top} - \hat{K}(x, t)}}_{2}^{2}\norm{\gamma(\tau)}_{2}^{2}. \label{eq:flin-fk-2}
\end{align}
Since $ H(0) = {\bf Z}(0){\bf Z}(0)^{\top} $ and the Gram matrix of kernel function $ K $ is $ H $, we have
\begin{align}
u^{\rm lin}(\tau) - u^{K}(\tau)  & = H(0)\bar{\gamma}(\tau) - H\gamma(\tau) \\ & = H(0)(\bar{\gamma}(\tau) - \gamma(\tau)) + (H(0) - H)\gamma(\tau). \label{eq:u-gamma}
\end{align}

To obtain an upper bound of $\norm{\bar{\gamma}(\tau) - \gamma(\tau)}_2$, we first upper bound $ \norm{u^{\rm lin}(\tau) - u^{K}(\tau)}_{2} $. The GD update rules imply
\begin{align*}
u^{\rm lin}(\tau+1) &= u^{\rm lin}(\tau) - \eta H(0)(u^{\rm lin}(\tau) - y), \\ 
u^{K}(\tau+1) &= u^{K}(\tau) - \eta H(u^{K}(\tau) - y), 
\end{align*}
with $ u^{\rm lin}(0) = u^{K}(0) = 0 $. It follows
\begin{align}
u^{\rm lin}(\tau+1) - u^{K}(\tau+1) & = u^{\rm lin}(\tau) - u^{K}(\tau) -\eta (H - H(0))(u^{K}(\tau) - y) \nonumber \\ & \qquad - \eta H(0)(u^{\rm lin}(\tau) - u^{K}(\tau)) \nonumber \\ & = (I_{dN} - \eta H(0))(u^{\rm lin}(\tau) - u^{K}(\tau)) - \eta(H - H(0))(u^{K}(\tau) - y) \label{eq:recursion-ulin-uk}.
\end{align}
Unrolling (\ref{eq:recursion-ulin-uk}), we have
\begin{align*}
u^{\rm lin}(\tau) - u^{K}(\tau) & = (I_{dN} - \eta H(0))^{\tau}(u^{\rm lin}(0) - u^{K}(0)) \\ & \qquad - \eta \sum_{s = 0}^{\tau-1}(I_{dN} - \eta H(0))^{\tau - 1 - s}(H - H(0))(u^{K}(s) - y) \\ & = - \eta \sum_{s = 0}^{\tau-1}(I_{dN} - \eta H(0))^{\tau - 1 - s}(H - H(0))(u^{K}(s) - y).
\end{align*}
Taking 2-norm on both sides, we have
\begin{align*}
\norm{u^{\rm lin}(\tau) - u^{K}(\tau)}_{2} & \leq \eta\norm{H - H(0)}_{2}\sum_{s = 0}^{\tau - 1}\norm{I_{dN} - \eta H(0)}_{2}^{\tau - 1 - s}\norm{u^{K}(s) - y}_{2} \\ & \leq \eta\norm{H - H(0)}_{2}\sum_{s = 0}^{\tau - 1}\parenthesis{1 - \frac{\eta \lambda_{0}}{2}}^{\tau - 1 - s}\norm{u^{K}(s) - y}_{2} \\ & \leq \eta\norm{H - H(0)}_{2}\max_{0 \leq s \leq \tau-1}\norm{u^{K}(s) - y}_{2}\sum_{s = 0}^{\tau - 1}\parenthesis{1 - \frac{\eta \lambda_{0}}{2}}^{\tau - 1 - s} \\ & \leq \frac{2}{\lambda_0}\norm{H - H(0)}_{2}\max_{0 \leq s \leq \tau-1}\norm{u^{K}(s) - y}_{2}.
\end{align*}
Note that the maximum is achieved at $\tau = 0$. Consequently, 
\begin{align}
    \max_{0 \leq s \leq \tau-1}\norm{u^{K}(s) - y}_{2} & = \norm{u^{K}(0) - y}_{2} = \norm{u(0) - y}_{2} = \sqrt{N}D \label{eq:max-uk-y}.
\end{align}
With (\ref{eq:max-uk-y}), we deduce that the following holds
\begin{align}
\norm{u^{\rm lin}(\tau) - u^{K}(\tau)}_{2} & \leq \frac{2\sqrt{N}D}{\lambda_0} \norm{H - H(0)}_2. \label{eq:ulin-uk-bound}
\end{align}

We next turn to bound $ \norm{\gamma(\tau)}_{2} $. The update rule \eqref{eq:gamma-update} leads to
\begin{align*}
\gamma(\tau+1) = \gamma(\tau) - \eta (H\gamma(\tau) - y) = (I_{dN} - \eta H)\gamma(\tau) + \eta y.
\end{align*}
Unrolling the recursive formula, we have
\begin{align}
\gamma(\tau) = (I_{dN} - \eta H)^{\tau}\gamma(0) + \eta \sum_{s = 0}^{\tau - 1}(I_{dN} - \eta H)^{s}y = \eta \sum_{s = 0}^{\tau - 1}(I_{dN} - \eta H)^{s}y. \label{eq:gamma-update-unroll}
\end{align}
Note that
\begin{align*}
\sum_{s = 0}^{\tau - 1}(I_{dN} - \eta H)^{s} = (I_{dN} - (I_{dN} - \eta H)^{\tau})(\eta H)^{-1} \preceq \eta^{-1}H^{-1},
\end{align*}
where we choose $\eta$ small enough so that $I_{dN} - \eta H$ is positive definite.
Taking 2-norm on both sides of \eqref{eq:gamma-update-unroll}, we deduce
\begin{align}
\norm{\gamma(\tau)}_{2} & \leq \eta \norm{\sum_{s = 0}^{\tau - 1}(I_{dN} - \eta H)^{s}}_{2}\norm{y}_{2} \leq \norm{H^{-1}}_{2}\norm{y}_{2}  \lesssim \frac{\sqrt{N}D}{\lambda_0}.\label{eq:gamma-bound}
\end{align}
Consequently, we combine \eqref{eq:H-H0}, \eqref{eq:u-gamma}, \eqref{eq:ulin-uk-bound} and \eqref{eq:gamma-bound} to have
\begin{align}
\frac{\lambda_{0}}{2}\norm{\bar{\gamma}(\tau) - \gamma(\tau)}_{2} & \leq \norm{H(0)}_2\norm{\bar{\gamma}(\tau) - \gamma(\tau)}_{2}\nonumber \\ & \leq \norm{u^{\rm lin}(\tau) - u^K(\tau)}_2 + \norm{H - H(0)}_2\norm{\gamma(\tau)}_2 \nonumber\\ & \lesssim \frac{\sqrt{N}D}{\lambda_0}\norm{H - H(0)}_2 = \tilde{O}\parenthesis{\frac{dN^{3/2}C_{\max}^2D}{\lambda_0 \sqrt{m}}},\label{eq:bar-gamma-bound}
\end{align}
which holds with probability at least $ 1- \delta$. 

With all the above results, we are ready to bound \eqref{eq:flin-fk-2} for all $ \norm{x}_{2} \leq R $ and $ t \in [T_{0} + \Delta, T] $. Note that
\begin{align}
\norm{{\bf Z}(x, t)}_{2}^{2} \leq \sum_{i = 1}^{d}\norm{{\bf Z}^{i}(x, t)}_{2}^{2} = \sum_{i = 1}^{d}\sum_{r = 1}^{m}\norm{\frac{1}{\sqrt{m}}a_{r}^{i}(x^{\top}, t - T_{0})\mathbb{I}_{r}(0)}_{2}^{2} \leq dC_{\max}^{2}. \label{eq:z-bound}
\end{align}
and
\begin{align}
\norm{{\bf Z}(0)}_{2}^{2} \leq \sum_{i = 1}^{d}\sum_{j = 1}^{N}\norm{{\bf Z}_{j}^{i}(0)}_{2}^{2} = \sum_{i = 1}^{d}\sum_{j = 1}^{N}\sum_{r = 1}^{m}\norm{\frac{1}{\sqrt{m}}a_{r}^{i}z_{j}^{\top}\mathbb{I}_{j, r}(0)}_{2}^{2} \leq dNC_{\max}^{2}.\label{eq:z0-bound}
\end{align}
We integrate \eqref{eq:flin-fk-2} over $ \norm{x}_{2} \leq R $ and $ t \in [T_{0} + \Delta, T] $ and apply \eqref{eq:gamma-bound}, \eqref{eq:bar-gamma-bound}, \eqref{eq:z-bound} and \eqref{eq:z0-bound} to have
\begin{align}
& \int_{T_{0}+\Delta}^{T}\int_{\norm{x}_{2} \leq R}\norm{f^{\rm lin}_{\bar{\bf W}(\tau)}(x, t) - f_{\tau}^{K}(x, t)}_{2}^{2}{\rm d}P_{X_{t}}(x){\rm d}t\nonumber \\ & \lesssim \int_{T_{0} +\Delta}^{T}\int_{\norm{x}_{2} \leq R}d^{2}NC_{\max}^{4}\cdot \frac{d^2N^3C_{\max}^4 D^2}{\lambda_0^4 m} {\rm d}P_{X_{t}}(x){\rm d}t  \nonumber\\ & \qquad + \frac{ND^2}{\lambda_0^2}\int_{T_{0} +\Delta}^{T}\int_{\norm{x}_{2} \leq R}\norm{{{\bf Z}(x, t){\bf Z}(0)^{\top} - \hat{K}(x, t)}}_{2}^{2}  {\rm d}P_{X_{t}}(x){\rm d}t  \nonumber\\ & \leq \frac{d^4N^4C_{\max}^8 D^2}{\lambda_0^4 m}(T - T_0 - \Delta) + \frac{ND^2}{\lambda_0^2}\int_{T_{0} +\Delta}^{T}\int_{\norm{x}_{2} \leq R}\norm{{{\bf Z}(x, t){\bf Z}(0)^{\top} - \hat{K}(x, t)}}_{2}^{2}  {\rm d}P_{X_{t}}(x){\rm d}t, \label{eq:flin-fk-3}
\end{align}
which holds with probability at least $1 - \delta$. 
Note that for each $ i, k, j $, we can write
\begin{align*}
    \parenthesis{{\bf Z}(x, t){\bf Z}(0)^{\top}}_{j}^{ik} = \frac{1}{m}\sum_{r = 1}^{m}a_{r}^{i}a_{r}^{k}(X_{t_{j}}, t_{j} - T_{0})^{\top}(x, t - T_{0})\mathbb{I}_{j, r}(0)\mathbb{I}_{r}(0),
\end{align*}
which is a summation of $m$ independent random variables bounded by $ C_{\max}^{2}/m $ when $ \norm{x}_{2} \leq R $ and $ t\in [T_{0} + \Delta, T] $.
Taking expectation over the initialization, we have
\begin{align*}
    & \E\bracket{\abs{{\parenthesis{{\bf Z}(x, t){\bf Z}(0)^{\top}}_{j}^{jk} - \hat{K}_{j}^{ik}(x, t)}}_{2}^{2}} = {\rm Var}\parenthesis{{\parenthesis{{\bf Z}(x, t){\bf Z}(0)^{\top}}_{j}^{jk} }} \lesssim \frac{C_{\max}^{4}}{m}.
\end{align*}
Integrating over $ x $ and $ t $ gives us
\begin{align*}
\int_{T_{0}+\Delta}^{T}\int_{\norm{x}_{2} \leq R}\E\bracket{\abs{{\parenthesis{{\bf Z}(x, t){\bf Z}(0)^{\top}}_{j}^{jk} - \hat{K}_{j}^{ik}(x, t)}}^{2}}{\rm d}P_{X_{t}}(x){\rm d}t  \lesssim \frac{C_{\max}^{4}}{m}(T - T_{0} - \Delta).
\end{align*}
The Fubini's theorem and the Markov inequality imply that, with probability at least $ 1 - \delta/(d^{2}N) $,
\begin{align*}
\int_{T_{0}+\Delta}^{T}\int_{\norm{x}_{2} \leq R}\abs{{\parenthesis{{\bf Z}(x, t){\bf Z}(0)^{\top}}_{j}^{jk} - \hat{K}_{j}^{ik}(x, t)}}^{2}{\rm d}P_{X_{t}}(x){\rm d}t \lesssim \frac{C_{\max}^{4}d^{2}N}{m\delta}(T - T_{0} - \Delta). 
\end{align*}
and consequently with probability at least $ 1 - \delta$, 
\begin{align}
\int_{T_{0}+\Delta}^{T}\int_{\norm{x}_{2} \leq R}\norm{{\parenthesis{{\bf Z}(x, t){\bf Z}(0)^{\top}} - \hat{K}(x, t)}}_{2}^{2}{\rm d}P_{X_{t}}(x){\rm d}t \lesssim \frac{C_{\max}^{4}d^{4}N^2}{m\delta}(T - T_{0} - \Delta).  \label{eq:z-khat-bound}
\end{align}
Therefore, we combine \eqref{eq:flin-fk-3} and \eqref{eq:z-khat-bound} to conclude with probability at least $ 1- 2\delta$ that
\begin{align*}
    & \frac{1}{T - T_0}\int_{T_{0}+\Delta}^{T}\int_{\norm{x}_{2} \leq R}\norm{f^{\rm lin}_{\bar{\bf W}(\tau)}(x, t) - f_{\tau}^{K}(x, t)}_{2}^{2}{\rm d}P_{X_{t}}(x){\rm d}t \\ & \lesssim \frac{d^4N^4C_{\max}^8 D^2}{\lambda_0^4 m}(T - T_0 - \Delta) + \frac{ND^2}{\lambda_0^2}\cdot \frac{C_{\max}^{4}d^{4}N^2}{m\delta}(T - T_{0} - \Delta)  \\ & = \frac{d^4N^4C_{\max}^8D^2}{m\lambda_0^2\delta}(T-T_0-\Delta),
\end{align*}
which finishes the proof by scaling $\delta$. 

\end{proof}

Now we are ready to prove Theorem \ref{thm:coupling}.

\begin{proof}[Proof of Theorem \ref{thm:coupling}]
    Note that 
    \begin{align*}
    \norm{f_{\bf{W}(\tau)}(x, t) - f_{\tau}^{K}(x, t)}_{2}^{2} \leq 2\norm{f_{\bf{W}(\tau)}(x, t) - f^{{\rm lin}}_{\bar{\bf W}(\tau)}(x, t)}_{2}^{2} + 2\norm{f^{{\rm lin}}_{\bar{\bf W}(\tau)}(x, t) - f_{\tau}^{K}(x, t)}_{2}^{2}.
\end{align*}
Lemmas \ref{lemma:f-flin} and \ref{lemma:flin-fk} imply that with probability at least $ 1- 2\delta$, it holds simultaneously over all $ \tau \geq 0 $ that
\begin{align*}
    & \frac{1}{T - T_{0}}\int_{T_{0}+\Delta}^{T}\int_{\norm{x}_{2} \leq R}\norm{f_{\bf{W}(\tau)}(x, t) - f_{\tau}^{K}(x, t)}_{2}^{2}{\rm d}P_{X_{t}}(x){\rm d}t \\ & \leq \frac{2}{T - T_{0}}\int_{T_{0}+\Delta}^{T}\int_{\norm{x}_{2} \leq R}\norm{f_{\bf{W}(\tau)}(x, t) - f^{{\rm lin}}_{\bar{\bf W}(\tau)}(x, t)}_{2}^{2}{\rm d}P_{X_{t}}(x){\rm d}t \\ & \qquad + \frac{2}{T - T_{0}}\int_{T_{0}+\Delta}^{T}\int_{\norm{x}_{2} \leq R}\norm{f^{{\rm lin}}_{\bar{\bf W}(\tau)}(x, t) - f_{\tau}^{K}(x, t)}_{2}^{2}{\rm d}P_{X_{t}}(x){\rm d}t \\ & \lesssim \frac{d^7 N^7C_{\max}^{12}D^4}{\sqrt{m}\lambda_0^3\delta^2\Delta^2} + \frac{d^4N^4C_{\max}^8D^2}{m\lambda_0^2\delta} \lesssim \frac{d^7 N^7C_{\max}^{12}D^4}{\sqrt{m}\lambda_0^2\delta^2\Delta^2}.
\end{align*}
We finish the proof by scaling $\delta$.   
\end{proof}

\section{Auxiliary Results for Theorem \ref{thm:mismatch}}\label{sec:mismatch}
This appendix is devoted to the proofs of auxiliary lemmas used in the proof for Theorem \ref{thm:mismatch}.

\begin{Lemma}\label{lemma:mismatch-train}
    Assume the same conditions as in Theorem \ref{thm:approx-score-L2}. The following upper bound holds at each iteration $\tau$:
    \begin{align*}
        \norm{u^{K}(\tau) - \tilde{u}^{K}(\tau)}_{2}^{2} \leq dN A^{2}(R_{\mcal{H}}, R).
    \end{align*}
    \end{Lemma}

\begin{proof}
Note that the GD update rule leads to 
\begin{align*}
    u^{K}(\tau+1) &= u^{K}(\tau) - \eta H(u^{K}(\tau) - y) \\ & = (I_{dN} - \eta H)u^{K}(\tau) + \eta Hy \\ & = \eta \sum_{s = 0}^{\tau}(I_{dN} - \eta H)^{s}Hy \\ & = (I_{dN} - (I_{dN} - \eta H)^{\tau+1})y.
\end{align*}
Here, we have used the fact that $u^K(0) = 0$. 
Moreover, we have a similar result for $ \tilde{u}^{K}(\tau) $:
\begin{align*}
    \tilde{u}^{K}(\tau+1) = (I_{dN} - (I_{dN} - \eta H)^{\tau+1})\tilde{y}.
\end{align*}
It follows that 
\begin{align*}
u^{K}(\tau) - \tilde{u}^{K}(\tau) = (I_{dN} - (I_{dN} - \eta H)^{\tau})(y - \tilde{y}).
\end{align*}
We take 2-norm on both sides to obtain
\begin{align*}
\norm{u^{K}(\tau) - \tilde{u}^{K}(\tau)}_{2}^{2} & = \norm{(I_{dN} - (I_{dN} - \eta H)^{\tau})(y - \tilde{y})}_{2}^{2} \\ & \leq \norm{I_{dN} - (I_{dN} - \eta H)^{\tau}}_{2}^{2}\norm{y - \tilde{y}}_{2}^{2} \\ & \leq \sum_{j=1}^N\norm{y_j - \tilde{y}_j}_2^2.
\end{align*}
Recall that $y_j = f_{*}(X_{t_j}, t_j)$ and $\tilde{y}_j = f_{\mcal{H}}(X_{t_j}, t_j) $. It follows that
\begin{align*}
    \norm{u^{K}(\tau) - \tilde{u}^{K}(\tau)}_{2}^{2} & \leq \sum_{j = 1}^{N}\norm{f_{*}(X_{t_j}, t_j) - f_{\mcal{H}}(X_{t_j}, t_j)}_{2}^{2} \\ & \leq d\sum_{j = 1}^{N}\norm{f_{*}(X_{t_j}, t_j) - f_{\mcal{H}}(X_{t_j}, t_j)}_{\infty}^{2} \\ & \leq dN\sup_{\norm{x}_{\infty} \leq R}\sup_{t \in [T_{0}, T]}\norm{f_{*}(x, t) - f_{\mcal{H}}(x, t)}_{\infty}^{2} \leq dN A^2(R_{\mcal{H}}, R).
\end{align*}
Here, we obtain the second inequality by applying Lemma \ref{lemma:sampling-prob} that $ \norm{X_{t_{j}}}_{2} \leq R $ and $ t_{j} \in [T_{0} + \Delta, T] $ with probability at least $1 - \delta_1$. The last inequality follows from~\eqref{eq:approx-inf}.

\end{proof}

\section{Auxiliary Results for Theorem \ref{thm:early-stop}}\label{sec: proof bias-var}
This appendix is devoted to the proofs of auxiliary lemmas used in the proof of Theorem \ref{thm:early-stop}. Recall the localized empirical Rademacher complexity defined as in \eqref{eq:def-stop-rule}. Define the critical empirical radius $ \widehat{r}_{N} > 0 $ as the smallest positive solution to the inequality 
\begin{align}
\widehat{R}_{\kappa}(r) \leq \frac{r^{2}}{2eD}. \label{eq:def_rad_emp}
\end{align}
\begin{Lemma}[Bounds on Bias and Variance]\label{lemma:bias-variance}
    Assume the same assumptions as in Thoerem \ref{thm:early-stop}.
    At each iteration $\tau = 1, 2, \dots$, the squared bias is upper bounded with
    \begin{align}\label{eq:upper-B}
    (B_{\tau}^{i})^{2} \leq  \frac{R_\mathcal{H}}{e\tau \eta N},
    \end{align}
    where we recall $R_\mathcal{H}$ defined in Theorem \ref{thm:approx-score-L2}.
    Moreover, there is a constant $c' > 0$ such that for all $\tau = 1,\dots, \mcal{T}$, it holds with probability at least $ 1 - \exp(-c'N\widehat{r}_{N}^{2}/D^{2}) $ that
    \begin{align}\label{eq:upper-V}
    V_{\tau}^{i} \leq \frac{5}{e^{2} \eta \tau N}.
    \end{align} 
   
\end{Lemma}

\begin{proof}
We start with the upper bound for the squared bias term $(B_{\tau}^{i})^{2}$. Denote $ \widehat{\lambda}_{1} \geq \dots \geq \widehat{\lambda}_{N} \geq \lambda_{0} $ as the eigenvalues of the Gram matrix $ H^{ii} $. It follows $ \Lambda = {\rm diag}(\widehat{\lambda}_{1}, \dots, \widehat{\lambda}_{N}) $ by definition. 
Note that
\begin{align*}
(S(\tau))_{jj}^{2} = (1 - \eta \widehat{\lambda}_{j})^{2\tau} \leq \exp\parenthesis{-2\tau\eta \widehat{\lambda}_{j}} \leq \frac{1}{2e\tau\eta\widehat{\lambda}_{j}},
\end{align*}
where we use the inequality $ 1 - u \leq e^{-u} $ and the fact  that $ \sup_{u \in \R}\curly{u \exp(-u)} = 1/e $.
Then, the squared bias is upper-bounded with
\begin{align}
\frac{2}{N}\sum_{j = 1}^{N}(S(\tau))_{jj}^{2}(V^{\top} u^{i}_{\mathcal{H}})_{j}^{2} \leq \frac{1}{e\tau \eta N}\sum_{j = 1}^{N}\frac{(V^{\top}u^{i}_{\mathcal{H}})_{j}^{2}}{\widehat{\lambda}_{j}}. \label{eq:bias_upper_1}
\end{align}
where we recall the notation $u_{\mcal{H}}^{i} \in \R^N$ with $j$-th coordinate  $(u_{\mcal{H}}^{i})_j = f_{\mcal{H}}^{i}(X_{t_{j}}, t_{j}) $.
Recall that over the compact domain $\curly{(x, y): \norm{x}_2 \leq R, t \in [T_{0}, T]} $, Mercer's theorem \citep{wainwright2019high} guarantees the kernel $ \kappa $ admits an eigen-decomposition of the form:
\begin{align*}
\kappa((x, t), (x', t')) = \sum_{p = 1}^{\infty}\mu_{p}\phi_{p}(x, t)\phi_{p}(x', t'),
\end{align*}
where $ \mu_{1} \geq \mu_{2} \geq \dots \geq 0 $ are non-negative sequence of eigenvalues, and $ \curly{\phi_{p}}_{p = 1}^{\infty} $ form an orthonormal basis of $ L^{2}(\mathbb{B}_{R}^{2} \times [T_{0}, T]; {\rm d}P_{X_{t}} \otimes \frac{{\rm d}t}{T - T_{0}}) $. Here, $\mathbb{B}_{R}^{2}$ is the Euclidean ball in $\R^d$ with radius $R$ centered at zero.  Let $\ell^2(\N)$ be the set of all square-integrable real-valued sequences. Similar to \cite[Section 4.1.1]{raskutti2014early}, consider the linear operator $ \Phi: \ell^{2}(\N) \to \R^{N} $ defined element-wise via $ [\Phi]_{jp} = \phi_{p}(X_{t_{j}}, t_{j}) $. Moreover, let $ D:\ell^{2}(\N) \to \ell^{2}(\N) $ be a diagonal linear operator with diagonal entries $ [D]_{pp} = \mu_{p} $. As a consequence of Mercer's theorem, we can find a sequence $ \beta^{i} \in \ell^{2}(\N) $ such that
\begin{align}
u^{i}_{\mathcal{H}} = \Phi D^{1/2}\beta^{i}.\label{eq:u_H-indentity}
\end{align}
With this identity, we can re-write $ H^{ii} = \Phi D \Phi^{\top} $. Recall that we already have $ H^{ii} = V \Lambda V^{\top} $. Since $ D^{1/2}\Phi^{\top} $ is a compact operator, there is a linear operator $ \Psi: \R^{N} \to \ell^{2}(\N) $  with adjoint $ \Psi^{*} $ such that $ \Psi^{*}\Psi = I_{N} $ and the following holds
\begin{align}
\Phi D^{1/2} = V\Lambda^{1/2} \Psi^{*}. \label{eq:operator}
\end{align}
With identities \eqref{eq:u_H-indentity} and \eqref{eq:operator}, the RHS of \eqref{eq:bias_upper_1} becomes
\begin{align*}
    \frac{1}{e\tau \eta N}\sum_{j = 1}^{N}\frac{(V^{\top}u^{i}_{\mathcal{H}})_{j}^{2}}{\widehat{\lambda}_{j}} & = \frac{1}{e\tau \eta N}\sum_{j = 1}^{N}\frac{(V^{\top}\Phi D^{1/2}\beta^{i})_{j}^{2}}{\widehat{\lambda}_{j}}  = \frac{1}{e\tau \eta N}\sum_{j = 1}^{N}\frac{(V^{\top}V\Lambda^{1/2} \Psi^{*}\beta^{i})_{j}^{2}}{\widehat{\lambda}_{j}}.
\end{align*}
Since $V$ is orthonormal and $\Lambda_{jj} = \widehat{\lambda}_j$, we deduce
\begin{align*}
    \frac{1}{e\tau \eta N}\sum_{j = 1}^{N}\frac{(V^{\top}u^{i}_{\mathcal{H}})_{j}^{2}}{\widehat{\lambda}_{j}} & = \frac{1}{e\tau \eta N}\sum_{j = 1}^{N}\frac{(\Lambda^{1/2} \Psi^{*}\beta^{i})_{j}^{2}}{\widehat{\lambda}_{j}} \\ & = \frac{1}{e\tau \eta N}\sum_{j = 1}^{N}\frac{\widehat{\lambda}_{j}(\Psi^{*}\beta^{i})_{j}^{2}}{\widehat{\lambda}_{j}} \\ & = \frac{1}{e\tau \eta N}\norm{\Psi^{*}\beta^{i}}_{2}^{2} \\ & = \frac{1}{e\tau \eta N}\norm{\beta^{i}}_{2}^{2}.
\end{align*}
Here, the ultimate equality follows from the fact that $\Psi^*$ is an isometry.  
Applying the Mercer's theorem again, we have
\begin{align*}
\norm{\beta^{i}}_{2}^{2} = \norm{ f^{i}_{\mcal{H}}}_{\kappa}^{2} \leq R_\mathcal{H}.
\end{align*}
where we use the assumption $ \norm{f_{\mcal{H}}^{i}}^{2}_{\kappa} \leq R_{\mcal{H}} $ in Theorem \ref{thm:early-stop}. 
Hence, the squared bias term is upper bounded with
\begin{align*}
(B_{\tau}^{i})^{2} = \frac{2}{N}\sum_{j = 1}^{N}(S(\tau))_{jj}^{2}(V^{\top}u_{\mcal{H}}^{i})_{j}^{2} \leq \frac{\norm{\beta^{i}}_{2}^{2}}{e\tau \eta N} \leq  \frac{R_\mathcal{H}}{e\tau \eta N}.
\end{align*}

Next, we turn to establish an upper bound for the variance term $ V_{\tau}^{i} $. Consider the diagonal matrix $ Q \coloneqq {\rm diag}\curly{(1 - S_{jj}(\tau))^2, j = 1, \dots, N} $. We re-write the variance as
\begin{align*}
V_{\tau}^{i} = \frac{2}{N}\sum_{j = 1}^{N}(1 - S_{jj}(\tau))^{2}(V^{\top}\varepsilon^{i})_{j}^{2} = \frac{2}{N}(V^\top \varepsilon^i)^{\top}Q(V^\top \varepsilon^i) = \frac{2}{N}{\rm Tr}\parenthesis{VQV^{\top}\varepsilon^{i}\parenthesis{\varepsilon^{i}}^{\top}}. 
\end{align*}
Given the input dataset $\curly{(X_{t_j}, t_j)}_{j=1}^N$, taking expectation on both sides leads to
\begin{align*}
\E\bracket{V_{\tau}^{i}} = \frac{2}{N}\Tr(VQV^{\top}\E\bracket{\varepsilon^{i}(\varepsilon^{i})^{\top}}) = \frac{2D^{2}}{N}\Tr\parenthesis{VQV^{\top}\E\bracket{\varepsilon^{i}(\varepsilon^{i})^{\top}/D^{2}}}.
\end{align*}
Recall that $ \curly{\varepsilon^{i}_j}_{j = 1}^{N} $ are independent zero mean random variables with $ \abs{\varepsilon^{i}_j} \leq D $.  It follows $ \E\bracket{\varepsilon^{i}(\varepsilon^{i})^{\top}/D^{2}}  \preceq I_{d} $. Since $ VQV^{\top} $ is a PSD matrix, the sub-multiplicative property of trace results in
\begin{align}
\E\bracket{V_{\tau}^{i}} \leq \frac{2D^{2}}{N}{\rm Tr}\curly{VQV^{\top} I_{d}} = \frac{2D^{2}}{N}{\rm Tr}\parenthesis{QV^\top V} = \frac{2D^{2}}{N}{\rm Tr}\parenthesis{Q}. \label{eq:v-qudra}
\end{align}
Since $ 1 - S_{jj}(\tau) \leq 1 - \max\curly{0, 1 - \eta \tau \widehat{\lambda}_{j}} \leq \min\curly{1, \eta \tau \widehat{\lambda}_{j}} $, we deduce that
\begin{align}
\frac{1}{N}{\rm Tr}(Q) = \frac{1}{N}\sum_{j = 1}^{N}(1 - S_{jj}(\tau))^{2} \leq \frac{1}{N}\sum_{j = 1}^{N}\min\curly{1, \parenthesis{\eta \tau \widehat{\lambda}_{j}}^{2}} \leq \frac{1}{N}\sum_{j = 1}^{N}\min\curly{1, \eta \tau \widehat{\lambda}_{j}}. \label{eq:tr-Q}
\end{align}
Recall the localized empirical Rademacher complexity defined as in \eqref{eq:def-stop-rule}. It follows from \eqref{eq:tr-Q} and \eqref{eq:def-stop-rule} that $ \frac{1}{N}{\rm Tr}(Q) \leq \eta{\tau}N \widehat{R}^2_{\kappa}(1/\sqrt{\eta{\tau}N}) $. Consequently, we obtain from \eqref{eq:v-qudra} an upper bound for the variance: 
\begin{align*}
\E\bracket{V_{\tau}^{i}} \leq 2D^2N\eta\tau\widehat{R}^{2}_{\kappa}(1/\sqrt{\eta\tau N}). 
\end{align*}
To proceed, we need a bound on the two-sided tail probability $ \P\parenthesis{\abs{V_{\tau}^{i} - \E\bracket{V_{\tau}^{i}}} \geq \delta} $.  Consider $ X = (X_1, \dots, X_N)^\top$ with i.i.d.~components that are zero-mean and sub-Gaussian random variables with parameter $K$. Given a matrix $A \in \R^{N\times N}$, \cite[Theorem 1.1]{rudelson2013hanson} proves that there exists a numerical constant $ c $ such that
\begin{align*}
\P\parenthesis{X^\top AX - \E\bracket{X^\top AX} \geq \delta} \leq \exp\parenthesis{-c\min\curly{\frac{\delta}{K^2\norm{A}_{2}}, \frac{\delta^{2}}{K^4\norm{A}_{F}^{2}}}},
\end{align*}
We apply the result with $ A = \frac{2}{N}VQV^{\top} $ and $ X = \varepsilon^i $. Note that
\begin{align*}
\norm{A}_{2} & = \frac{2}{N}\norm{VQV^\top}_{2} = \frac{2}{N}\norm{Q}_2 \leq \frac{2}{N}, \quad {\rm and} \\ 
\norm{A}_{F}^{2} & = \frac{4}{N^{2}}{\rm Tr}\parenthesis{VQV^{\top}VQV^{\top}} = \frac{4}{N^{2}}{\rm Tr}(Q^{2}) \leq \frac{4}{N^{2}}{\rm Tr}(Q) \leq 4\eta\tau\widehat{R}^{2}_{\kappa}(1/\sqrt{\eta\tau N}).
\end{align*}
Consequently, by letting $K = D$, we deduce that
\begin{align}
\P\parenthesis{V_{\tau}^{i} - \E\bracket{V_{\tau}^{i}} \geq \delta} & \leq \exp\parenthesis{-c\min\curly{\frac{\delta N}{2D^2}, \frac{\delta^{2}}{4D^4}\parenthesis{\eta\tau\widehat{R}^2_{\kappa}(1/\sqrt{\eta\tau N})}^{-1}}} \nonumber \\ & = \exp\parenthesis{-\frac{c\delta }{4D^2}\min\curly{2N, \frac{\delta}{D^2} \parenthesis{\eta\tau\widehat{R}^2_{\kappa}(1/\sqrt{\eta\tau N})}^{-1}}}. \label{eq:V-concentration-1}
\end{align}
Recall the definition of the stopping time
\begin{align*}
\mathcal{T} = \argmin\curly{\tau \in \N: \widehat{R}_{\kappa}(1/\sqrt{\eta\tau N}) > (2e D \eta\tau N)^{-1}} - 1.
\end{align*}
It was shown that the integer $ \mcal{T} $ is finite and unique \citep{raskutti2014early}. 
For any $ \tau \leq \mcal{T} $, it follows from \eqref{eq:V-concentration-1} that
\begin{align}
\P(V_{\tau}^{i} - \E\bracket{V_{\tau}^{i}} \geq \delta) \leq \exp\parenthesis{-\frac{c\delta }{4D^{2}}\min\curly{2N, 4e^{2}\eta \tau N^2 \delta}}. \label{eq:V-concentration-3}
\end{align}
Setting $ \delta = 3/(e^{2}\eta \tau N) $ in \eqref{eq:V-concentration-3} leads to
\begin{align}
\P(V_{\tau}^{i} - \E\bracket{V_{\tau}^{i}} \geq 3/(e^{2}\eta\tau )) \leq \exp\parenthesis{-\frac{c' N}{D^{2}\eta \tau N}}, \label{eq:V-concentration-2}
\end{align}
where $ c' $ is a numerical constant. Recall that $\widehat{r}_N$ is defined as the smallest positive solution satisfying \eqref{eq:def_rad_emp}.
Equivalently, we have
\begin{align*}
\widehat{r}_{N} & = \argmin\curly{r > 0: \widehat{R}_{\kappa}(r) \leq r^{2}/(2eD)} \\ & = \argmin\curly{r > 0: \sum_{j = 1}^{N}\min\curly{\frac{r^{-2}\widehat{\lambda}_{j}}{N}, 1} \leq Nr^{2}/(4e^{2}D^{2})}. 
\end{align*}
By monotonicity, it is straightforward to see that $ \widehat{r}_{N} $ is well-defined and is unique. By the definition of $ \mcal{T} $ again, for any $ \tau \leq \mcal{T} $, it holds that
\begin{align*}
N^2\eta{\tau}\widehat{R}^{2}_{\kappa}(1/\sqrt{\eta{\tau}N}) = \sum_{j = 1}^{N}\min\curly{\eta{\tau} \widehat{\lambda}_{j}, 1} \leq \frac{1}{4e^{2}D^{2}\eta{\tau}}.
\end{align*}
Hence, we deduce that $ \widehat{r}_{N} \leq 1/\sqrt{\eta{\tau}N}$. Therefore, we conclude from \eqref{eq:V-concentration-2} that it holds with probability at least $ 1 - \exp(-c'N\widehat{r}_{N}^{2}/D^{2}) $:
\begin{align*}
V_{\tau}^{i} \leq \E\bracket{V_{\tau}^{i}} + \frac{3}{e^{2}\eta \tau N} \leq \frac{5}{e^{2} \eta \tau N}.
\end{align*}
Therefore, we complete the proof.
\end{proof}

\begin{Lemma}\label{lemma:in-F}
    Assume the same assumptions as in Theorem \ref{thm:early-stop}. 
    With probability at least $1 - \exp(-cN\widehat{r}_N^2)$, the following inequality holds:
\begin{align}
\norm{(\tilde{f}_{\tau}^{K, i}(z) - f_{\mcal{H}}^{i}(z)) \mathbb{I}\curly{\norm{z}_2 \leq C_{\max}}}_{\infty} & \leq BC_{\max}^{2}, \label{eq:F-unif}
\end{align}
where we recall $B = \sqrt{3 + 2R_\mathcal{H}} + \sqrt{R_\mathcal{H}}$.
\end{Lemma}
\begin{proof}
We start with an upper bound for $ \norm{\tilde{f}_{\tau}^{K, i}}_{\kappa} $ at each $ \tau $. From the definition we have
\begin{align}
\norm{\tilde{f}_{\tau}^{K, i}}_{\kappa}^{2} & = \norm{\sum_{j = 1}^{N}K((X_{t_{j}}, t_{j}), \cdot)\tilde{\gamma}^{i}(\tau)}_{\kappa}^{2}  = (\tilde{\gamma}^{i}({\tau}))^{\top}H^{ii}(\tilde{\gamma}^{i}({\tau})) = (\tilde{u}^{K, i}(\tau))^{\top}(H^{ii})^{-1}\tilde{u}^{K, i}(\tau). \label{eq:tilde-f-norm-2}
\end{align}
The update rule of $ \tilde{u} $ implies 
\begin{align*}
\tilde{u}^{K, i}(\tau) = (I_{N} - (I_{N} - \eta H^{ii})^{\tau})\tilde{y}^{i}.
\end{align*}
Set $ M_{\tau} \coloneqq (I_{N} - (I_{N} - \eta H^{ii})^{\tau})^{\top}(H^{ii})^{-1}(I_{N} - (I_{N} - \eta H^{ii})^{\tau}) $. The eigen-decomposition $H^{ii} = V\Lambda V^\top$ leads to
\begin{align*}
    M_\tau & =  (I_{N} - (I_{N} - \eta H^{ii})^{\tau})^{\top}(H^{ii})^{-1}(I_{N} - (I_{N} - \eta H^{ii})^{\tau}) \\ & = (VV^\top - (VV^\top - \eta V\Lambda V^\top)^\tau)V\Lambda^{-1} V^\top (VV^\top - (VV^\top - \eta V\Lambda V^\top)^\tau) \\ & = (VV^\top - V(I_N - \eta \Lambda)^\tau V^\top)V\Lambda^{-1} V^\top(VV^\top - V(I_N - \eta \Lambda)^\tau V^\top).
\end{align*}
Recall the definition of $S(\tau)$, we further deduce
\begin{align}
     M_\tau  & = V(I_N - S(\tau))V^\top V \Lambda^{-1} V^\top V(I_N - S(\tau))V^\top \nonumber\\ & = V(I_N - S(\tau))\Lambda^{-1} (I_N - S(\tau))V^\top \nonumber \\ & = V(I_N - S(\tau))^2\Lambda^{-1}V^\top. \label{eq:M_tau}
\end{align}
With \eqref{eq:M_tau} and the fact $\tilde{y}^i = u_\mathcal{H}^i + \varepsilon^i$, we re-write \eqref{eq:tilde-f-norm-2} as 
\begin{align*}
    \norm{\tilde{f}_{\tau}^{K, i}}_{\kappa}^{2} & = \norm{\tilde{y}^{i}}_{M_{\tau}}^{2}  = \norm{u_{\mcal{H}}^{i}}^{2}_{M_{\tau}} + 2\inpro{u_{\mcal{H}}^{i}, \varepsilon^{i}}_{M_{\tau}} + \norm{\varepsilon^{i}}_{M_{\tau}}^{2}
\end{align*}
We apply \cite[Proposition 11]{kuzborskij2022learning} with $ g = \frac{f_{\mcal{H}}^{i}}{ \norm{f_{\mcal{H}}^{i}}_{\kappa}} $ to have $ \norm{u_{\mcal{H}}^{i}}^{2}_{M_{\tau}} \leq \norm{f_{\mcal{H}}^{i}}_{\kappa}^{2} $, $ \norm{\varepsilon^i}_{M_{\tau}}^{2} \leq 2 $ and $ \abs{\inpro{\varepsilon^{i}, u_{\mcal{H}}^{i}}_{M_{\tau}}} \leq \norm{f_{\mcal{H}}^{i}}_{\kappa} $ with probability at least $ 1 - e^{-cN\widehat{r}_{N}^2} $ for some constant $c>0$. 

Thus, with probability at least $ 1 - e^{-cN\widehat{r}_{N}^2} $, it holds for all $ \tau \leq \mcal{T} $ that
\begin{align*}
    \norm{\tilde{f}_{\tau}^{K, i}}_{\kappa}^{2} \leq \norm{f_{\mcal{H}}^{i}}_{\kappa}^{2}  + 2\norm{f_{\mcal{H}}^{i}}_{\kappa} + 2 \leq 2 \norm{f_{\mcal{H}}^{i}}_{\kappa}^{2} + 3.
\end{align*}
Consequently, we deduce that with probability at least $1 - \exp(-cN\widehat{r}_N^2)$ it holds that
\begin{align*}
\norm{(\tilde{f}_{\tau}^{K, i}(z) - f_{\mcal{H}}^{i}(z)) \mathbb{I}\curly{\norm{z}_2 \leq C_{\max}}}_{\infty} & = \sup_{\norm{z}_{2} \leq C_{\max}}\abs{(\tilde{f}_{\tau}^{K, i} - f_{\mcal{H}}^{i})(z)} \nonumber \\ & = \sup_{\norm{z}_{2} \leq C_{\max}}\abs{\inpro{\tilde{f}_{\tau}^{K, i} - f_{\mcal{H}}^{i}, \kappa(z, \cdot)}_{\kappa}} \nonumber \\ & \leq \norm{\tilde{f}_{\tau}^{K, i} - f_{\mcal{H}}^{i}}_{\kappa}\sup_{\norm{z}_{2} \leq C_{\max}}\kappa(z, z) \nonumber \\ & \leq BC_{\max}^{2}.
\end{align*}
Therefore, we complete the proof. 
\end{proof}   

\section{Verification of Assumptions}
In this section, we verify Assumptions \ref{ass:lip-f*} and \ref{ass:gram-eigen}. The following lemma provides an upper bound for $ \beta_1 $ (defined in Assumption \ref{ass:lip-f*}).

\begin{Lemma}\label{lemma:betax-bound}
    Suppose that Assumption \ref{ass:bounded target} holds. The Lipschitz constant $ \beta_1 $ in Assumption \ref{ass:lip-f*} can be bounded with
    \begin{align*}
    \beta_1 \leq \frac{D^2}{h(T_{0})}.
    \end{align*}
\end{Lemma}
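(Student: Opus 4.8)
This lemma asks for an upper bound on the Lipschitz constant $\beta_x$ of the regression function $f_*(x,t) = \E[X_0 \mid X_t = x]$ with respect to $x$. The natural approach is to differentiate $f_*$ in $x$ and recognize the resulting expression as a (scaled) conditional covariance, then bound it using Assumption~\ref{ass:bounded target}.

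\textbf{Plan.} First I would compute $\nabla_x f_*(x,t)$ directly. Writing $f_*(x,t) = \int x_0\, p_{0\mid t}(x_0\mid x)\,\mathrm{d}x_0$ with $p_{0\mid t}(x_0\mid x) \propto p_{t\mid 0}(x\mid x_0)p_0(x_0)$ and recalling that $p_{t\mid 0}(x\mid x_0)$ is the Gaussian density $\mathcal{N}(\alpha(t)x_0, h(t)I_d)$, one has $\nabla_x \log p_{t\mid 0}(x\mid x_0) = -\frac{1}{h(t)}(x - \alpha(t)x_0)$. Differentiating under the integral sign (justified by dominated convergence, exactly as in the proof of Lemma~\ref{lemma:score-decomp} and Lemma~\ref{lemma:lip-t}), the score/Tweedie-type identity gives
\begin{align*}
\nabla_x f_*(x,t) = \frac{\alpha(t)}{h(t)}\,\mathrm{Cov}(X_0 \mid X_t = x),
\end{align*}
where $\mathrm{Cov}(X_0\mid X_t=x) = \E[X_0 X_0^\top \mid X_t = x] - \E[X_0\mid X_t=x]\E[X_0\mid X_t=x]^\top$. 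This is the same covariance term that already appeared in the $\partial_t$ computation in the proof of Lemma~\ref{lemma:lip-t}, so the bookkeeping is identical.

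\textbf{Second step: bound the operator norm.} Since $\|X_0\|_2 \le D$ almost surely by Assumption~\ref{ass:bounded target}, the conditional second moment satisfies $\|\E[X_0X_0^\top\mid X_t=x]\|_2 \le \E[\|X_0\|_2^2\mid X_t=x] \le D^2$, and likewise $\|\E[X_0\mid X_t=x]\|_2^2 \le D^2$, so $\|\mathrm{Cov}(X_0\mid X_t=x)\|_2 \le D^2$ (in fact $\le D^2$; one can sharpen but it is not needed for an $\mathcal{O}$-bound). Using $\alpha(t) \le 1$ and $h(t) \ge h(T_0)$ for $t \ge T_0$ (both from the monotonicity of $h$ and $\alpha$ established in the preliminaries), we get
\begin{align*}
\sup_{t \in [T_0,\infty)}\sup_{x}\,\|\nabla_x f_*(x,t)\|_2 \le \frac{\alpha(t)}{h(t)}D^2 \le \frac{D^2}{h(T_0)} = \mathcal{O}\!\left(\frac{D}{h(T_0)}\right),
\end{align*}
absorbing the constant $D$ into the $\mathcal{O}$ as the statement does. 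A uniform bound on the Jacobian operator norm gives the Lipschitz constant by the mean value inequality along segments, hence $\beta_x = \mathcal{O}(D/h(T_0))$.

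\textbf{Expected main obstacle.} There is no deep obstacle; the only care needed is (i) justifying the differentiation under the integral sign — but this is already done for the analogous $t$-derivative in Lemma~\ref{lemma:lip-t}, and the $x$-derivative is strictly simpler since $p_{t\mid 0}$ depends smoothly and with Gaussian tails on $x$ uniformly over the compact support of $p_0$ — and (ii) correctly identifying the derivative as a conditional covariance rather than just a conditional second moment, which matters only if one wants the tighter constant. I would present the covariance identification cleanly and then invoke Assumption~\ref{ass:bounded target} plus $\alpha \le 1$, $h \ge h(T_0)$ to finish.
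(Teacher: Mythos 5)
Your proposal is correct and follows essentially the same route as the paper: both identify $\nabla_x f_*(x,t) = \frac{\alpha(t)}{h(t)}\mathrm{Cov}(X_0 \mid X_t = x)$ via a Tweedie-type identity (the paper derives it by writing $p_{t\mid 0}$ in exponential-family form and differentiating the normalization of $p_{0\mid t}$, while you differentiate under the integral directly — a cosmetic difference) and then bound the covariance by $D^2$ using Assumption \ref{ass:bounded target} together with $\alpha(t)\le 1$ and $h(t)\ge h(T_0)$. Your observation that the resulting constant is really $D^2/h(T_0)$, absorbed into the $\mathcal{O}(D/h(T_0))$ since $D$ is a fixed constant, matches what the paper implicitly does as well.
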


\begin{proof}
    The proof essentially follows from the Tweedie's formula. We first observe that
    \begin{align*}
    p_{t\vert 0}(x \vert x_{0}) & \propto \exp\parenthesis{-\frac{1}{2h(t)}\norm{x - \alpha(t) x_{0}}_{2}^{2}} \\ & = \exp\parenthesis{-\frac{\norm{x}_{2}^{2}}{2h(t)}}\exp\parenthesis{\frac{\alpha(t)x^{\top}x_{0}}{h(t)}}\exp\parenthesis{-\frac{\alpha^{2}(t)\norm{x_{0}}_{2}^{2}}{2h(t)}}.
    \end{align*}
    Let $ \phi(x) = \exp\parenthesis{-\frac{\norm{x}_{2}^{2}}{2h(t)}} $ and $ T(x_{0}) = \alpha(t)x_{0}/h(t) $. We can write
    \begin{align*}
    p_{t\vert 0}(x \vert x_{0}) = \phi(x)\exp\parenthesis{x^{\top}T(x_{0})}\exp\parenthesis{\psi(x_{0})}.
    \end{align*}
    Here, $ \psi(\cdot) $ is a normalization function such that the integration of $ p_{t\vert 0}(\cdot \vert x_{0}) $ equals one. The Bayes' rule implies
    \begin{align*}
    p_{0 \vert t}(x_{0} \vert x) = \frac{p_{t\vert 0}(x \vert x_{0})p_{0}(x_{0})}{p_{t}(x)} = \exp\parenthesis{-\nu(x) + x^{\top}T(x_{0})}\bracket{p_{0}(x_{0})e^{\psi(x_{0})}},
    \end{align*}
    in which we define $ \nu(x) = \log(p_{t}(x)/\phi(x)) $. Since $ p_{0\vert t} $ is a probability density, we must have
    \begin{align*}
    0 & = \nabla_{x}\int p_{0 \vert t}(x_{0} \vert x){\rm d}x_{0} \\ & = \nabla_{x} \curly{e^{-\nu(x)}\int e^{x^{\top}T(x_{0})}p_{0}(x_{0}e^{\psi(x_{0})}){\rm d}x_{0}} \\ & = -\nabla \nu(x)e^{-\nu(x)}\int e^{x^{\top}T(x_{0})}p_{0}(x_{0}e^{\psi(x_{0})}){\rm d}x_{0} \\ & \qquad + e^{-\nu(x)}\int T(x_{0}) e^{x^{\top}T(x_{0})}p_{0}(x_{0}e^{\psi(x_{0})}){\rm d}x_{0} \\ & = -\nabla \nu(x) \int p_{0 \vert t}(x_{0} \vert x){\rm d}x_{0}  + \int T(x_{0})p_{0\vert t}(x_{0} \vert x){\rm d}x_{0} \\ & = -\nabla \nu(x) + \E\bracket{T(X_{0}) \vert X_{t} = x}.
    \end{align*}
    It follows that $ \nabla \nu(x) = \E\bracket{T(X_{0}) \vert X_{t} = x} $. Similarly, taking the second-order derivative yields
    \begin{align*}
        0 & = \nabla_{x}^{2}\int p_{0 \vert t}(x_{0} \vert x){\rm d}x_{0} \\ & = \nabla_{x} \bigg\lbrace-\nabla \nu(x)e^{-\nu(x)}\int e^{x^{\top}T(x_{0})}p_{0}(x_{0}e^{\psi(x_{0})}){\rm d}x_{0} \\ & \qquad + e^{-\nu(x)}\int T(x_{0}) e^{x^{\top}T(x_{0})}p_{0}(x_{0}e^{\psi(x_{0})}){\rm d}x_{0}\bigg\rbrace \\ & = - \parenthesis{\nabla^{2}\nu(x)e^{-\nu(x)} + \nabla \nu(x)(\nabla \nu(x))^{\top}e^{-\nu(x)}}\int e^{x^{\top}T(x_{0})}p_{0}(x_{0}e^{\psi(x_{0})}){\rm d}x_{0} \\ & \qquad - \nabla \nu(x)\parenthesis{e^{-\nu(x)}\int T(x_{0})e^{x^{\top}T(x_{0})}p_{0}(x_{0}e^{\psi(x_{0})}){\rm d}x_{0}}^{\top} \\ & \qquad - \nabla \nu(x)\parenthesis{e^{-\nu(x)}\int T(x_{0})e^{x^{\top}T(x_{0})}p_{0}(x_{0}e^{\psi(x_{0})}){\rm d}x_{0}}^{\top} \\ & \qquad + e^{-\nu(x)}\int T(x_{0})T(x_{0})^{\top}e^{x^{\top}T(x_{0})}p_{0}(x_{0}e^{\psi(x_{0})}){\rm d}x_{0} \\ & =-\nabla^{2}\nu(x) - \nabla \nu(x)(\nabla \nu(x))^{\top} - 2\nabla\nu(x)\parenthesis{\E\bracket{T(X_{0}) \vert X_{t} = x}}^{\top} \\ & \qquad + \E\bracket{T(X_{0})T(X_{0})^{\top} \vert X_{t} = x} \\ & = -\nabla^{2}\nu(x) +  \E\bracket{T(X_{0})T(X_{0})^{\top} \vert X_{t} = x} - \E\bracket{T(X_{0}) \vert X_{t} = x}\parenthesis{\E\bracket{T(X_{0}) \vert X_{t} = x}}^{\top}.
    \end{align*}
    We deduce that $ \nabla^{2}\nu(x) = {\rm Cov}(T(X_{0}) \vert X_{t} = x) $. Combined with  the definition of $ T(X_{0}) $, we have
    \begin{align*}
    \nabla_{x}\E\bracket{X_{0} \vert X_{t} = x} = \frac{\alpha(t)}{h(t)}{\rm Cov}(X_{0} \vert X_{t} = x).
    \end{align*}
    Since $ \alpha(t) \leq 1 $ and $ h(t) \geq h(T_{0}) $, Assumption \ref{ass:bounded target} implies that
    \begin{align*}
    \beta_1 \leq \norm{ \nabla_{x}\E\bracket{X_{0} \vert X_{t} = x} }_{2} \leq \frac{\alpha(t)}{h(t)}\norm{{\rm Cov}(X_{0} \vert X_{t} = x)}  \leq\frac{D^2}{h(T_0)},
    \end{align*}
    where we apply the Popoviciu’s inequality for the last line. Hence, we finish the proof.   
\end{proof}

Next, we provide a justification of Assumption \ref{ass:gram-eigen}.
Recall that  $ H $ denotes the Gram matrix of $ K $ and $ H^{ii} = [H^{ii}]_{jk} $ the Gram matrix of $ \kappa $ (independent of $ i $). For the scalar-valued NTK $ \kappa $, we refer the readers to \cite{nguyen2021tight} for a comprehensive analysis on the properties of $ H^{ii} $. Our next lemma shows that  $ H $ and $ H^{ii} $ share the same smallest eigenvalue for all $ i \in [d] $. 
\begin{Lemma}\label{lemma:H-Hii-eigen}
    Let $ H $ and $ H^{ii} $ be the  Gram matrices of matrix-valued NTK $ K $ and real-valued NTK $ \kappa $ respectively. Then, $ \lambda_{\min}(H) = \lambda_{\min}(H^{ii}) $. 
\end{Lemma}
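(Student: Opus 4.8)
The plan is to exploit the fact that the second-layer weights $a_1^i$ are i.i.d.\ Rademacher, which forces the matrix-valued Gram matrix $H$ to decouple across output coordinates. First I would record the elementary identity $\E[a_1^i a_1^k] = \mathbb{I}\{i = k\}$ and substitute it into the definition \eqref{eq:ntk}: since $z_j^\top z_\ell \,\E[a_1^i a_1^k \mathbb{I}\{z_j^\top w_1(0) \ge 0,\, z_\ell^\top w_1(0) \ge 0\}] = \mathbb{I}\{i=k\}\, z_j^\top z_\ell\, \E[\mathbb{I}\{z_j^\top w_1(0) \ge 0,\, z_\ell^\top w_1(0) \ge 0\}]$, we get $H_{j\ell}^{ik} = \mathbb{I}\{i=k\}\,[H^{ii}]_{j\ell}$, and the scalar expectation on the right-hand side is the same for every coordinate $i$ (so $H^{11} = \cdots = H^{dd}$). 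Hence every $d\times d$ block $H_{j\ell}$ equals $[H^{ii}]_{j\ell}\, I_d$, i.e.\ after reindexing the rows/columns $H$ is the Kronecker product $H^{ii}\otimes I_d$.

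Next I would compute the quadratic form directly to avoid invoking Kronecker spectral facts. For any $v = (v_j^i)_{j\in[N],\, i\in[d]}\in\R^{dN}$, write $v^{(i)} := (v_1^i,\dots,v_N^i)^\top\in\R^N$; then $v^\top H v = \sum_{i=1}^d (v^{(i)})^\top H^{ii} v^{(i)}$ and $\norm{v}_2^2 = \sum_{i=1}^d \norm{v^{(i)}}_2^2$. By the variational characterization of the smallest eigenvalue, $v^\top H v \ge \lambda_{\min}(H^{ii})\sum_{i=1}^d \norm{v^{(i)}}_2^2 = \lambda_{\min}(H^{ii})\,\norm{v}_2^2$, so $\lambda_{\min}(H) \ge \lambda_{\min}(H^{ii})$. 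For the matching upper bound, take a unit eigenvector $u$ of $H^{ii}$ associated with $\lambda_{\min}(H^{ii})$, set $v^{(1)} = u$ and $v^{(i)} = 0$ for $i \ge 2$; then $v^\top H v = \lambda_{\min}(H^{ii})$ while $\norm{v}_2 = 1$, so $\lambda_{\min}(H) \le \lambda_{\min}(H^{ii})$. Combining the two inequalities gives $\lambda_{\min}(H) = \lambda_{\min}(H^{ii})$.

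There is no genuine obstacle here: the statement is an immediate consequence of the block structure $H = H^{ii}\otimes I_d$ that the independence of the $a_1^i$'s imposes. The only point that deserves an explicit line is checking that the scalar Gram matrix sitting in each diagonal block does not depend on the coordinate index, which is immediate from $\E[(a_1^i)^2] = 1$ for all $i$. (Equivalently, one may close the argument in a single sentence by recalling that the spectrum of $A\otimes I_d$ equals the spectrum of $A$ with each eigenvalue repeated $d$ times, so in particular the minima coincide.)
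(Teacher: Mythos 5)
Your proposal is correct and follows essentially the same route as the paper: decompose the quadratic form as $v^{\top}Hv=\sum_{i=1}^{d}(v^{(i)})^{\top}H^{ii}v^{(i)}$ (the paper relies on the diagonal structure of $K$ noted in Section 2, which you justify explicitly via $\E[a_1^i a_1^k]=\mathbb{I}\{i=k\}$), and then obtain the two inequalities by restricting $v$ to a single coordinate block and by bounding the full quadratic form below. Your upper-bound step via the minimizing eigenvector of $H^{ii}$ is just a marginally more explicit phrasing of the paper's corresponding direction, so there is no substantive difference.
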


\begin{proof}
    Denote $ v = (v_{1}^{\top}, \dots, v_{N}^{\top})^{\top}\in \R^{dN} $ with $ v_{j} = (v_{j}^{1}, \dots, v_{j}^{d})^{\top} \in \R^{d} $ and let $\lambda_0 \in \R$. Then, it holds that
    \begin{align*}
    v^{\top}Hv = \sum_{j = 1}^{N}\sum_{\ell = 1}^{N}v_{j}^{\top}H_{j\ell}v_{\ell} = \sum_{j = 1}^{N}\sum_{\ell}^{N}\sum_{i = 1}^{d}\sum_{k = 1}^{d}v_{j}^{i}H_{j\ell}^{ik}v_{\ell}^{k} = \sum_{i = 1}^{d}\sum_{k = 1}^{d}(v^{i})^{\top}H^{ik}v^{k} = \sum_{i = 1}^{d}(v^{i})^{\top}H^{ii}v^{i}.
    \end{align*}
    We first assume $ \lambda_{\min}(H) \geq \lambda_{0} $. Let $ i \in [d] $ be fixed and consider $ v $ with $ v^{k} = 0 $ for $ k \neq i $. Then we have
    \begin{align*}
    v^{\top}Hv = (v^{i})^{\top}H^{ii}v^{i} \geq \lambda_{0} (v^{i})^{\top}v^{i},
    \end{align*} 
  which follows $ \lambda_{\min}(H^{ii}) \geq \lambda_{0} $ since $ v^{i} $ is arbitrary. Conversely, suppose that $ \lambda_{\min}(H^{ii}) \geq \lambda_{0} $. For any $ v $, we must have
    \begin{align*}
    v^{\top}Hv \geq \lambda_{0}\sum_{i = 1}^{d}(v^{i})^{\top}v^{i} = \lambda_{0}v^{\top}v
    \end{align*}
    Since $ v $ is arbitrary, we conclude that $ \lambda_{\min}(H) \geq \lambda_{0}  $. Since $\lambda_0$ is arbitrary, we complete the~proof.   
\end{proof}

\end{document}